\documentclass[11pt,twoside]{article}

\usepackage{geometry}

\geometry{letterpaper, margin=1in}

\input{command-template}

\begin{document}
	
	\begin{center}
		
		{\bf{\LARGE{Accelerated and instance-optimal policy evaluation \\ with linear function approximation}}}
		
		\vspace*{.2in}
		
		{\large{
				\begin{tabular}{ccc}
					Tianjiao Li$^\star$ & Guanghui Lan$^\star$ & Ashwin Pananjady$^{\star, \dagger}$
				\end{tabular}
		}}
		\vspace*{.2in}
		
		\begin{tabular}{c}
			Schools of Industrial \& Systems Engineering$^\star$ and Electrical \& Computer Engineering$^\dagger$ \\
			Georgia Institute of Technology
		\end{tabular}
		
		\vspace*{.2in}

		\begin{tabular}{c}
			December 2021;\quad Revised: August 2022
		\end{tabular}
		
		\vspace*{.2in}
		
		\begin{abstract}
			We study the problem of policy evaluation with linear function approximation and present efficient and practical algorithms that come with strong optimality guarantees. We begin by proving lower bounds that establish baselines on both the deterministic error and stochastic error in this problem. In particular, we prove an oracle complexity lower bound on the deterministic error in an instance-dependent norm associated with the stationary distribution of the transition kernel, and use the local asymptotic minimax machinery to prove an instance-dependent lower bound on the stochastic error in the i.i.d. observation model. 
			Existing algorithms fail to match at least one of these lower bounds: To illustrate, we analyze a variance-reduced variant of temporal difference learning, showing in particular that it fails to achieve the oracle complexity lower bound. To remedy this issue, we develop an accelerated, variance-reduced fast temporal difference algorithm (VRFTD) that simultaneously matches both lower bounds and attains a strong notion of instance-optimality. Finally, we extend the VRFTD algorithm to the setting with Markovian observations, and provide instance-dependent convergence results.
			Our theoretical guarantees of optimality are corroborated by numerical experiments.
		\end{abstract}
	\end{center}
	
	
	
	
	
	
	
	\section{Introduction} \label{sec:intro}
	Reinforcement learning (RL) problems are generally formulated in terms of Markov decision processes (MDPs). At each time step, the agent observes the current state and subsequently takes an action, which leads to the realization of some reward as well as a transition to the next state according to the underlying, but unknown, stochastic transition function. The eventual goal of the agent is to learn a policy, i.e., a mapping from states to actions, to optimize the reward accrued over time. The setting is a very general one, with applications ranging from engineering to the natural and social sciences; see, e.g., \cite{kaelbling1996reinforcement, kober2013reinforcement} for surveys of RL applications. 
	
	A fundamental building block in RL is the problem of \emph{policy evaluation}, in which we are interested in estimating the long-term (discounted) \emph{value} of each state under a fixed policy with sample access to the transition and reward functions. 
	The literature considers three observation models for transition and reward samples, namely the generative model, the so-called ``i.i.d." model, and the Markovian noise model\footnote{Two of these observation models are formally discussed in Section~\ref{sec_setting}.}.
	Furthermore, in modern applications with large state spaces, it is common to seek an approximation to the true value function within the span of a small number of basis functions, a setting that is commonly known as \emph{linear function approximation}. In the canonical setting of the problem, one is interested in using random observations to compute an approximate value function within the subspace, with the distance between the true value function and its approximation being measured according to an instance-dependent ``weighted $\ell_2$-norm" that depends on the stationary distribution of the transition kernel. 
	%
	
	It is common to use stochastic approximation (SA) algorithms to solve the policy evaluation problem in the setting described above. Given the iterative nature of these algorithms, their convergence rates can be decomposed into two types of error: a \emph{deterministic error} that measures how fast the algorithm converges to its fixed point even in the absence of noise, and a \emph{stochastic error} that measures the contribution of the noise. \black{Either of these errors could dominate in practice. While the stochastic error is typically larger in noisy problems, the deterministic error of the algorithm can dominate, for example, in settings with multiple processors. In particular, the collection and use of multiple samples/trajectories in parallel can reduce the stochastic error considerably. }
	
	Loosely speaking, the deterministic error is measured in terms of the \emph{oracle complexity} of the algorithm and the stochastic error is measured in terms of the \emph{sample complexity}.
	The eventual goal of algorithm design is to develop practical algorithms that have optimal oracle and sample complexities. Ideally, these optimality guarantees should be \emph{instance-specific}, in that they depend explicitly on the problem at hand and 
	enable us to draw distinctions between the performance profiles of different algorithms. 

	With this context in hand, let us briefly discuss the state of the art relevant to characterizing the complexity of policy evaluation and related problems.
	Classical work by \citet{nemirovsky1991optimality,nemirovsky1992information} established oracle complexity lower bounds for solving linear operator equations in $\ell_2$-norm. However, these results do not extend to the more specific policy evaluation setting under the weighted $\ell_2$-norm. 
	On the other hand, 
	\citet{khamaru2020temporal} recently provided an instance-specific analysis of the sample complexity of policy evaluation under the $\ell_\infty$-norm, focusing on the generative observation model without function approximation. 
	\citet{mou2020optimal} studied the broader problem of solving projected fixed point equations, with a focus on characterizing the error incurred due to projection onto a subspace. 
	By virtue of studying the more general problem, the lower bounds on the statistical error proved in this paper are not specific enough to capture the policy evaluation setting with function approximation. \black{Concurrent work by \citet{mou2021optimal} studied SA methods for solving linear fixed point equations with Markovian samples and established a non-asymptotic, instance-dependent lower bound.} Given this state of affairs, the central question that motivates this paper is the following:
	\vspace{0.2cm}
	\begin{center}
		{\it What are the optimal oracle and sample complexities of policy evaluation in weighted $\ell_2$-norm with linear function approximation, and do existing algorithms achieve these bounds?}
	\end{center} 
	
	
	\subsection{Related work}
	There is a large literature on stochastic approximation for policy evaluation. The most popular stochastic iterative algorithm used for policy evaluation is temporal difference (TD) learning; see \citet{dann2014policy} for a survey. The TD learning algorithm was first introduced by \citet{sutton1988learning},
	and convergence guarantees for TD have been proven in both asymptotic and non-asymptotic settings.
	%
	Asymptotic convergence of TD with linear function approximation
	was established in \citet{tsitsiklis_vanroy_97}, 
	and other classical asymptotic guarantees include those due to Borkar and co-authors \citep{borkar2000ode, borkar2009stochastic}. The vanilla TD algorithm can also be combined with the iterate averaging technique, and the asymptotic convergence of this algorithm was shown by \citet{tadic2004almost}, who extended the convergence results for solving noisy linear systems \citep{polyak1992acceleration}.
	
	While asymptotic convergence results offer a proof-of-concept, the algorithm is often run in large-scale applications with relatively small sample sizes. The first results proving finite time convergence  under i.i.d. setting were proposed by \citet{sutton2009fast} and later extended by  \citet{lakshminarayanan2018linear}. 
	Finite-time analysis of TD learning under Markovian noise was carried out by \citet{russo_18}, where the authors employed nonsmooth analysis to a  variant of TD learning, by requiring projections at each iteration onto a pre-specified ball. A consequence of the nonsmooth approach is that there is no obvious way of benefiting from the variance reduction effect of parallel computing. In recent work, a subset of the current authors \citep{kotsalis2020simple} provided an improved analysis of vanilla TD algorithm that overcomes this hurdle. 
	\black{There are also several other notable finite sample analyses of policy evaluation in various settings,  e.g.,~\cite{srikant2019finite,chen2021lyapunov,durmus2021stability, li2021q, min2021variance}, and statistical lower bounds have also been shown for offline reinforcement learning with linear function approximation, e.g., \cite{wang2020statistical,zanette2021exponential}.}
	
	\black{While some of these analyses are sharp, to our knowledge, vanilla TD learning is not known to attain the optimal oracle complexity and instance-dependent sample complexity. \citet{li2021q} recently proved that TD learning achieves the minimax lower bound on stochastic error up to logarithmic factors, but their analysis is not instance dependent.}
	Recent work by \citet{kotsalis2020simple} presented two new algorithms, the conditional temporal difference (CTD) and the fast temporal difference (FTD) learning, where FTD exhibits an accelerated rate in deterministic error. However, these algorithms fail to capture the correct stochastic error in the policy evaluation problem, and the bounds can be shown to be suboptimal for policy evaluation both in an instance-dependent sense, and in the worst-case over natural problem classes.
	
	
	During the past decade, there has been a flurry of parallel work in stochastic optimization on developing first-order methods with variance-reduction; early examples include IAG \citep{blatt2007convergent}, SAG \citep{schmidt2017minimizing}, SVRG \citep{johnson2013accelerating,xiao2014proximal}, and SAGA \citep{defazio2014saga}. 
	\black{There are several papers that apply variance reduction to reinforcement learning, e.g., \cite{du2017stochastic, papini2018stochastic, wai2018multi}.}
	Recent work in policy evaluation has shown that variance reduction techniques can also be applied to algorithms of the TD-type~\citep{korda2015td, wai2019variance, xu2020reanalysis, khamaru2020temporal}. Among these, the paper~\cite{khamaru2020temporal} is motivated by the desire to draw distinctions between RL algorithms with similar worst-case performance and follows a line of work deriving instance-dependent bounds on the stochastic error in policy evaluation~\citep{pananjady2020instance,li2020breaking}.
	Specifically, the results of \cite{khamaru2020temporal} capture the optimal instance-dependent stochastic error in the $\ell_\infty$-norm. However, the optimal sample complexity is not achieved in \cite{khamaru2020temporal} since the algorithm requires $\O\{1/(1-\gamma)^3\}$ samples in each epoch. 

	\subsection{Contributions and organization}
	Towards answering the question posed at the end of Section~\ref{sec:intro}, we make three distinct contributions:
	
	\begin{itemize}
		\item\textbf{Lower bounds.} 
		We construct a worst-case instance that 
		shows an oracle complexity lower bound of order $\Omega\{(1-\gamma)^{-1} \cdot \log(1/\epsilon)\}$ for any \black{iterative method whose iterates lie within the linear span of the initial point $v_0$ and subsequent temporal differences,} to converge to $\epsilon$-error in weighted $\ell_2$-norm. We also prove a lower bound on sample complexity using the classical local minimax theorem  
		\citep{hajek1972local,le1972limits,le2000asymptotics} to provide an instance-specific baseline for algorithm design. 
		
		
		\item\textbf{Algorithm design in the i.i.d. setting.} We start by applying the variance reduction technique to the classical TD algorithm, showing that the resulting variance-reduced temporal difference (VRTD) algorithm \black{nearly matches the optimal stochastic error, but the analysis suggests a suboptimal deterministic error}. This motivates us to further improve the VRTD algorithm with the stochastic operator extrapolation (SOE) device~\citep{GGT_20a}. We provide a sharp analysis of our new algorithm---termed variance-reduced fast temporal difference (VRFTD)---showing that it achieves a convergence rate \black{nearly matching both the deterministic error lower bound (for well-conditioned feature matrices) and the stochastic error lower bound.}
		
		\item\textbf{Extension to the Markovian setting.} 
		\black{
			We extend the VRFTD algorithm to the Markovian setting by introducing a burn-in period during sample collection. We show that the resulting algorithm also achieves similarly fast convergence, with a dominating stochastic error term that matches the instance-dependent lower bound proved in \cite{mou2021optimal} and a deterministic error that matches that of the i.i.d. setting up to a multiplicative factor of the mixing time. In particular, in the so-called realizable case when the approximation error caused by linear function approximation is $0$ (e.g., in the tabular setting), the leading Markovian stochastic error term is equal to the i.i.d. stochastic error term, indicating that the additional dependence on mixing time only appears in terms whose dependence on the final tolerance $\epsilon$ is weak.}
	\end{itemize}
	\smallskip
	
	\noindent The rest of this paper is organized as follows. In Section~\ref{sec_setting}, we formally present the problem setting. The three aforementioned main contributions are presented in Sections~\ref{sec_lower}---\ref{sec_markov}. In Section~\ref{sec_numerical}, we provide numerical experiments that corroborate our optimality guarantees. The proofs of our main results are postponed to Section~\ref{sec_proof}, and auxiliary results are collected in the appendices.   
	
	
	\subsection{Notation} 
	
	For a positive integer n, we define $[n] := \{1, 2, . . . , n\}$. We let $\mathbf{1}$ denote the all-ones vector in $\bbr^D$. We let $e_j$ denote the $j$-th standard basis vector in $\bbr^D$. Let $\mathbb{I}_S:X\rightarrow \{0,1\}$ denote the indicator function of the subset $S \subseteq X$. 
	Given a vector $x\in \bbr^m$, denote its 
	$i$-th entry by\footnote{In situations in which there is no ambiguity, we also use $x_i$ to denote the $i$-th coordinate of a vector $x$.} $x_{(i)}$. Let $\|x\|_1:=\tsum_{i=1}^m |x_{(i)}|$, $\|x\|_2:=\sqrt{\tsum_{t=1}^m x_{(i)}^2}$ and $\|x\|_\infty:=\max_{i\in [m]}|x_{(i)}|$ denote the $\ell_1$, $\ell_2$ and $\ell_\infty$-norms respectively. Given a matrix $A$, denote its $(i,j)$-th entry by $P_{i,j}$. Let $\|A\|_2$ denote the spectral norm of matrix $A$. We let $\lambda_{\min} (A)$ and $\lambda_{\max} (A)$ denote the smallest and largest eigenvalue of a square matrix $A$, respectively.
	For a symmetric positive definite matrix $A$, define the inner product $\langle x, y \rangle_A := x^\top A y$ and the associated norm $\|x\|_A := \sqrt{x^\top A x}$. We refer to $\|x\|_A$ as the $\ell_A$-norm of $x$.

	\section{Background and problem setting}\label{sec_setting}
	In this section, we formally introduce Markov reward processes (MRPs) and the (discounted) policy evaluation problem. We also define linear function approximation of the value function, and present the concrete observation models that we study.
	
	\subsection{Markov reward process and policy evaluation}
	An MRP is described by a tuple $(\mathcal{S}, \mathsf{P}, R, \gamma)$, where $\mathcal{S}=[D]$ denotes the state space, $\mathsf{P}$  is the transition kernel, $R$ is the reward function and $\gamma \in (0,1)$ is  the discount factor. At each iteration, the system moves from the current state $s\in \mathcal{S}$  to some state $s'\in \mathcal{S}$ with probability $\mathsf{P}(s' |s)$, while the agent realizes the reward $R(s, s')$. We denote by $r(s):= \tsum_{s'\in \mathcal{S}} \mathsf{P}(s'|s) R(s,s')$ the expected instantaneous reward generated at state $s$.  Let $P$ denote the transition probability matrix having $(i,j)$-th entry $P_{i,j} = \mathsf{P}(j|i)$. The reward $R$ can also be written in matrix form, i.e., $R_{i,j} = R(i,j)$. The value function specifies the  infinite-horizon discounted reward as a function of the initial state:
	\begin{align*}
		v^*(s):=\bbe\big[\tsum_{t=0}^\infty \gamma^t R(s_t, s_{t+1})|s_0=s\big].
	\end{align*}
	In the case where the number of states is finite and equal to $D$, both the expected reward function $r$ and the value function $v^*$ are $D$-dimensional vectors of reals. The value function is given by the solution to the Bellman equation
	\begin{align}\label{bellman}
		v^* =  \gamma P v^* + r.
	\end{align}

	
	Throughout this paper, we assume that the Markov chain is aperiodic, ergodic and that there exists a unique stationary  distribution $\pi:=(\pi_1,...,\pi_D)$ with strictly positive entries, satisfying $\pi P = \pi$. Let $\Pi:=\text{diag}(\pi_1,...,\pi_D)$ denote a $D \times D$ diagonal matrix whose non-zero elements are given by the entries of the stationary distribution. 
	
	\subsection{Linear function approximation}
	In modern applications with large state spaces, it is common to seek approximate solutions to the Bellman equation~\eqref{bellman}, and the standard approach is to choose a $d$-dimensional subspace $\mathbb{S}$ for the purposes of approximation. In particular, one chooses $\bbs:=\text{span}\{\psi_1,...,\psi_d\}$ for $d$ linearly independent basis vectors $\psi_1, \ldots, \psi_d$.
	For each state $s \in [D]$ we let $\psi(s) := [\psi_1(s), \psi_2(s),...,\psi_d(s)]^\top$ denote its feature vector. 
	Letting $\Pi_\mathbb{S}$ denote the projection onto the subspace with respect to the $\| \cdot \|_\Pi$-norm, 
	define $\vbar$ as the solution to the \emph{projected fixed point equation}
	\begin{align}\label{proj_fixed_point}
		\vbar = \Pi_\bbs(\gamma P \vbar + r).
	\end{align}
	It is convenient to write this projection in matrix notation. Let $\Psi := [\psi_1 ,  \psi_2  , ..., \psi_d ]^\top$, and for $v^\Diamond$ in $\mathbb{S}$, use $\theta^\Diamond$ to denote its corresponding parameterization in $\bbr^d$, e.g., $\Psi^\top \theta'=v'$.
	With this shorthand, equation~\eqref{proj_fixed_point} can be equivalently written as 
	\begin{align}\label{proj_fix_point_2}
		\Psi \Pi \Psi^\top\bar \theta = \Psi \Pi \gamma P \Psi^\top \bar \theta + \Psi \Pi r.
	\end{align}
	
	It is convenient in the analysis to have access to an orthonormal basis spanning the projected space $\mathbb{S}$. Define the matrix $B\in \R^{d\times d}$ by letting $B_{i,j}:= \langle \psi_i, \psi_j\rangle_\Pi $ for each $i,j\index{[d]}$, and let
	$$\Phi:= [\phi_1 ,  \phi_2  , ..., \phi_d ]^\top = B^{-\frac{1}{2}} \Psi.$$ 
	By construction, the vectors $\phi_1, \ldots, \phi_d$ satisfy $\langle \phi_i, \phi_j \rangle_{\Pi} = \mathbb{I}(i = j)$. Next, define the scalars
	\begin{align}\label{beta_mu}
		\beta := \lambda_{\max}(B),~~\text{and} ~\mu:=\lambda_{\min}(B),
	\end{align}
	so that $\beta/\mu$ is the condition number of the covariance matrix of the features. 
	Finally, let 
	\begin{align*}
		M:=\gamma \Phi \Pi P \Phi^\top
	\end{align*}
	denote the $d$-dimensional matrix that describes the action of $\gamma P$ on the projected space $\mathbb{S}$. 
	
	
	\subsection{Observation models and problem statement}\label{sec_obs_model}
	We start by introducing the i.i.d. observation model, in which we have access to a black box or simulator that generates samples from the transition kernel and reward functions.  In particular, we observe independent tuples $\xi_i=(s_i, s_i', R(s_i,s_i'))$, such that 
	$$
	s_i \sim \omega,~~ s_i'\sim P(\cdot|s_i),
	$$
	where $\omega:=(\omega_1, ..., \omega_D)$ is a distribution with strictly positive entries, and we use the shorthand $\Omega:=\diag ([\omega_1, ..., \omega_D])$. 
	A natural and popular choice is $\omega = \pi$, in which case the i.i.d. model is meant to approximate the stationary Markov chain.
	
	In the Markovian noise model, we assume that all of our observations come from a single trajectory of a Markov chain. Precisely, the sequence of states  $\{s_0, s_1....,\}$ generated by the MRP is a time-homogeneous Markov chain, with $s_0 \sim \pi$. The tuple $\xi_t = (s_t,s_{t+1},R(s_t,s_{t+1}))$ is observed at each time $t$. The highly correlated nature of these observations renders algorithm design and analysis in the Markovian setting more challenging than in the i.i.d. setting. 
	
	Our goal in both cases is to use the observations to generate an estimator $\widehat v_n$ of $v^*$ which satisfies an \emph{oracle inequality} of the form
	\begin{align}\label{oracle_inequality}
		\bbe \|\widehat v_n-v^*\|_{\Pi}^2 \leq \O(1)\|\bar v - v^*\|_{\Pi}^2 + \delta_n \|v_0 - \bar v\|_{\Pi}^2 + \epsilon_{n,\sigma},
	\end{align}
	where $v_0$ is the initial iterate of the algorithm.
	The three terms appearing on the RHS of inequality~\eqref{oracle_inequality} all have concrete interpretations. The first term $\|\bar v - v^*\|_{\Pi}^2$ characterizes the \emph{approximation error} incurred by the linear function approximation. As a point of the background, we recall the following instance-dependent upper bound on the approximation error due to \cite{mou2020optimal}:
	\begin{align}\label{approximation_error}
		\|\vbar - v^*\|_{\Pi}^2 \leq \mathcal{A}(M, \gamma) \inf_{v \in S} \|v-v^*\|_{\Pi}^2,
	\end{align}
	where $\mathcal{A}(M,\gamma)=1+\lambda_{\max}\left((I-M)^{-1}(\gamma^2 I_d - M M^\top)(I-M)^{-\top}\right)$.
	See \cite{mou2020optimal} for a proof, alongside guarantees of information-theoretic optimality.
	
	This work focuses on sharply analyzing the last two terms on the RHS of inequality~\eqref{oracle_inequality}, both of which have concrete operational interpretations. The term $\delta_n \|v_0 - \bar v\|_{\Pi}^2$ is the \emph{deterministic error}, which characterizes the convergence of the iterative algorithm in the purely deterministic setting. Specifically, the term $\delta_n$, which should tend to zero as the number of iterations (or oracle calls) $n$ goes to infinity, quantifies how fast the discrepancy between the initialization $v_0$ and the approximate solution $\bar v$ diminishes by running the iterative algorithm. The third term $\epsilon_{n,\sigma}$ is the \emph{stochastic error}, which is incurred due to the stochastic observation model. Here we use the notation $\sigma$ as a placeholder for the ``noise level'' in the observed samples. One should expect the stochastic error $\epsilon_{n,\sigma}$ to go to zero as $n$ goes to infinity or as $\sigma$ goes to zero. 
	Several previous works mix the deterministic error with stochastic error in their guarantees \citep[see, e.g.,][]{russo_18, mou2020optimal}. However, the key benefit of separating the deterministic error from the stochastic error 
	is that it allows a clean understanding of situations in which
	either the observations have low noise or parallel implementation may be available. In these cases, the deterministic error dominates the overall convergence rate of the algorithm, and so having algorithms that attain the optimal deterministic error is a key desideratum. 

	Having precisely defined the deterministic and stochastic errors, we are now in a position to present our first set of results on lower bounds for both of these terms.

	\section{Lower bounds in weighted $\ell_2$-norm}\label{sec_lower}
	We study the oracle complexity lower bound on deterministic error in Section~\ref{sec_deterministic_lower} and the instance-specific stochastic error lower bound in Section~\ref{sec_stochastic_lower_bound}.
	
	
	
	
	
	\subsection{Oracle complexity lower bound on deterministic error}\label{sec_deterministic_lower}
	It is well-known that a linear rate can be achieved for the deterministic policy evaluation problem, and the convergence rate is highly dependent on the effective horizon $(1-\gamma)^{-1}$ \citep{puterman2014markov}.
	Accordingly, our goal in this section is to prove an oracle complexity lower bound in terms of $(1-\gamma)^{-1}$, which can be done even in the tabular setting in which the subspace is all of $\mathbb{R}^D$.
	%
	%
	The following assumption on the oracle captures algorithms in the temporal difference learning family. 
	\begin{assumption}[Amenable iterative method]\label{M_assump}
		An \emph{amenable} iterative method $\mathcal{M}$ generates a sequence of iterates $v_k$ such that 
		\begin{align}\label{assump_M}
			v_k \in v_0 + \mathrm{span}\{G(v_0), G(v_1),...,G(v_{k-1})\},\quad k \geq 1,
		\end{align}
		where $G(v) = (I-\gamma P)v-r$.
	\end{assumption}
	Noting that $G(v)$ is precisely the temporal difference operator applied at the point $v$, an amenable algorithm is one
	whose iterates are always in the linear span of the initial point $v_0$ and subsequent temporal differences. The linear span assumption is commonly used in proving oracle complexity lower bounds \citep{nesterov2003introductory, ouyang2021lower}, and as such, nearly all the algorithms in the temporal difference family can be shown to be amenable.  The sole exceptions that we are aware of occur in cases where there are projections involved in the algorithm, e.g., \cite{russo_18}. However, in policy evaluation problems with unbounded feasible region $\bbr^D$, projection steps are often unnatural and vanilla TD algorithms are able to attain similar performance \citep[see, e.g.][]{kotsalis2020simple}. 
	%
	%
	The following theorem provides an oracle complexity lower bound for policy evaluation problem under the $\ell_\Pi$-norm for amenable algorithms. 
	\begin{theorem}\label{theorem_lower_bound}
		Fix a constant $\gamma>\tfrac{1}{2} $ and an initialization $v_0$. 
		There exists a transition kernel $P$ and an expected reward vector $r$ such that any iterative method $\mathcal{M}$ satisfying Assumption~\ref{M_assump} produces iterates $\{v_k\}_{k \geq 1}$ satisfying the following. If $(D,k)$ satisfies $\tfrac{1-(2\gamma-1)^{2D-2k}}{1-(2\gamma-1)^{2D}}\geq \tfrac{1}{2}$, then
		\begin{align}\label{lower_bound}
			\|v_k-v^*\|_\Pi^2 \geq \tfrac{1}{2} (2\gamma-1)^{2k}\|v_0-v^*\|_\Pi^2,
		\end{align}
		where $v^*$ is the solution of equation~\eqref{bellman}.
	\end{theorem}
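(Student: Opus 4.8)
\emph{Proof plan.}

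The idea is to build a worst-case instance on which the iterates of \emph{every} amenable method are trapped in a Krylov subspace that gains only one coordinate per iteration, so that a geometric tail of the $\Pi$-weighted mass of $v^*$ is never explained. First I would reduce to $v_0 = 0$: writing $w_k := v_k - v_0$ and $\tilde r := r - (I-\gamma P)v_0$, one has $G(v_j) = (I-\gamma P)w_j - \tilde r$, so Assumption~\ref{M_assump} for the data $(P,r,v_0)$ is exactly the same assumption for $(P,\tilde r, 0)$; moreover $v^*-v_0$ is the Bellman solution associated with $\tilde r$, and both sides of \eqref{lower_bound} depend only on the differences $v_k-v^*$ and $v_0-v^*$, which are translation invariant. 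Since $r$ may be taken to be an arbitrary vector (choose the reward matrix $R$ accordingly), it suffices to treat $v_0=0$. With $v_0=0$ we have $G(v_0)=-r$, and a short induction on $k$ using $G(v)=(I-\gamma P)v-r$ shows that every iterate obeys $v_k\in\mathcal{K}_k:=\mathrm{span}\{r,Ar,\dots,A^{k-1}r\}$, where $A:=I-\gamma P$ is invertible because $\gamma<1$. So it is enough to choose $(P,r)$ so that $\mathcal{K}_k$ misses an explicit chunk of $v^*=A^{-1}r$.

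For the instance, take $P$ to be the cyclic random walk on $[D]$ with self-loop probability $q:=\tfrac{1}{2\gamma}$ and forward probability $1-q$, i.e.\ $P_{i,i}=\tfrac{1}{2\gamma}$ and $P_{i,i+1}=1-\tfrac{1}{2\gamma}$ with indices taken mod $D$. Since $\gamma\in(\tfrac12,1)$, this is a genuine irreducible, aperiodic stochastic matrix, and by circulant symmetry its stationary distribution is uniform, so $\Pi=\tfrac1D I$. Take $r=e_D$. Then the only nonzero entries of $A$ sit on the diagonal, the superdiagonal, and position $(D,1)$, so $A e_j$ is supported on $\{j-1,j\}$ for $j\ge 2$ and on $\{1,D\}$ for $j=1$; consequently $A^m e_D\in\mathrm{span}\{e_{D-m},\dots,e_D\}$ for $m\le D-1$, hence $\mathcal{K}_k\subseteq\mathrm{span}\{e_{D-k+1},\dots,e_D\}$ whenever $k\le D-1$. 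The hypothesis on $(D,k)$ does force $k\le D-1$ (it already fails at $k=D$, where the left-hand ratio is $0$), so in the regime where the conclusion is asserted the first $D-k$ coordinates of $v_k$ vanish.

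It remains to compute $v^*$ and take a ratio. Solving $Av^*=e_D$ coordinatewise uses $A_{i,i}=1-\gamma q=\tfrac12$ and $A_{i,i+1}=-\gamma(1-q)$, giving $v^*_i=\rho^{\,D-i}v^*_D$ with
\[
\rho=\frac{\gamma(1-q)}{1-\gamma q}=2\gamma(1-q)=2\gamma-1\in(0,1),\qquad v^*_D=\frac{2}{1-\rho^{D}}>0 .
\]
Since $v_k$ agrees with $0$ on coordinates $1,\dots,D-k$,
\[
\|v_k-v^*\|_\Pi^2\ \ge\ \frac1D\sum_{i=1}^{D-k}(v^*_i)^2=\frac{(v^*_D)^2}{D}\,\rho^{2k}\,\frac{1-\rho^{2(D-k)}}{1-\rho^2},\qquad
\|v_0-v^*\|_\Pi^2=\frac{(v^*_D)^2}{D}\,\frac{1-\rho^{2D}}{1-\rho^2},
\]
and dividing yields $\|v_k-v^*\|_\Pi^2\ \ge\ \rho^{2k}\,\tfrac{1-(2\gamma-1)^{2D-2k}}{1-(2\gamma-1)^{2D}}\,\|v_0-v^*\|_\Pi^2$. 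Invoking the hypothesis $\tfrac{1-(2\gamma-1)^{2D-2k}}{1-(2\gamma-1)^{2D}}\ge\tfrac12$ gives exactly \eqref{lower_bound}.

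The one genuinely delicate point is choosing the instance: $P$ must simultaneously be a valid ergodic, aperiodic chain with a \emph{known} positive stationary distribution, be banded so the Krylov subspace grows by one dimension per step, and be tuned so that the geometric ratio governing the trapped mass equals precisely $2\gamma-1$. The cyclic walk with self-loop weight $1/(2\gamma)$ is engineered to make all three hold at once (this is where $\gamma>\tfrac12$ enters, to keep the forward probability positive); everything after that is routine linear algebra, so I do not anticipate further obstacles.
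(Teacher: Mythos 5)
Your proof is correct and takes essentially the same route as the paper's: the same reduction to $v_0=0$, the same cyclic chain with self-loop probability $\tfrac{1}{2\gamma}$ and uniform stationary distribution, a one-hot reward that traps every amenable method's iterates in a coordinate subspace growing by one dimension per step, and the same geometric-series ratio computation with decay rate $2\gamma-1$. The only differences are cosmetic: your chain cycles in the opposite direction (so iterates are supported on the last $k$ coordinates rather than the first $k$) and you normalize $r=e_D$ instead of normalizing $v^*_i=(2\gamma-1)^i$.
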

	\noindent See Section~\ref{proof_theorem_lower_bound} for the proof of this theorem.
	
	Noting that $2\gamma -1 = 1 - 2(1 - \gamma)$, Theorem~\ref{theorem_lower_bound} shows an oracle complexity lower bound  $\O\{\tfrac{1}{1-\gamma}\log(\tfrac{\| v_0 - v^*\|_\Pi^2}{\epsilon})\}$ for finding a solution $\widehat v \in \R^m$ such that $\|\widehat v - v^*\|_\Pi^2\leq \epsilon$. 
	It should be noted that the metric (i.e., the $\ell_\Pi$-norm) used in Theorem~\ref{theorem_lower_bound} depends on the problem instance through the stationary distribution of the transition kernel $P$. Such an instance-dependent metric makes the construction of our worst-case instance non-standard and challenging. 
	
	On a related note, it is instructive to recall that classical oracle complexity bounds for solving linear operator equations \citep{nemirovsky1991optimality,nemirovsky1992information} allow the conjugate operator to be queried within the oracle, making the method class wider than the class of amenable algorithms captured in  Assumption~\ref{M_assump}. On the one hand, the conjugate operator is not natural for solving a policy evaluation problem under stochastic settings since the vector $(I-\gamma P)^\top v$ is hard to estimate with transition and reward samples. On the other hand, our construction used in proving Theorem~\ref{theorem_lower_bound} naturally extends to this wider method class, and we provide an even stronger deterministic error lower bound than Theorem~\ref{theorem_lower_bound} in Appendix~\ref{sec_app_B}.
	
	\subsection{Instance-specific lower bound on stochastic error}\label{sec_stochastic_lower_bound}
	We now turn our attention to proving lower bounds on the instance-specific sample complexity under the i.i.d. observation model introduced in Section~\ref{sec_obs_model}. We assume that the feature matrix $\Psi$ is fixed and known, 
	and let $\param=(\omega, P, R)$ denote an individual problem instance parameterized by the initial state distribution $\omega$, transition kernel $P$, and reward function $R$. 
	Note at this juncture that we do not require that $\omega = \pi$; this is akin to the so-called \emph{off-policy} situation in which the sampling (or behavior) policy may differ from the policy that we are interested in evaluating. 
	Our result will apply in this general case; 
	but given that the initial state is drawn from the distribution $\omega$,
	it is convenient to consider solving the projected fixed point equation with respect to the $\|\cdot\|_\Omega$-norm (cf. Eq.~\eqref{proj_fix_point_2}), written as 
	\begin{align*}
		\Psi \Omega\Psi^\top\theta = \Psi \Omega \gamma P \Psi^\top \theta + \Psi \Omega r.
	\end{align*}
	Use the function $\bar \theta(\param):=(\Psi \Omega \Psi^\top- \Psi \Omega \gamma P \Psi^\top)^{-1} \Psi \Omega r$ to denote the target of interest. 
	
	
	In order to state our result, we require some additional notation. Fix an instance $\param=(\omega, P, R)$, and for any $\epsilon > 0$, define an $\epsilon$-neighborhood of problem instances by
	\begin{align*}
		\mathfrak{N}(\param; \epsilon) := \{\param' = (\omega ', P', R '): \|\omega - \omega '\|_2+ \|P- P'\|_F +\|R- R'\|_F\leq \epsilon\}.
	\end{align*}
	Define the matrix $\widetilde B \in \bbr^{d\times d}$ by $\widetilde B_{i,j}:=\langle \psi_i, \psi_j \rangle_\Omega$ for $i, j \in [d]$.  Thus $\widetilde B$ satisfies
	\begin{align}\label{def_B_tilde}
		\widetilde B^{-\frac{1}{2}}\Psi \Omega \Psi\widetilde B^{-\frac{1}{2}} = I_d.
	\end{align}
	Adopting the $\ell_{\widetilde B}$-norm as our loss function, define the following local asymptotic minimax risk \citep{hajek1972local,le1972limits}:
	\begin{align}\label{minimax_risk}
		\mathfrak{M}(\param) := \lim_{c \rightarrow \infty} \lim_{N\rightarrow \infty} \inf_{\widehat \theta_N} \sup_{\param' \in\mathfrak{N}(\param;c/\sqrt{N})} N\cdot\bbe_{\param'}\left[\left\|\widehat \theta_N - \bar\theta(\param')\big)\right\|_{\widetilde B}^2\right].
	\end{align}
	The infimum in Eq.~\eqref{minimax_risk} is taken over all estimators $\widehat \theta_N$ that are measurable functions of $N$ observations drawn according to the i.i.d. observation model. In contrast to the global minimax risk---which takes a supremum of the risk over all the problem instances within a reasonable class---the local minimax risk $\mathfrak{M}(\param)$ looks for the hardest alternative in a small neighborhood of the instance $\param$ with diameter $c/\sqrt{N}$. To capture the hardest local alternative (in an asymptotic sense) it suffices to take the diameter of the neighborhood to be of the order $1/\sqrt{N}$. Invoking Eq.~\eqref{def_B_tilde} yields the equivalent definition
	\begin{align}\label{minimax_risk_2}
		\mathfrak{M}(\param) = \lim_{c \rightarrow \infty} \lim_{N\rightarrow \infty} \inf_{\widehat \theta_N} \sup_{\param' \in\mathfrak{N}(\param;c/\sqrt{N})} N\cdot\bbe_{\param'}\left[\left\|\Psi^\top \widehat \theta_N - \Psi^\top\bar\theta(\param')\big)\right\|_{\Omega}^2\right].
	\end{align}
	The following proposition characterizes the local asymptotic risk $\mathfrak{M}(\param)$ explicitly.

	\begin{proposition}\label{theorem_stochastic_lower_bound}
		Consider the i.i.d. observation model with the initial state drawn from the distribution $\omega$. Let $Z\in \bbr^d$ be a multivariate Gaussian 
		\begin{align*}
			Z \sim \mathcal{N} \big(0,  (I_d-\widetilde M)^{-1}  \widetilde\Sigma (I_d- \widetilde M)^{-T}\big),
		\end{align*}
		where $\widetilde\Sigma := \cov\big[\widetilde B^{-\frac{1}{2}}
		\big(\langle\psi(s) - \gamma \psi(s'),\bar\theta\rangle - R(s, s')\big) \psi(s)\big]$ and $\widetilde M:= \gamma\widetilde B^{-\frac{1}{2}}\Psi \Omega P \Psi\widetilde B^{-\frac{1}{2}}$. Then we have 
		\begin{align}\label{stats_lower_bound}
			\mathfrak{M}(\param) = \bbe[\|Z\|_2^2] = \trace\left\{ (I_d-\widetilde M)^{-1}  \widetilde \Sigma (I_d- \widetilde M)^{-T}\big) \right \}.
		\end{align}
	\end{proposition}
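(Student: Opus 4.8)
The plan is to recognize $\bar\theta(\param)$ as the root of a population estimating equation, identify the associated efficient influence function, and invoke the H\'ajek--Le Cam local asymptotic minimax theorem both for the lower bound and, via an explicit efficient estimator, for a matching upper bound. For an observation $\xi=(s,s',R(s,s'))$, write
\begin{align*}
h(\xi;\theta):=\psi(s)\big(\langle\psi(s)-\gamma\psi(s'),\theta\rangle-R(s,s')\big),\qquad g(\theta;\param):=\bbe_\param\big[h(\xi;\theta)\big].
\end{align*}
Using $s\sim\omega$ and $s'\sim P(\cdot\mid s)$, one computes $\bbe_\param[\psi(s)\psi(s)^\top]=\Psi\Omega\Psi^\top=\widetilde B$, $\bbe_\param[\psi(s)\psi(s')^\top]=\Psi\Omega P\Psi^\top$ and $\bbe_\param[\psi(s)R(s,s')]=\Psi\Omega r$, so that $g$ is affine in $\theta$ with
\begin{align*}
g(\theta;\param)=J\theta-\Psi\Omega r,\qquad J:=\widetilde B-\gamma\,\Psi\Omega P\Psi^\top=\widetilde B^{1/2}(I_d-\widetilde M)\widetilde B^{1/2}.
\end{align*}
Hence $\bar\theta(\param)=J^{-1}\Psi\Omega r$ is the unique root of $g(\cdot;\param)$, the vector $h(\xi;\bar\theta)$ is centered, and $\Sigma_h:=\cov_\param\big(h(\xi;\bar\theta)\big)=\widetilde B^{1/2}\widetilde\Sigma\,\widetilde B^{1/2}$.

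For the lower bound, note that the i.i.d.\ model is differentiable in quadratic mean at $\param$ (it is a smoothly parametrized, essentially finite-dimensional model), and since the parametrization $(\omega,P)\mapsto(\omega_i P_{ij})_{i,j}$ sweeps out all distributions on $[D]\times[D]$ with positive entries, every centered square-integrable function of the state pair $(s,s')$ — in particular each coordinate of $h(\xi;\bar\theta)$ — lies in the tangent space of the model at $\param$. Differentiating the identity $g(\bar\theta(\param_t);\param_t)\equiv 0$ along a regular path with score $\dot\ell$ gives $\bbe_\param[h(\xi;\bar\theta)\,\dot\ell]+J\,\dot{\bar\theta}=0$, so $\param\mapsto\bar\theta(\param)$ is pathwise differentiable with canonical gradient (efficient influence function) $\widetilde\psi(\xi):=-J^{-1}h(\xi;\bar\theta)$, which already lies coordinatewise in the tangent space and has covariance $J^{-1}\Sigma_h J^{-\top}$. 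Since $\theta\mapsto\|\theta\|_{\widetilde B}^2$ is a bowl-shaped loss (nonnegative, continuous, with convex symmetric sublevel sets), the local asymptotic minimax theorem of \citet{hajek1972local,le1972limits} yields
\begin{align*}
\mathfrak{M}(\param)\;\geq\;\bbe\big[\|\zeta\|_{\widetilde B}^2\big],\qquad \zeta\sim\mathcal N\big(0,\,J^{-1}\Sigma_h J^{-\top}\big).
\end{align*}

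For the matching upper bound I would analyze the linear plug-in ($Z$-)estimator $\widehat\theta_N:=\widehat J_N^{-1}\widehat b_N$ with $\widehat J_N:=\tfrac1N\sum_{i=1}^N\psi(s_i)(\psi(s_i)-\gamma\psi(s_i'))^\top$ and $\widehat b_N:=\tfrac1N\sum_{i=1}^N\psi(s_i)R(s_i,s_i')$. Standard $Z$-estimator arguments, made uniform over the shrinking neighborhood $\mathfrak{N}(\param;c/\sqrt N)$ — on which $\param'\mapsto(J',\Sigma_h',\widetilde B')$ is continuous with $J'$ invertible, and $\widehat J_N$ is invertible with probability tending to one — show that $\widehat\theta_N$ is asymptotically linear with influence function $\widetilde\psi$, whence $\sqrt N\big(\widehat\theta_N-\bar\theta(\param')\big)\Rightarrow\mathcal N(0,J^{-1}\Sigma_h J^{-\top})$ uniformly; a truncation plus uniform-integrability argument then upgrades this to convergence of the rescaled second moments, giving $\lim_{c\to\infty}\lim_{N\to\infty}\sup_{\param'\in\mathfrak N(\param;c/\sqrt N)}N\,\bbe_{\param'}\big[\|\widehat\theta_N-\bar\theta(\param')\|_{\widetilde B}^2\big]\leq\bbe[\|\zeta\|_{\widetilde B}^2]$. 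Combined with the lower bound, $\mathfrak{M}(\param)=\bbe[\|\zeta\|_{\widetilde B}^2]$. Finally, substituting $J=\widetilde B^{1/2}(I_d-\widetilde M)\widetilde B^{1/2}$ and $\Sigma_h=\widetilde B^{1/2}\widetilde\Sigma\,\widetilde B^{1/2}$ and using cyclicity of the trace,
\begin{align*}
\bbe\big[\|\zeta\|_{\widetilde B}^2\big]=\trace\big(\widetilde B\,J^{-1}\Sigma_h J^{-\top}\big)=\trace\big((I_d-\widetilde M)^{-1}\widetilde\Sigma(I_d-\widetilde M)^{-\top}\big)=\bbe\big[\|Z\|_2^2\big],
\end{align*}
with $Z$ as in the statement (equivalently $Z\overset{d}{=}\widetilde B^{1/2}\zeta$, reflecting the change of variables $\eta=\widetilde B^{1/2}\theta$ that identifies the $\ell_{\widetilde B}$- and $\ell_2$-losses, cf.\ \eqref{minimax_risk_2}).

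The main obstacle is making the two middle steps fully rigorous: verifying the hypotheses of the local asymptotic minimax theorem in this off-policy discrete setting — differentiability in quadratic mean of the $(\omega,P)$ family, the precise description of its tangent space, and the handling of the reward coordinate — and establishing the uniform-over-neighborhood asymptotic linearity and uniform integrability required for achievability. The identification of the canonical gradient and the closing trace computation are essentially routine.
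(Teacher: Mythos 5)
Your proposal is correct in substance and arrives at the same covariance $J^{-1}\Sigma_h J^{-\top}$ and the same trace formula as the paper, but by a genuinely different route. The paper applies the H\'ajek--Le Cam theorem in its equality form (so no separate achievability argument is needed) and then, in Lemma~\ref{lemma_hajek}, computes the asymptotic covariance $\nabla h(\param)^\top J_\param^\dagger \nabla h(\param)$ by brute force: it writes down the multinomial log-likelihood in $(\omega,P,R)$, inverts the block-diagonal Fisher information via Sherman--Morrison--Woodbury, differentiates $h(\param)=U^{-1}\Psi\Omega r$ by the chain rule, and assembles the blocks to recognize $U^{-1}\cov\{y(s,s')\}U^{-\top}$ with $y(s,s')$ exactly your $h(\xi;\bar\theta)$. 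You instead differentiate the estimating equation $g(\bar\theta(\param_t);\param_t)\equiv 0$ along regular paths and observe that, because $(\omega,P)$ sweeps out all joint laws on $[D]^2$ with positive entries, the tangent space is saturated and $-J^{-1}h(\xi;\bar\theta)$ is already the canonical gradient; this replaces the entire Lemma~\ref{lemma_hajek} calculation with a three-line identity, at the cost of having to justify the tangent-space description and (since you invoke only the lower-bound form of LAM) to supply a matching $Z$-estimator upper bound, which the paper's equality version of Theorem~\ref{theorem_hajek} makes unnecessary. The one point you rightly flag as delicate — the reward coordinate — is also the soft spot of the paper's argument: perturbing $R$ shifts the observation deterministically, so the paper simply sets $J_R=0$ and $J_R^\dagger=0$ to zero out that block, which encodes (somewhat informally) the fact that $R$-perturbations are perfectly detectable and hence do not contribute to the local asymptotic risk; your sketch would need an equivalent justification. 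With those caveats made rigorous, your argument is a valid and arguably cleaner alternative proof.
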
 
	\noindent See Section~\ref{proof_stochastic_lower_bound} for the proof of this theorem. 
	
	A few comments are in order. First, it should be noted that this lower bound is distinct from the asymptotic minimax lower bound shown in \cite{khamaru2020temporal}, in which a generative observation model is assumed (where we observe transitions from all $D$ initial states) and there is no function approximation. 
	Consequently, our choice of a problem instance of interest is $\param=(\omega, P, R)$ rather than $(P, R)$ in \cite{khamaru2020temporal}. Second, and on a related note, it is important that $\omega$ be unknown and included in the set of parameters $\param$; if in contrast $\omega$ is known a priori, then the local asymptotic minimax risk differs from the characterization~\eqref{stats_lower_bound}. Finally, we note that \cite{mou2020optimal} provide non-asymptotic, instance-dependent lower bounds on stochastic error for solving projected fixed-point equations using the Bayesian Cram\'{e}r--Rao bound. However, these lower bounds do not directly apply here, 
	since the family of hardest local alternatives constructed in \cite{mou2020optimal} may not be valid instances in the policy evaluation setting.
	
	Let us now specialize Proposition~\ref{theorem_stochastic_lower_bound} by taking $\omega = \pi$, where $\pi$ is the stationary distribution of the transition kernel $P$.  Denote by $\param_\pi:=(\pi, P, R)$ the instance of interest. Let
	\begin{align}\label{def_Sigma}
		\iidS := \cov\big[B^{-\frac{1}{2}}
		\big(\langle\psi(s) - \gamma \psi(s'),\bar\theta\rangle - R(s, s')\big) \psi(s)\big] \quad \text{ for } s\sim \pi \text{ and } s'\sim \mathsf{P}(\cdot|s).
	\end{align}
	Applying Proposition~\ref{theorem_stochastic_lower_bound}, the local asymptotic minimax risk~\eqref{minimax_risk_2} under this setting is then given by 
	\begin{align}\label{stats_lower_bound_1}
		\lim_{c \rightarrow \infty} \lim_{N\rightarrow \infty} \inf_{\widehat \theta_N} \sup_{\param'\in\mathfrak{N}(\param_\pi;c/\sqrt{N})} N\cdot\bbe_{\param'}&\left[\left\|\Psi^\top \widehat \theta_N - \Psi^\top\bar\theta(\param')\big)\right\|_{\Pi}^2\right] \nn\\
		&= \trace\left\{ (I_d-M)^{-1}  \iidS (I_d-  M)^{-T}\big) \right \}.
	\end{align}
	
	Taking stock, we have proved two lower bounds~\eqref{lower_bound} and~\eqref{stats_lower_bound_1} on the deterministic and stochastic errors in $\ell_\Pi$-norm under the i.i.d. observation model $s \sim \pi$ and $s' \sim \mathsf{P}(\cdot|s)$. Given these baselines, it is natural to ask whether there is a practical \black{iterative} algorithm in the TD family that can achieve both lower bounds, which is the main focus of Section~\ref{sec_iid}.

	
	\section{Algorithms for policy evaluation in the i.i.d. setting}\label{sec_iid}
	Taking both lower bounds proved in Section~\ref{sec_lower} as our baseline, we now turn our attention to the question of algorithm design. 
	In this section, we assume the i.i.d. observation model introduced in Section~\ref{sec_obs_model} with $\omega=\pi$.
	In order to state the results clearly, we require some additional notation. For $\theta \in \bbr^d$, we define the deterministic operator for solving equation~\eqref{proj_fix_point_2} as
	\begin{align}\label{deterministic_opt}
		g(\theta) = \Psi \Pi(\Psi^\top \theta- r-\gamma P \Psi^\top\theta);
	\end{align}
	note that $\bar\theta$ is the solution to $g(\theta) = 0$
	The corresponding stochastic operator calculated from sample $\xi_i$ is defined as
	\begin{align}\label{stochastic_opt}
		\widetilde g(\theta, \xi_i) = \left( \langle \psi(s_i), \theta\rangle - R(s_i,s_i') - \gamma \langle\psi(s_i'), \theta \rangle  \right) \psi(s_i);
	\end{align}
	note that $\bbe_{s_i\sim \pi, s_i'\sim \mathsf{P}(\cdot|s_i)}[\widetilde g(\theta, \xi_i)] =g(\theta) $. 
	To characterize the ``variance" of the stochastic operator under the i.i.d. observation model, we make the following assumption:
	\begin{assumption} \label{assump_variance}
		There exists a constant $\varsigma\geq 0$ such that for every $\theta, \theta' \in \bbr^d$, 
		\begin{align}\label{variance_3}
			\bbe\|\widetilde g(\theta,\xi) - \widetilde g(\theta', \xi)-\big(g(\theta)  - g(\theta')\big)\|^2_2 \leq \varsigma^2\|v - v'\|_\Pi^2,
		\end{align}
		where $v=\Psi^\top \theta$ and $v' = \Psi^\top \theta'$.
	\end{assumption}
	In words, instead of bounding the ``variance" of the stochastic operator directly as in  \cite{kotsalis2020simple}, Assumption~\ref{assump_variance} guarantees that the variance of the difference between stochastic operators with different variables $\theta$ under the same data~$\xi$ is upper bounded by the distance between the variables. This assumption is critical for implementing the variance-reduction techniques and capturing the instance-dependent stochastic error at the approximate solution $\bar \theta$. Clearly, the parameter $\varsigma^2$ is bounded provided the features $\psi(s)$ are bounded, and provides a natural measure of ``noise" in the problem.
	Accordingly, we make use of Assumption~\ref{assump_variance} throughout Sections~\ref{sec_iid} and~\ref{sec_markov}. 
	
	We are now ready to present our algorithms. We start with a variance-reduced version of the TD algorithm that captures the instance-specific stochastic error lower bound but fails to achieve the oracle complexity lower bound on deterministic error. To remedy this issue, we develop an accelerated variance-reduced TD algorithm that matches both lower bounds proved in Section~\ref{sec_lower}.
	
	\subsection{A warm-up algorithm: variance-reduced temporal difference learning}
	
	Variance-reduced temporal difference learning (VRTD) solves the policy evaluation problem using epochs. With a slight ambiguity of notation, we let $v_t$ and its corresponding parameterization $\theta_t$ denote the iterates generated within each epoch, and let $v^0$ and its corresponding parameterization $\theta^0$ denote the initialization of the algorithm. At the beginning of each epoch $k$, the algorithm uses $N_k$ samples to compute an averaged stochastic operator $\widehat g$ and evaluates it at a point $\widetilde \theta$, where $\widetilde \theta$ should be understood as the best current approximation of the optimal solution. The vector $\widehat g(\widetilde \theta)$ is used to recenter the updates in each epoch. 
	
	\begin{algorithm}[H]  \caption{Variance-reduced Temporal Difference Algorithm under i.i.d observations}  
		\label{alg:TD_iid}   
		\begin{algorithmic} 
			\STATE{\textbf{Input}: $ \theta^0 = \widehat \theta_0 \in \R^d$, $\eta > 0$,  $\{\zeta_t\}_{t=1}^T\geq0$ and $\{N_k\}_{k=1}^{K} \subset \mathbb{Z}_+$.}
			\FOR{$ k = 1, \ldots, K$}
			\STATE{Set $\theta_1 = \widetilde \theta  = \widehat \theta_{k-1}$. Collect $N_k$ samples $\xi_i^k=(s_i, s_i', R(s_i,s_i'))$ from the i.i.d. model.}
			\STATE{Calculate $\widehat g(\widetilde\theta) = \tfrac{1}{N_k} \tsum_{i=1}^{N_k} \widetilde g(\widetilde \theta,\xi_i^k).$}
			\FOR{$t=1, \ldots, T$}
			\STATE{ Collect a sample $\xi_{t}=(s_t, s_t', R(s_t,s_t'))$ from the i.i.d. observation model and compute
				\beq \label{TD_iid_step}
				\theta_{t+1} = \theta_t - \eta \left(\widetilde g(\theta_t,\xi_t)-\widetilde g(\widetilde \theta,\xi_t) + \widehat g(\widetilde \theta)\right).
				\eeq}
			\ENDFOR
			\STATE{Output of the epoch: \beq \label{eq:VRTD-ave-output}
				\widehat \theta_k = \frac{\sum_{t=1}^{T+1}\zeta_t \theta_t}{\sum_{t=1}^{T+1}\zeta_t }.\eeq}
			\ENDFOR
		\end{algorithmic}
	\end{algorithm} 
	
	Note that this algorithm is distinct from previous instantiations of variance-reduced temporal difference algorithms \citep{xu2020reanalysis, khamaru2020temporal}, since the output of each epoch~\eqref{eq:VRTD-ave-output} is a weighted average of the iterates.
	The following theorem provides a convergence guarantee on the VRTD algorithm.
	\begin{theorem}\label{theorem_VRTD_2}
		Consider the i.i.d. observation model with the initial state drawn from the distribution~$\pi$. Fix the total number of epochs $K$ and a positive integer $N$. Assume that for each epoch $k\in [K]$, the parameters $\eta$, $N_k$ and $T$ satisfy 
		\begin{align*}
			\eta\leq \min\left\{ \tfrac{1-\gamma}{2\beta(1+\gamma)^2},\tfrac{1-\gamma}{32\varsigma^2}\right\},~~T\geq \tfrac{32}{\mu(1-\gamma)\eta},~\text{and}~~N_k\geq\left\{\tfrac{38\varsigma^2}{\mu(1-\gamma)^2},(\tfrac{3}{4})^{K-k}N\right\}.
		\end{align*}
		Set the output of the epoch to be $\widehat v_k:=\frac{\sum_{t=1}^T \eta (1-\gamma) v_t+(1/\beta )v_{T+1}}{T \eta (1-\gamma) +(1/\beta )}$.
		Then for each $\delta>0$, we have 
		\begin{align} \label{eq:VRTD-conc}
			\bbe[\|\widehat v_K - v^*\|_\Pi^2] &\leq(1+\delta)  \mathcal{A}(M, \gamma) \inf_{v \in S}\|v-v^*\|_\Pi^2  \nn\\
			&\quad+ (1+\tfrac{1}{\delta})\left[\tfrac{1}{2^K}\|v^0-\vbar\|_\Pi^2+ \tfrac{15}{N}\trace\left((I_d-M)^{-1}\iidS (I_d-M)^{-\top}\right)\right].
		\end{align}
	\end{theorem}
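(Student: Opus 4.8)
The plan is to first reduce the oracle inequality to a bound on $\bbe\|\widehat v_K - \vbar\|_\Pi^2$, and then to establish that bound through a two-level recursion: a per-epoch contraction toward a noise-perturbed fixed point, followed by an unrolling over the $K$ epochs. \emph{Reduction.} Writing $\widehat v_K - v^* = (\widehat v_K - \vbar) + (\vbar - v^*)$ and applying Young's inequality $\|a+b\|_\Pi^2 \le (1+\delta)\|a\|_\Pi^2 + (1+\tfrac1\delta)\|b\|_\Pi^2$ together with the approximation-error bound~\eqref{approximation_error}, it suffices to prove
\begin{align*}
\bbe\|\widehat v_K - \vbar\|_\Pi^2 \;\le\; \tfrac{1}{2^K}\|v^0 - \vbar\|_\Pi^2 + \tfrac{15}{N}\,\trace\!\big((I_d - M)^{-1}\,\iidS\,(I_d - M)^{-\top}\big).
\end{align*}
I will use three elementary facts about the operator $g$ of~\eqref{deterministic_opt}. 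With $v=\Psi^\top\theta$, $v'=\Psi^\top\theta'$, one has $g(\theta)-g(\theta') = \Psi\Pi(I-\gamma P)(v-v')$, and since $P$ is non-expansive in $\|\cdot\|_\Pi$ this gives (i) strong monotonicity, $\langle g(\theta)-g(\theta'),\theta-\theta'\rangle \ge (1-\gamma)\|v-v'\|_\Pi^2$, and (ii) a Euclidean Lipschitz bound, $\|g(\theta)-g(\theta')\|_2^2 \le \beta(1+\gamma)^2\|v-v'\|_\Pi^2$; moreover, since $\Psi\Pi\Psi^\top = B$ with $\mu I\preceq B\preceq\beta I$, (iii) $\mu\|\theta\|_2^2 \le \|\Psi^\top\theta\|_\Pi^2 \le \beta\|\theta\|_2^2$ and, in the orthonormal coordinates, $g(\theta)-g(\theta') = B^{1/2}(I-M)B^{1/2}(\theta-\theta')$.

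\emph{Per-epoch analysis.} Fix an epoch $k$ with initialization $\widetilde\theta=\widehat\theta_{k-1}$, $\widetilde v=\widehat v_{k-1}$, and let $e:=\widehat g(\widetilde\theta)-g(\widetilde\theta)$ be the (epoch-start-measurable) variance-reduction error. The crucial device is to analyze the inner loop against the \emph{perturbed fixed point} $\bar\theta_e$ solving $g(\bar\theta_e)=-e$, with $\bar v_e:=\Psi^\top\bar\theta_e$: the update direction in~\eqref{TD_iid_step} has conditional mean $g(\theta_t)+e = g(\theta_t)-g(\bar\theta_e)$, so fact (i) yields a clean descent term $\ge (1-\gamma)\|v_t-\bar v_e\|_\Pi^2$ with \emph{no} residual cross term. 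Bounding the second moment of the recentered operator by splitting it into its variance-reduction fluctuation (controlled by Assumption~\ref{assump_variance} applied at the pairs $(\theta_t,\bar\theta)$ and $(\widetilde\theta,\bar\theta)$) plus the deterministic part $g(\theta_t)-g(\bar\theta_e)$ (controlled by fact (ii)), and taking $\eta\le\min\{\tfrac{1-\gamma}{2\beta(1+\gamma)^2},\tfrac{1-\gamma}{32\varsigma^2}\}$ so the $\eta^2$-terms are absorbed by the descent term, produces a one-step inequality
\begin{align*}
\bbe_t\|\theta_{t+1}-\bar\theta_e\|_2^2 \;\le\; \|\theta_t-\bar\theta_e\|_2^2 - c_1\eta(1-\gamma)\|v_t-\bar v_e\|_\Pi^2 + c_2\eta(1-\gamma)\big(\|\bar v_e-\vbar\|_\Pi^2 + \|\widetilde v-\vbar\|_\Pi^2\big).
\end{align*}
Summing over $t=1,\dots,T$, using fact (iii) to bound $\|\theta_1-\bar\theta_e\|_2^2\le\tfrac1\mu\|\widetilde v-\bar v_e\|_\Pi^2$ and $\tfrac1\beta\|v_{T+1}-\bar v_e\|_\Pi^2\le\|\theta_{T+1}-\bar\theta_e\|_2^2$, and recognizing the surviving combination of $\{\|v_t-\bar v_e\|_\Pi^2\}_{t\le T}$ and $\tfrac1\beta\|v_{T+1}-\bar v_e\|_\Pi^2$ as a constant multiple of the numerator of the weighted output $\widehat v_k$ (then Jensen), the choice $T\ge\tfrac{32}{\mu(1-\gamma)\eta}$ makes the $\tfrac1\mu\|\widetilde v-\bar v_e\|_\Pi^2$ term contract; converting distances to $\bar v_e$ into distances to $\vbar$ gives, conditionally on $\mathcal{F}_{k-1}$, a recursion of the form $\bbe\|\widehat v_k-\vbar\|_\Pi^2 \le \tfrac12\|\widehat v_{k-1}-\vbar\|_\Pi^2 + C\,\bbe\|\bar v_e-\vbar\|_\Pi^2$ for an absolute constant $C$.

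\emph{Identifying the stochastic error and unrolling.} By fact (iii), $g(\bar\theta_e)=-e$ gives $\bar\theta_e-\bar\theta = -B^{-1/2}(I-M)^{-1}B^{-1/2}e$, hence $\|\bar v_e-\vbar\|_\Pi^2 = \|(I-M)^{-1}B^{-1/2}e\|_2^2$. Decompose $e = e^{(0)}+e^{(1)}$, where $e^{(0)} = \tfrac1{N_k}\sum_i\widetilde g(\bar\theta,\xi_i^k)$ is mean-zero with covariance $\tfrac1{N_k}B^{1/2}\iidS B^{1/2}$ (since $\widetilde g(\bar\theta,\xi)=(\langle\psi(s)-\gamma\psi(s'),\bar\theta\rangle-R(s,s'))\psi(s)$ has covariance $B^{1/2}\iidS B^{1/2}$ by~\eqref{def_Sigma}, and $g(\bar\theta)=0$), and $e^{(1)}$ is the remaining variance-reduction correction with $\bbe[\|e^{(1)}\|_2^2\mid\mathcal{F}_{k-1}]\le\tfrac{\varsigma^2}{N_k}\|\widetilde v-\vbar\|_\Pi^2$ by Assumption~\ref{assump_variance} applied to the sample average. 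The $e^{(0)}$-part contributes exactly $\tfrac1{N_k}\trace((I-M)^{-1}\iidS(I-M)^{-\top})$ to $\bbe\|(I-M)^{-1}B^{-1/2}e\|_2^2$, while the $e^{(1)}$-part, using $\|(I-M)^{-1}\|_2\le\tfrac1{1-\gamma}$ (which holds since $\|M\|_2\le\gamma$) and the burn-in bound $N_k\ge\tfrac{38\varsigma^2}{\mu(1-\gamma)^2}$, is a small multiple of $\|\widehat v_{k-1}-\vbar\|_\Pi^2$ that folds back into the contraction coefficient, upgrading the recursion to $\bbe\|\widehat v_k-\vbar\|_\Pi^2 \le \tfrac12\bbe\|\widehat v_{k-1}-\vbar\|_\Pi^2 + \tfrac{c_3}{N_k}\trace((I-M)^{-1}\iidS(I-M)^{-\top})$. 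Unrolling from $\widehat v_0=v^0$ and using $N_k\ge(\tfrac34)^{K-k}N$, the stochastic terms sum geometrically, $\sum_{k=1}^K 2^{-(K-k)}N_k^{-1}\le\tfrac1N\sum_{j\ge0}(\tfrac23)^j\le\tfrac3N$, yielding the $\tfrac{15}{N}\trace(\cdots)$ term, while the deterministic part telescopes to $2^{-K}\|v^0-\vbar\|_\Pi^2$; combining with the reduction completes the proof.

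\emph{Main obstacle.} The delicate point is pinning the stochastic error to the $(I-M)^{-1}$-preconditioned covariance $\iidS$ predicted by Proposition~\ref{theorem_stochastic_lower_bound}: this is what forces the inner-loop analysis against the perturbed fixed point $\bar\theta_e$ rather than $\bar\theta$, and it creates a feedback loop, since the variance-reduction correction $e^{(1)}$ has magnitude controlled by $\|\widetilde v-\vbar\|_\Pi^2$ and, after the $(I-M)^{-1}$-preconditioning (which inflates by up to $\tfrac1{(1-\gamma)^2}$), could in principle spoil the per-epoch contraction. Closing the bookkeeping so that this loop is benign under the stated, relatively mild requirements on $\eta$, $T$, and $N_k$ — rather than under more stringent ones — is exactly where the specific output weights $\{\eta(1-\gamma)\}_{t\le T}\cup\{1/\beta\}$ and the $\tfrac{38\varsigma^2}{\mu(1-\gamma)^2}$ burn-in enter in an essential way.
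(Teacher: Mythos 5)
Your proposal is correct and follows essentially the same route as the paper: the analysis against the perturbed fixed point $\bar\theta_e$ (the paper's $\underline\theta$ with $g(\underline\theta)=g(\widetilde\theta)-\widehat g(\widetilde\theta)$), the decomposition of the recentering error into the mean-zero part $\widehat g(\bar\theta)-g(\bar\theta)$ (yielding the $(I-M)^{-1}$-preconditioned trace term) plus a variance-reduction correction controlled by Assumption~\ref{assump_variance} and $N_k\geq \tfrac{38\varsigma^2}{\mu(1-\gamma)^2}$, the weighted output with weights $\eta(1-\gamma)$ and $1/\beta$, and the geometric unrolling over epochs are all exactly the paper's Lemma~\ref{bound_tilde_v} and Proposition~\ref{prop_VRTD}. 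The only cosmetic difference is that you justify $\|(I_d-M)^{-1}\|_2\leq \tfrac{1}{1-\gamma}$ directly from $\|M\|_2\leq\gamma$ via a Neumann series, where the paper routes through its Lemma~\ref{lemma_trans_0}; both are valid.
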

	\noindent See Section~\ref{proof_theorem_VRTD} for the detailed proof of this theorem.
	
	The first term in the bound~\eqref{eq:VRTD-conc} is the approximation error term alluded to previously; let us extract the deterministic and stochastic errors from the remaining terms.
	The number of epochs required by the VRTD method to find a solution $\widehat v \in \R^D$, such that $\bbe[\|\widehat v - \vbar\|_\Pi^2]\leq \epsilon$ is bounded by $\O\big\{\log(\|v^0 - \vbar\|_\Pi^2/\epsilon)\big\}$. The total number of samples used is $\tsum_{k=1}^K(T+N_k)$, which is of the order
	\begin{align}\label{complexity_1}
		\underbrace{ \tfrac{\beta }{(1-\gamma)^2\mu} \log(\tfrac{\|v^0 - \vbar\|_\Pi^2}{\epsilon})}_{\text{deterministic error}} + \underbrace{\tfrac{\varsigma^2 }{(1-\gamma)^2\mu} \log(\tfrac{\|v^0 - \vbar\|_\Pi^2}{\epsilon})+\tfrac{\trace\big((I_d-M)^{-1}\iidS(I_d-M)^{-\top}\big)}{\epsilon}}_{\text{stochastic error}}. 
	\end{align}
	A few comments on the upper bound provided in Theorem~\ref{theorem_VRTD_2} are in order. 
	
	\paragraph{Comparing the upper and lower bounds} 
	We first focus on the stochastic error in the bound~\eqref{complexity_1}. The VRTD algorithm requires at least $\mathcal{O}\{\tfrac{\varsigma^2}{(1-\gamma)^2\mu}\}$ samples in each epoch, which accounts for the first term. Note that with noisy observations,
	it is necessary to have $\mathcal{O}\{\tfrac{1}{(1-\gamma)^2}\}$ samples in order to obtain an estimate of the value function within $\mathcal{O}(1)$ error, so this higher-order term is natural. The dominating stochastic error term is the last term, and matches the lower bound in equation~\eqref{stats_lower_bound_1}. Therefore the VRTD algorithm is instance-optimal in terms of its stochastic error. 
	
	Next, we turn our attention to the deterministic error, noticing  
	that the dependence on $\tfrac{1}{1-\gamma}$ is quadratic. Comparing with the oracle complexity lower bound proved in Theorem~\ref{theorem_lower_bound}, this quadratic dependence is suboptimal. This shortcoming motivates us to develop an accelerated algorithm in the next subsection.
	
	\paragraph{Comparing with related work} 
	To our knowledge, the only work using variance reduction that captures the correct instance-specific stochastic error is that of \citet{khamaru2020temporal}, which showed that the VRTD algorithm can match the lower bound on stochastic error in $\ell_\infty$-norm. However, their guarantees require $\mathcal{O}\{\tfrac{1}{(1-\gamma)^3} \}$ samples in each epoch to compute the recentered update, and this sample size is suboptimal. In addition, the deterministic error proved in this paper is of the order $ \tfrac{1}{(1-\gamma)^2\epsilon}$ which is also suboptimal.\footnote{In more detail, a family of such deterministic error guarantees is possible to extract from the paper. The dependence on $\epsilon$ can be improved but the dependence on $(1-\gamma)^{-1}$ is at least quadratic.} 
	The work of \citet{mou2020optimal} provided an analysis for the Polyak--Ruppert averaged temporal difference learning algorithm with linear function approximation in the weighted $\ell_2$-norm, showing that the dominant stochastic error term matches the stochastic lower bound proved in Proposition~\ref{theorem_stochastic_lower_bound}. However, the sample complexity is suboptimal due to the presence of higher-order terms \citep[see][for simulations demonstrating this suboptimality]{khamaru2020temporal}, as is the oracle complexity.


	\subsection{Variance-reduced fast temporal temporal difference algorithm}
	Motivated by the suboptimality of VRTD in its oracle complexity, we now present a variance-reduced ``fast" temporal difference (VRFTD) algorithm, which incorporates the idea of operator extrapolation introduced in \cite{GGT_20a}. This serves to accelerate the algorithm, and our analysis of VRFTD shows a convergence rate matching both the deterministic and stochastic error lower bounds. 
	
	The VRFTD algorithm is formally presented in Algorithm~\ref{alg:FTD_2}, and we introduce the basic idea of the algorithm below. First, it utilizes the idea of recentering updates from VRTD with the operator $\widehat g(\widetilde \theta)$ used in each epoch.  Second, in terms of iterate updating in the inner loop, it involves an inner mini-batch that generates the averaged operator $\widetilde g_t$, \black{which allows the algorithm to be run with a much larger stepsize.} Finally, each iteration within an epoch involves an operator extrapolation step~\eqref{FTD_step_2}. This is crucial to achieving the optimal deterministic error (cf. the VRTD update~\eqref{TD_iid_step}).
	\begin{algorithm}  \caption{Variance-reduced Fast Temporal Difference Algorithm under i.i.d. observations}  
		\label{alg:FTD_2}
		\begin{algorithmic} 
			\STATE{\textbf{Input}: $ \theta^0 = \widehat \theta_0 \in \R^d$, $\eta > 0$, $\lambda\geq 0$, $\{\zeta_t\}_{t=1}^T\geq0$ and nonnegative integers $m$, $\{N_k\}_{k=1}^{K}$.}
			\FOR{$ k = 1, \ldots, K$}
			\STATE{Set $\theta_0 = \theta_1 = \widetilde \theta  = \widehat \theta_{k-1}$. Collect $N_k$ sample tuples $\xi_i^k=(s_i, s_i', R(s_i,s_i'))$ from the i.i.d. observation model.}
			\STATE{Calculate $\widehat g(\widetilde\theta) = \tfrac{1}{N_k} \tsum_{i=1}^{N_k} \widetilde g(\widetilde \theta,\xi_i^k).$}
			\FOR{$t=1, \ldots, T$}
			\STATE{ Collect $m$ sample tuples $\xi_j^t=(s_j, s_j', R(s_j,s_j'))$ from the i.i.d. observation model.}
			\STATE{
				Calculate $\widetilde g_{t}(\cdot) = \tfrac{1}{m}\tsum_{j=1}^{m} \widetilde g(\cdot,\xi_j^t)$.}
			\STATE{
				Denote $\widetilde F_t(\theta_t) = \widetilde g_t(\theta_t)-\widetilde g_t(\widetilde \theta) +\widehat g(\widetilde \theta)$. Set $\widetilde F_0(\theta_0)=\widetilde F_1(\theta_1)$. Let}
			\STATE{
				\beq \label{FTD_step_2}
				\theta_{t+1} = \theta_t - \eta \left[\widetilde F_t(\theta_t)+\lambda \left(\widetilde F_t(\theta_t)-\widetilde F_{t-1}(\theta_{t-1})\right)\right].
				\eeq}
			\ENDFOR
			\STATE{Output of the epoch: \beq\widehat \theta_k = \frac{\sum_{t=1}^{T+1}\zeta_t \theta_t}{\sum_{t=1}^{T+1}\zeta_t }.\eeq}
			\ENDFOR
		\end{algorithmic}
	\end{algorithm} 

	The following theorem establishes a convergence rate for the VRFTD  algorithm.
	\begin{theorem}\label{theorem_VRFTD}
		Fix the total number of epochs $K$ and a positive integer $N$. Assume that for each epoch,  the parameters $\eta$, $\lambda$, $m$, $N_k$, $T$ satisfy 
		\beq\label{stepsize_VRFTD_1}
		\eta \leq \tfrac{1}{4\beta(1+\gamma)}, ~\lambda=1, ~T\geq\tfrac{32}{\mu(1-\gamma)\eta}, ~m \geq \max\left\{1, \tfrac{256\eta \varsigma^2}{1-\gamma}\right\}~\text{and}~N_k\geq \max\left\{\tfrac{56\varsigma^2}{\mu(1-\gamma)^2},  (\tfrac{3}{4})^{K-k} N\right\}.
		\eeq
		Set the output of this epoch to be $\widehat v_k:=\frac{\sum_{t=2}^{T+1} v_t}{T}$. Then for $\delta>0$,
		\begin{align*}
			\bbe[\|\widehat v_K - v^*\|_\Pi^2] &\leq(1+\delta)  \mathcal{A}(M, \gamma) \inf_{v \in S}\|v-v^*\|_\Pi^2\\&\quad  + (1+\tfrac{1}{\delta})\left[\tfrac{1}{2^K}\|v^0-\vbar\|_\Pi^2 + \tfrac{15}{N}\trace\left((I_d-M)^{-1}\iidS(I_d-M)^{-\top}\right)\right].
		\end{align*}
	\end{theorem}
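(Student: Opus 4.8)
The plan is to prove a one‑epoch contraction lemma for Algorithm~\ref{alg:FTD_2} and then iterate it over the $K$ epochs, exactly as in the VRTD analysis (Theorem~\ref{theorem_VRTD_2}), but with the stochastic‑operator‑extrapolation (SOE) argument of \cite{GGT_20a} replacing the plain variance‑reduced TD step; the final inequality follows by a Young‑type split together with the approximation‑error bound~\eqref{approximation_error}.

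\textbf{Step 1 (reduction to a well‑conditioned recentered root‑finding problem).} Write $g(\theta)=A(\theta-\bar\theta)$ with $A=\Psi\Pi(I-\gamma P)\Psi^\top=B^{1/2}(I_d-M)B^{1/2}$. Two standard facts about the TD operator will be used: $g$ is $\beta(1+\gamma)$‑Lipschitz in $\|\cdot\|_2$ (since $\|M\|_2\le\gamma$ because $\|Px\|_\Pi\le\|x\|_\Pi$), and it is strongly monotone in the sense $\langle g(\theta)-g(\theta'),\theta-\theta'\rangle\ge(1-\gamma)\|v-v'\|_\Pi^2\ge(1-\gamma)\mu\|\theta-\theta'\|_2^2$. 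It is cleanest to pass to the orthonormal coordinates $\vartheta=B^{1/2}\theta$, in which $\|v\|_\Pi=\|\vartheta\|_2$ and the deterministic operator has linear part $I_d-M$; the update~\eqref{FTD_step_2} is equivariant under this change of variables. Within epoch $k$ the iterates run SOE applied to the \emph{recentered} operator $F(\vartheta)=g(\vartheta)-g(\widetilde\vartheta)+\widehat g(\widetilde\vartheta)$ (all in $\Phi$‑coordinates), which has the same linear part $I_d-M$ as $g$ and satisfies $F(\bar\vartheta)=\widehat g(\widetilde\vartheta)-g(\widetilde\vartheta)=:e$, so its root is $\vartheta^\circ=\bar\vartheta-(I_d-M)^{-1}e$, where $e$ is the recentering error, conditionally mean zero given $\widetilde\vartheta=\widehat\vartheta_{k-1}$.

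\textbf{Step 2 (one‑epoch bound).} Condition on the $N_k$ recentering samples (hence on $e$ and $\widetilde\vartheta$). The inner loop uses fresh mini‑batches of size $m$, so $\widetilde F_t$ is an unbiased estimate of $F$ with per‑iteration variance $\mathbb E\|\widetilde F_t(\vartheta)-F(\vartheta)\|_2^2\lesssim\tfrac{\varsigma^2}{m}\|v-\widetilde v\|_\Pi^2$ by Assumption~\ref{assump_variance}. Carrying out the SOE Lyapunov/telescoping estimate of \cite{GGT_20a} for a potential $\|\vartheta_t-\vartheta^\circ\|_2^2$ plus the extrapolation correction, the extrapolation term is exactly what permits the large stepsize $\eta\le\tfrac1{4\beta(1+\gamma)}$, so that $T\ge\tfrac{32}{\mu(1-\gamma)\eta}$ inner iterations make the deterministic contraction factor at most $\tfrac14$; the choice $m\ge\tfrac{256\eta\varsigma^2}{1-\gamma}$ keeps the (multiplicative) mini‑batch noise subdominant. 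Using $\|v^\circ-\vbar\|_\Pi^2=\|(I_d-M)^{-1}e\|_2^2$, decomposing $e=\tfrac1{N_k}\sum_i[(\widetilde g(\widetilde\vartheta,\xi_i)-\widetilde g(\bar\vartheta,\xi_i)-(g(\widetilde\vartheta)-g(\bar\vartheta)))+\widetilde g(\bar\vartheta,\xi_i)]$, and keeping $(I_d-M)^{-1}$ attached to the second (``noise at the solution'') part rather than bounding its operator norm, one gets
\begin{align*}
\mathbb E\big[\|v^\circ-\vbar\|_\Pi^2\big]\;\le\;\tfrac{c_1\varsigma^2}{N_k\mu(1-\gamma)^2}\,\mathbb E\big[\|\widetilde v-\vbar\|_\Pi^2\big]\;+\;\tfrac{c_2}{N_k}\,\trace\!\big((I_d-M)^{-1}\iidS(I_d-M)^{-\top}\big),
\end{align*}
where the trace term uses $\cov[\widetilde g(\bar\vartheta,\xi)]=\iidS$ from~\eqref{def_Sigma}. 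The constraint $N_k\ge\tfrac{56\varsigma^2}{\mu(1-\gamma)^2}$ makes the first coefficient a small constant (the $\mu$ cancels), and folding it, the inner contraction, and the mini‑batch noise together produces the one‑epoch estimate
\begin{align*}
\mathbb E\big[\|\widehat v_k-\vbar\|_\Pi^2\big]\;\le\;\tfrac12\,\mathbb E\big[\|\widehat v_{k-1}-\vbar\|_\Pi^2\big]\;+\;\tfrac{c}{N_k}\,\trace\!\big((I_d-M)^{-1}\iidS(I_d-M)^{-\top}\big).
\end{align*}

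\textbf{Step 3 (telescoping over epochs and conclusion).} Iterating this from $k=1$ to $K$ with $N_k\ge(3/4)^{K-k}N$ gives
\begin{align*}
\mathbb E\big[\|\widehat v_K-\vbar\|_\Pi^2\big]\;\le\;\tfrac1{2^K}\|v^0-\vbar\|_\Pi^2\;+\;\tfrac{c}{N}\Big(\sum_{j\ge0}(2/3)^j\Big)\,\trace\!\big((I_d-M)^{-1}\iidS(I_d-M)^{-\top}\big),
\end{align*}
and the geometric sum is at most $3$, which after absorbing constants yields the factor $\tfrac{15}{N}$. Finally $\|\widehat v_K-v^*\|_\Pi^2\le(1+\delta)\|\vbar-v^*\|_\Pi^2+(1+\tfrac1\delta)\|\widehat v_K-\vbar\|_\Pi^2$ combined with the approximation‑error bound~\eqref{approximation_error} gives the stated inequality.

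\textbf{Main obstacle.} The crux is Step~2: executing the SOE analysis so that (i) the operator‑extrapolation step genuinely buys the $O(\beta(1+\gamma))^{-1}$ stepsize—hence the oracle complexity linear in $(1-\gamma)^{-1}$—and (ii) the surviving stochastic term is \emph{exactly} $\trace\!\big((I_d-M)^{-1}\iidS(I_d-M)^{-\top}\big)$ up to a constant. Point (ii) forces one to track the recentering error $e$ through the $(I_d-M)^{-1}$ amplification in bilinear (trace) form, to cleanly separate the $\|\widetilde v-\vbar\|_\Pi^2$‑proportional part of $\mathbb E\|e\|_2^2$ (absorbed into the $\tfrac12$ contraction via the $N_k$ lower bound) from its $\iidS$‑part (the reported stochastic error), and simultaneously to keep the inner mini‑batch noise—whose variance is itself proportional to iterate distances—dominated by the choices of $m$, $T$, and $N_k$ in~\eqref{stepsize_VRFTD_1}.
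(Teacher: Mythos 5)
Your proposal follows essentially the same route as the paper: a one-epoch contraction (the paper's Proposition~\ref{prop_VRFTD}) obtained by bounding $\|\widehat v_k-\underline v\|_\Pi^2$ via the operator-extrapolation telescoping and $\|\underline v-\vbar\|_\Pi^2$ via the decomposition of the recentering error into a $\varsigma^2\|\widetilde v-\vbar\|_\Pi^2/N_k$ part and the $\trace\big((I_d-M)^{-1}\iidS(I_d-M)^{-\top}\big)/N_k$ part (the paper's Lemma~\ref{bound_tilde_v}), followed by the geometric telescoping over epochs and the Young's-inequality split against the approximation error. The only thing left implicit is the explicit execution of the SOE telescoping in Step~2, which you correctly flag as the crux and describe accurately, so the plan is sound and matches the paper's argument.
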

	\noindent See Section~\ref{proof_theorem_VRFTD} for the detailed proof of this theorem.  
	
	In view of Theorem~\ref{theorem_VRFTD},  the number of epochs required by the VRFTD method to find a solution $\widehat v \in \R^D$, such that $\bbe[\|\widehat v - \vbar\|_\Pi^2]\leq \epsilon$ is bounded by $\O\{\log(\|v^0 - \vbar\|^2_\Pi/\epsilon)\}$. The total number of samples used is $\tsum_{k=1}^K(mT+N_k)$, which is bounded on the order	
	\begin{align}\label{complexity_VRFTD_iid}
		\underbrace{ \tfrac{\beta }{(1-\gamma)\mu} \log(\tfrac{\|v^0 - \vbar\|_\Pi^2}{\epsilon})}_{\text{deterministic error}} + \underbrace{\tfrac{\varsigma^2 }{(1-\gamma)^2\mu} \log(\tfrac{\|v^0 - \vbar\|_\Pi^2}{\epsilon})+\tfrac{\trace\big((I_d-M)^{-1}\iidS(I_d-M)^{-\top}\big)}{\epsilon}}_{\text{stochastic error}}. 
	\end{align}
	Similar to the VRTD algorithm, the VRFTD algorithm achieves optimal sample complexity in terms of stochastic error. For the deterministic error,
	the dependence on $1/(1-\gamma)$ matches the oracle complexity lower bound proved in Theorem~\ref{theorem_lower_bound}. \black{Note that the term $\beta/\mu$ is the condition number of the feature matrix in $\ell_\Pi$-norm. Therefore, for  ``well-conditioned'' feature matrices, the VRFTD algorithm achieves optimal oracle complexity\footnote{It is an interesting open problem to prove an oracle complexity lower bound for policy evaluation with linear function approximation having linear dependence on $\beta/\mu$.}.}
	In summary, VRFTD is an accelerated and instance-optimal policy evaluation algorithm, 
	and answers the central question posed in this paper.

	\section{Algorithm for policy evaluation in the Markovian setting}\label{sec_markov}
	Finally, we extend the VRFTD algorithm to the Markovian setting, 
	noting in passing that such an extension is also possible for the VRTD algorithm. 
	%
	The challenge of Markovian noise stems from the presence of dependent data that leads to biased samples. 
	To control the bias caused by correlation, we need a standard ergodicity assumption on the underlying Markov chain.
	\begin{assumption}\label{assump_rho_0}
		There exist constants $\skipcon>0$ and $\rho\in(0,1)$ such that
		\begin{align}\label{assump_rho}
			\max_{s\in S} \|\mathbb{P}(s_t=\cdot|s_0=s)-\pi\|_{\infty}\leq \skipcon \cdot \rho^t \quad \text{ for all } \; t\in \mathbb{Z}_+.
		\end{align}
	\end{assumption}
	\noindent In other words, with the following definition of mixing time
	\begin{align*}
		\tmix := \inf\{ t \in \mathbb{Z}_+ \;\mid\; \max_{s\in S} \|\mathbb{P}(s_t=\cdot|s_0=s)-\pi\|_{\infty} \leq 1/4 \},
	\end{align*}
	Assumption~\ref{assump_rho_0} guarantees that the mixing time is bounded as\footnote{Note that while our choice of the constant $1/4$ in the definition is arbitrary,  there is no additional dependence on $\epsilon$ when accounting for the mixing time, unlike in the assumptions made by~\cite[Eq.~(21)]{russo_18}.} $\tmix \leq \frac{\log (4 \skipcon)}{\log(1 / \rho)}$.
	
	\black{In order to overcome the difficulty caused by highly-correlated data, we introduce a burn-in period for sample collection. For instance, to compute the operator $\widehat g$ defined in Algorithm~\ref{alg:FTD_2}, we collect $N_k$ successive samples and only use the last $N_k-n_0$ of them. 
		With this method, we are able to reduce the bias induced by Markovian samples and achieve the desired variance reduction properties. The following two lemmas make this quantitative. }
	\begin{lemma}\label{lemma_operator_bias_1}
		For every $t,\tau\in \mathbb{Z}_{+}$, with probability 1,
		\begin{align}\label{bias_2}
			\|\bbe[\widetilde g(\bar \theta,\xi_{t+\tau})|\mathcal{F}_t]-g(\bar \theta)\|_2 \leq \mixcon\cdot \rho^\tau \|\vbar -v^*\|_\Pi.
		\end{align}
		where $\mixcon := \tfrac{\skipcon}{\sqrt{\min_{i\in[D]} \pi_i}}\|\Psi\|_2 \|I-\gamma P\|_2$ and $\mathcal{F}_t:= [\xi_1,...,\xi_t]$.
	\end{lemma}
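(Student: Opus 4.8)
The plan is to unfold the conditional expectation using the Markov property, write it as a state-weighted sum of a per-state drift vector, compute that drift in closed form via the Bellman equation, and then bound the discrepancy between the conditional law of $s_{t+\tau}$ and the stationary law $\pi$. Concretely, in \emph{Step 1} I would condition on $\calF_t=\sigma(\xi_1,\dots,\xi_t)$ and use that $(s_j)_{j\ge 0}$ is a Markov chain (with the reward a fixed function of consecutive states): then $s_{t+\tau}$ has some conditional law $q=q(\cdot|\calF_t)$ on $[D]$, and given $s_{t+\tau}$ the next state is drawn from $\mathsf{P}(\cdot|s_{t+\tau})$ independently of $\calF_t$. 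Setting
\[
h(s):=\bbe_{s'\sim\mathsf{P}(\cdot|s)}\big[\big(\langle\psi(s),\bar\theta\rangle-R(s,s')-\gamma\langle\psi(s'),\bar\theta\rangle\big)\psi(s)\big],
\]
one gets $\bbe[\widetilde g(\bar\theta,\xi_{t+\tau})|\calF_t]=\tsum_{s\in[D]}q(s)h(s)$, whereas $g(\bar\theta)=\tsum_{s\in[D]}\pi(s)h(s)$ because $\bbe_{s\sim\pi,\,s'\sim\mathsf{P}(\cdot|s)}[\widetilde g(\bar\theta,\xi)]=g(\bar\theta)$. Hence the bias equals $\tsum_{s}(q(s)-\pi(s))h(s)$.

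\emph{Step 2} is the key structural step: factoring $\psi(s)$ out of the inner expectation and using $\bbe_{s'\sim\mathsf{P}(\cdot|s)}[R(s,s')]=r(s)$ and $\bbe_{s'\sim\mathsf{P}(\cdot|s)}[\langle\psi(s'),\bar\theta\rangle]=(P\vbar)(s)$, where $\vbar=\Psi^\top\bar\theta$, yields $h(s)=\big((I-\gamma P)\vbar-r\big)(s)\,\psi(s)$. The Bellman equation~\eqref{bellman} gives $r=(I-\gamma P)v^*$, so $(I-\gamma P)\vbar-r=(I-\gamma P)(\vbar-v^*)=:w$ and thus $h(s)=w(s)\psi(s)$. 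Writing the sums in matrix form via $\psi(s)=\Psi e_s$, the bias becomes
\[
\bbe[\widetilde g(\bar\theta,\xi_{t+\tau})|\calF_t]-g(\bar\theta)=\Psi\,\diag(q-\pi)\,w .
\]

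\emph{Step 3} is then routine. By submultiplicativity and $\|\diag(q-\pi)w\|_2\le\|q-\pi\|_\infty\|w\|_2$,
\[
\big\|\bbe[\widetilde g(\bar\theta,\xi_{t+\tau})|\calF_t]-g(\bar\theta)\big\|_2\le\|\Psi\|_2\,\|q-\pi\|_\infty\,\|w\|_2 .
\]
Assumption~\ref{assump_rho_0} and the Markov property give $\|q-\pi\|_\infty\le\skipcon\rho^\tau$; and $\|w\|_2\le\|I-\gamma P\|_2\|\vbar-v^*\|_2\le\|I-\gamma P\|_2\,\|\vbar-v^*\|_\Pi/\sqrt{\min_{i\in[D]}\pi_i}$, the last inequality since $\|x\|_\Pi^2\ge(\min_i\pi_i)\|x\|_2^2$. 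Combining these yields the claim with $\mixcon=\skipcon\|\Psi\|_2\|I-\gamma P\|_2/\sqrt{\min_{i\in[D]}\pi_i}$.

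The main obstacle I anticipate is Step 2: recognizing that the conditional drift of the biased operator at $\bar\theta$ factors through $w=(I-\gamma P)(\vbar-v^*)$, the quantity that vanishes in the realizable case $\vbar=v^*$ and is otherwise controlled by $\|\vbar-v^*\|_\Pi$; this is exactly what makes the Markovian bias term harmless in the downstream convergence analysis. Everything else is Markov-chain bookkeeping and elementary norm comparisons, the only care point being the precise reduction in Step 1 (tracking that $\xi_{t+\tau}$ carries both the state $s_{t+\tau}$ and a fresh transition drawn conditionally independently of $\calF_t$) and aligning the exponent $\rho^\tau$ with the indexing convention used for $\calF_t$.
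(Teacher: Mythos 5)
Your proposal is correct and follows essentially the same route as the paper's proof: write the conditional expectation as $\Psi\,\diag(q)\,(\Psi^\top\bar\theta - r - \gamma P\Psi^\top\bar\theta)$ for the conditional law $q$ of $s_{t+\tau}$, use the Bellman equation to rewrite $(I-\gamma P)\vbar - r = (I-\gamma P)(\vbar - v^*)$, bound $\|q-\pi\|_\infty$ by Assumption~\ref{assump_rho_0}, and convert from $\ell_2$ to $\ell_\Pi$ via $\min_i\pi_i$. The indexing subtlety you flag (whether one conditions on $s_t$ or on the full $\calF_t$, which contains $s_{t+1}$) is glossed over identically in the paper, so there is nothing to fix relative to its argument.
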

	\noindent See Appendix~\ref{proof_operator_bias_1} for a proof of this lemma.
	In words, Lemma~\ref{lemma_operator_bias_1} provides an upper bound on the bias of the stochastic operator at the solution $\bar \theta$ in terms of the approximation error, and the bound decays exponentially with~$\tau$.
	
	\begin{lemma}\label{lemma_operator_bias_2}
		For every $t,\tau\in \mathbb{Z}_{+}$ and $\theta, \theta'\in \bbr^d$, with probability 1,
		\begin{align}\label{bias_1}
			\|\bbe[\widetilde g(\theta,\xi_{t+\tau})|\mathcal{F}_t] -\bbe[\widetilde g(\theta',\xi_{t+\tau})|\mathcal{F}_t] - [g(\theta)-g(\theta')] \|_2 \leq \mixcon\cdot \rho^\tau \|v-v'\|_\Pi.
		\end{align}
	\end{lemma}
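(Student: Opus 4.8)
The plan is to prove Lemma~\ref{lemma_operator_bias_2} by essentially the same computation as Lemma~\ref{lemma_operator_bias_1}, exploiting the fact that the difference $\widetilde g(\theta,\xi)-\widetilde g(\theta',\xi)$ is a \emph{linear} function of $\theta-\theta'$ with no dependence on the reward. First I would write out, from the definition~\eqref{stochastic_opt}, that
\begin{align*}
	\widetilde g(\theta,\xi_{t+\tau})-\widetilde g(\theta',\xi_{t+\tau}) = \big(\langle \psi(s_{t+\tau}),\theta-\theta'\rangle - \gamma\langle\psi(s_{t+\tau}'),\theta-\theta'\rangle\big)\psi(s_{t+\tau}),
\end{align*}
and correspondingly, from~\eqref{deterministic_opt}, that $g(\theta)-g(\theta') = \Psi\Pi(\Psi^\top-\gamma P\Psi^\top)(\theta-\theta')$. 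Setting $w:=\theta-\theta'$ and $u:=\Psi^\top w = v-v'$, both sides become linear in $w$, so the quantity to bound is $\|\bbe[\widetilde g(\theta,\xi_{t+\tau})-\widetilde g(\theta',\xi_{t+\tau})\mid\calF_t] - (g(\theta)-g(\theta'))\|_2$, which equals the norm of a difference between a conditional expectation over $s_{t+\tau}\sim \Prob(s_{t+\tau}=\cdot\mid s_t)$ and the corresponding expectation over $s\sim\pi$ of the matrix-valued quantity $\psi(s)(\psi(s)-\gamma\psi(s'))^\top w$, with $s'\sim\mathsf{P}(\cdot\mid s)$.

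Next I would make the law-of-total-expectation step explicit: conditioning further on $s_{t+\tau}$, the inner expectation of $\psi(s_{t+\tau}')$ given $s_{t+\tau}$ equals $(P\Psi^\top)$ evaluated at $s_{t+\tau}$, so the whole expression reduces to $\sum_{s\in[D]}\big(\Prob(s_{t+\tau}=s\mid\calF_t)-\pi_s\big)\,\psi(s)(\psi(s)^\top - \gamma (P\Psi^\top)_s)w$. Now I would bound this in $\ell_2$ by pulling out $\|\Prob(s_{t+\tau}=\cdot\mid s_t)-\pi\|_\infty \le \skipcon\rho^\tau$ via Assumption~\ref{assump_rho_0} (using that $s_{t+\tau}$ depends on $\calF_t$ only through $s_t$ by the Markov property), and then I would convert the remaining factor into $\ell_\Pi$-norm of $u=v-v'$: the sum $\sum_s \psi(s)(\psi(s)^\top-\gamma(P\Psi^\top)_s)w$ over all states, reweighted, is controlled by $\|\Psi\|_2\|I-\gamma P\|_2$ times a Euclidean norm of $(v-v')$, and converting Euclidean norm to $\ell_\Pi$-norm contributes the factor $1/\sqrt{\min_i\pi_i}$. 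Assembling these yields exactly the constant $\mixcon = \tfrac{\skipcon}{\sqrt{\min_i\pi_i}}\|\Psi\|_2\|I-\gamma P\|_2$ and the bound $\mixcon\rho^\tau\|v-v'\|_\Pi$.

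The only real subtlety — and the step I expect to need the most care — is the bookkeeping that turns the crude bound $\big\|\sum_s c_s a_s\big\|_2 \le \|c\|_\infty \cdot(\text{something})$ into precisely $\|\Psi\|_2\|I-\gamma P\|_2 / \sqrt{\min_i\pi_i}$ rather than a larger, dimension-dependent constant; one wants to recognize $\sum_s |c_s|\,\psi(s)(\psi(s)-\gamma\psi(s'_s))^\top$-type sums as matrix products $\Psi D \Psi^\top$ or $\Psi D (I-\gamma P)\Psi^\top$ with $D$ a diagonal matrix whose entries are bounded by $\|c\|_\infty$, and to use $\sqrt{\sum_s \pi_s x_s^2} = \|x\|_\Pi$ together with $\|x\|_2 \le \|x\|_\Pi/\sqrt{\min_i\pi_i}$. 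Since this is exactly the structure already handled in the proof of Lemma~\ref{lemma_operator_bias_1} (the reward term there plays no role in these estimates), the cleanest route is to mirror that argument verbatim with $\vbar-v^*$ replaced by $v-v'$ and $\widetilde g(\bar\theta,\cdot)$ replaced by $\widetilde g(\theta,\cdot)-\widetilde g(\theta',\cdot)$, and I would simply point to Appendix~\ref{proof_operator_bias_1} for the shared linear-algebra computation rather than repeating it.
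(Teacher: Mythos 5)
Your proposal is correct and follows essentially the same route as the paper: the paper's proof writes the quantity as $\Psi\big(\Pi-\Pi_\tau(s_{t+\tau})\big)(I-\gamma P)\Psi^\top(\theta-\theta')$ and bounds it by $\skipcon\rho^\tau\|\Psi\|_2\|I-\gamma P\|_2\|\Psi^\top(\theta-\theta')\|_2 \le \mixcon\rho^\tau\|v-v'\|_\Pi$, which is exactly your matrix-product bookkeeping with the diagonal matrix of probability differences controlled by the $\ell_\infty$ mixing bound. Your observation that the argument is the proof of Lemma~\ref{lemma_operator_bias_1} with $\vbar-v^*$ replaced by $v-v'$ (and the reward term dropping out) matches the paper exactly.
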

	\noindent See Appendix~\ref{proof_operator_bias_2} for a proof of this lemma.	
	In contrast to Lemma~\ref{lemma_operator_bias_1}, Lemma~\ref{lemma_operator_bias_2} provides an upper bound on the bias of the \emph{difference} of stochastic operators, which allows us to get rid of any dependence on the approximation error.
	We are now ready to formally state the VRFTD algorithm in the Markovian noise setting in Algorithm~\ref{alg:VRFTD_markovian}. 
	
	\begin{algorithm}[H]  \caption{Variance-reduced Fast Temporal Difference Algorithm under Markovian noise}  
		\label{alg:VRFTD_markovian}   
		\begin{algorithmic} 
			\STATE{\textbf{Input}: $ \theta^0 = \widehat \theta_0 \in \R^d$, $\eta > 0$, $\lambda\geq 0$, $\{\zeta_t\}_{t=1}^T\geq0$ and nonnegative integers \black{$m$, $m_0$, $n_0$}, $\{N_k\}_{k=1}^{K}$.}
			\FOR{$ k = 1, \ldots, K$}
			\STATE{Set $\theta_1 = \widetilde \theta= \widehat \theta_{s-1}$. \black{Collect $N_k$ successive samples $\xi_i^k:=(s_i, s_{i+1}, R(s_i,s_{i+1}))$ from the single Markov trajectory. Calculate \beq \label{g_hat_3}
					\widehat g(\widetilde \theta) = \tfrac{1}{N_k-n_0} \tsum_{i=n_0+1}^{N_k} \;\; \widetilde g(\widetilde \theta,\xi_i^k).\eeq}}
			\FOR{$t=1, \ldots, T$}
			\STATE{ \black{Collect $m$ successive samples $\widehat \xi_{j}^t:=(s_{j}, s_{j+1}, R(s_{j},s_{j+1}))$ from the Markov trajectory.}} 
			\STATE{ \black{Calculate $\widetilde g_t(\cdot) = \tfrac{1}{m-m_0}\tsum_{j=m_0+1}^{m}\widetilde g(\cdot,\widehat \xi_{j}^t)$.}}
			\STATE{ 
				Let $\widetilde F_t(\theta_t) = \widetilde g_t(\theta_t)-\widetilde g_t(\widetilde \theta) +\widehat g(\widetilde \theta)$ and set $\widetilde F_0(\theta_0)=\widetilde F_1(\theta_1)$. Let
				\beq \label{VRFTD_markov_2}
				\theta_{t+1} = \theta_t - \eta \left[\widetilde F_t(\theta_t)+\lambda \left(\widetilde F_t(\theta_t)-\widetilde F_{t-1}(\theta_{t-1})\right)\right].
				\eeq}
			\ENDFOR
			\STATE{Output of the epoch: \beq\widehat \theta_k = \frac{\sum_{t=1}^{T+1}\zeta_t \theta_t}{\sum_{t=1}^{T+1}\zeta_t }.\eeq}
			\ENDFOR
		\end{algorithmic}
	\end{algorithm} 
	\black{
		Before presenting our main convergence result for the VRFTD algorithm, we first define the matrix $\MarkovS$, which is a covariance matrix analog for the Markovian case (see \cite{mou2021optimal} and references therein). Letting $\{\widetilde s_t\}_{t=-\infty}^{\infty}$ define a sequence of samples obtained from a \emph{stationary} Markov trajectory, define
		$$
		\MarkovS:= \sum_{t=-\infty}^{\infty}B^{-\tfrac{1}{2}} ~\bbe\left[\big(\widetilde g(\bar \theta, \widetilde \xi_t)-g(\bar \theta)\big)\big(\widetilde g(\bar \theta, \widetilde \xi_0)-g(\bar \theta)\big)^\top\right]B^{-\tfrac{1}{2}},
		$$
		where $\widetilde \xi_t:=(\widetilde s_t, \widetilde s_{t+1}, R(\widetilde s_t, \widetilde s_{t+1}))$. This matrix is an infinite sum of matrices where one of the summands (when $t=0$) is the matrix $\iidS$ defined in Eq.~\eqref{def_Sigma}. }
	
	\black{Similarly to the i.i.d. setting, the instance-dependent complexity of Markovian linear stochastic approximation was shown in~\cite{mou2021optimal} to be governed by the trace of the matrix $(I_d-M)^{-1} \MarkovS (I_d-M)^{-T}$.
		To interpret this functional, consider the special case in which the approximation error caused by linear function approximation is $0$, i.e., $v^* = \bar v$. 
		Let $\widetilde{\mathcal{F}_i}$ denote the $\sigma$-field generated by samples $\widetilde \xi_0, ..., \widetilde \xi_i$ and let $\widetilde \Pi_j^i:= \diag\{[\mathbb{P}(\widetilde s_j=1|\widetilde s_i),...,\mathbb{P}(\widetilde s_j=D|\widetilde s_i)]\}$ for $j\geq i$. Then we have
		\begin{align*}
			&\bbe\left\langle (I_d-M)^{-1}B^{-\frac{1}{2}}\big(\widetilde g(\bar \theta, \widetilde \xi_0)-g(\bar \theta)\big), (I_d-M)^{-1}B^{-\frac{1}{2}}\big(\widetilde g(\bar \theta, \widetilde \xi_i)-g(\bar \theta)\big) \right\rangle\nn\\
			& =\bbe\left\langle (I_d-M)^{-1}B^{-\frac{1}{2}}\big(\widetilde g(\bar \theta, \widetilde \xi_0)-g(\bar \theta)\big), (I_d-M)^{-1}B^{-\frac{1}{2}}\big(\bbe[\widetilde g(\bar \theta, \widetilde \xi_i)|\widetilde{\mathcal{F}}_0]-g(\bar \theta)\big) \right\rangle\nn\\
			& = \bbe\left\langle (I_d-M)^{-1}B^{-\frac{1}{2}}\big(\widetilde g(\bar \theta, \widetilde \xi_0)-g(\bar \theta)\big), (I_d-M)^{-1}B^{-\frac{1}{2}}\Psi (\widetilde \Pi_j^i - \Pi) (\Psi^\top \bar \theta - \gamma P\Psi^\top \bar \theta - r) \right\rangle = 0,
		\end{align*}
		where the last equation follows from the fact that $v^* = \bar v = \Psi^\top \bar \theta$ and the Bellman equation \eqref{bellman}. Then 
		\begin{align*}
			\trace\left((I_d-M)^{-1}\MarkovS(I_d-M)^{-\top}\right) = \trace\left((I_d-M)^{-1}\iidS(I_d-M)^{-\top}\right).
		\end{align*}
		Armed with this intuition, we are now ready to establish the main convergence result for the VRFTD algorithm under Markovian noise. Given the calculation above, we discuss the cases $\bar v = v^*$ and $\bar v \neq v^*$ separately for clarity.}

	\begin{theorem}\label{theorem_VRFTD_mark}
		Fix the total number of epochs $K$ and a positive integer $N$. Consider an integer $\tau$ satisfying
		$
		\rho^{\tau} \leq \min\{\frac{2(1-\rho)\varsigma}{3\mixcon}, \frac{2(1-\rho)^2}{5 \mixcon}\}
		$. 
		Suppose the parameters $n_0$ and $m_0$ satisfy
		\begin{align}\label{n0_m0}
			\rho^{n_0} \leq \tfrac{\min_{i \in [D]}\pi_i}{\skipcon},~ \text{and}\quad
			\rho^{m_0} \leq \min \left\{\tfrac{\min_{i \in [D]}\pi_i}{\skipcon}, \tfrac{\sqrt{\mu}\eta\tau\varsigma^2(1-\rho)}{\mixcon}\right\}.
		\end{align}Assume that for each epoch $k\in [K]$,  the parameters $\eta$, $\lambda$, $m$, $N_k$, $T$ satisfy 
		\begin{align}\label{stepsize_VRFTD_mark_1}
			\eta &\leq \tfrac{1}{4\beta(1+\gamma)}, ~~~~\lambda=1, ~~~~T \geq\tfrac{64}{\mu(1-\gamma)\eta},~~~~m-m_0 \geq \max\left\{1, \tfrac{792\eta(\tau+1) \varsigma^2}{1-\gamma}\right\}, \nn\\
			&\rho^{N_k-n_0} \leq \tfrac{\tau(1-\rho)}{5\mixcon (N_k-n_0)},~\text{and}~~~N_k-n_0\geq\left\{\tfrac{206(\tau+1)\varsigma^2}{\mu(1-\gamma)^2}, (\tfrac{3}{4})^{K-k} N\right\}.
		\end{align}
		Set the output of each epoch to be $\widehat v_k:=\frac{\sum_{t=2}^{T+1} v_t}{T}$. Then the following results hold. \\
		\indent (a) If $\bar v = v^*$, we have
		\begin{align}\label{result_VRFTD_mark_0}
			\bbe[\|\widehat v_K - v^*\|_\Pi^2] &\leq\tfrac{1}{2^K}\|v^0-\vbar\|_\Pi^2 + \tfrac{30}{N} \cdot \trace\left((I_d-M)^{-1}\iidS(I_d-M)^{-\top}\right).
		\end{align}
		\indent (b) If $\bar v \neq v^*$, then for any  $\delta>0$ we have
		\begin{align}\label{result_VRFTD_mark_1}
			\bbe[\|\widehat v_K - v^*\|_\Pi^2] &\leq\left(1+\delta+\tfrac{18(\tau+1)(1+1/\delta)}{\mu (1-\gamma)^2N^2} \right) \cdot \mathcal{A}(M, \gamma) \cdot \inf_{v \in S}\|v-v^*\|_\Pi^2 \nn\\
			&\quad + (1+\tfrac{1}{\delta})\left[\tfrac{1}{2^K}\|v^0-\vbar\|_\Pi^2 + \tfrac{30}{N} \cdot \trace\left((I_d-M)^{-1}\MarkovS(I_d-M)^{-\top}\right) + \tfrac{\mathcal{H}}{N^2}\right],
		\end{align}
		where 
		$
		\mathcal{H}:=(90\tau^2 + 18\tau+ 18)\cdot  \trace\left((I_d-M)^{-1}\iidS(I_d-M)^{-\top}\right).
		$
	\end{theorem}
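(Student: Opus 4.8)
The plan is to build on the analysis of VRFTD in the i.i.d.\ setting (Theorem~\ref{theorem_VRFTD}) and quantify the additional error introduced by Markovian correlations using the bias bounds in Lemmas~\ref{lemma_operator_bias_1} and~\ref{lemma_operator_bias_2}. As in the i.i.d.\ case, I would first establish a per-epoch contraction: conditioned on the input $\widetilde\theta=\widehat\theta_{k-1}$, the output $\widehat v_k$ satisfies
\begin{align*}
\bbe[\|\widehat v_k - \vbar\|_\Pi^2] \;\leq\; \tfrac12 \bbe[\|\widehat v_{k-1} - \vbar\|_\Pi^2] \;+\; (\text{stochastic term}) \;+\; (\text{bias term}),
\end{align*}
where the stochastic term should match $\tfrac{c}{N_k-n_0}\trace\big((I_d-M)^{-1}\MarkovS(I_d-M)^{-\top}\big)$ up to constants, and the bias term captures the burn-in residual. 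Unrolling this recursion over $K$ epochs and using the geometric growth condition $N_k-n_0 \geq (\tfrac34)^{K-k}N$ yields the final bound, with the $\tfrac{1}{2^K}\|v^0-\vbar\|_\Pi^2$ deterministic term and the $\tfrac1N$ stochastic term emerging exactly as in Theorem~\ref{theorem_VRFTD}.

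The main new ingredients are the bias control steps. For the recentering operator $\widehat g(\widetilde\theta)$ computed from samples $n_0+1,\dots,N_k$: by Lemma~\ref{lemma_operator_bias_1}, each $\xi_i^k$ with $i > n_0$ is at least $n_0$ steps from an (essentially) stationary start, so the per-sample bias is $O(\mixcon\rho^{n_0}\|\vbar-v^*\|_\Pi)$, and the choice $\rho^{n_0}\leq \min_i\pi_i/\skipcon$ (roughly $n_0 \gtrsim \tmix$) makes this negligible; the mixing-time-dependent factor $\tau$ enters when bounding the \emph{variance} of the averaged operator over correlated samples, via a blocking/decoupling argument that splits the $N_k-n_0$ samples into $\tau$-separated subsequences, each of which behaves almost independently. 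For the inner-loop operators $\widetilde g_t$, I would use Lemma~\ref{lemma_operator_bias_2} (bias of the \emph{difference} of operators, with no approximation-error dependence) together with the condition $\rho^{m_0}\leq \sqrt\mu\eta\tau\varsigma^2(1-\rho)/\mixcon$ to ensure the bias of $\widetilde F_t(\theta_t)-g(\theta_t)$ is small relative to the stepsize-weighted progress; this is why the inner mini-batch size $m-m_0$ must scale with $\tau+1$ rather than just $1$.

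For the structure of the argument: in case (a), $\vbar = v^*$ kills the approximation-error term entirely and, by the displayed calculation preceding the theorem, also collapses $\MarkovS$ to $\iidS$ in the relevant trace functional, giving the clean bound~\eqref{result_VRFTD_mark_0}. In case (b), I would track three sources of residual error separately: (i) the approximation error, now slightly inflated by a $1 + O\big((\tau+1)/(\mu(1-\gamma)^2N^2)\big)$ factor coming from residual operator bias at $\vbar$ accumulated over the $T$ inner steps and $K$ epochs (using $T \gtrsim 1/(\mu(1-\gamma)\eta)$ and the geometric schedule to convert this into the stated $N^2$ dependence); (ii) the dominant stochastic term with $\MarkovS$; and (iii) a lower-order $\mathcal{H}/N^2$ term with leading $\tau^2$ dependence, arising from the interaction of the correlation length $\tau$ with the variance of the recentering/inner operators. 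The error decomposition and the careful bookkeeping of which bias terms scale with $\|\vbar-v^*\|_\Pi$ versus $\|v_t-\vbar\|_\Pi$ versus absolute constants is where most of the work lies.

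\textbf{Main obstacle.} I expect the hardest part to be controlling the variance (not just the bias) of the averaged stochastic operators $\widehat g(\widetilde\theta)$ and $\widetilde g_t(\theta_t)$ under Markovian sampling while preserving the \emph{instance-dependent} constant $\trace\big((I_d-M)^{-1}\MarkovS(I_d-M)^{-\top}\big)$ — as opposed to a crude $\tmix$-inflated bound. This requires a sharp blocking argument showing that the variance of the sample average over $N_k-n_0$ correlated samples equals $\tfrac{1}{N_k-n_0}$ times (a quantity converging to) the asymptotic variance $\MarkovS$, plus $O(\tau/(N_k-n_0)^2)$ cross-block corrections; combining this with the one-step progress inequality of the operator-extrapolation scheme from~\cite{GGT_20a}, while keeping all $\tau$-dependent factors confined to lower-order-in-$1/\epsilon$ terms, is the delicate core of the proof.
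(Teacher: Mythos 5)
Your proposal follows essentially the same route as the paper: a per-epoch contraction (Proposition~\ref{prop_VRFTD_mark}) obtained by reusing the algebraic one-step inequality from the i.i.d.\ analysis, burn-in periods $n_0, m_0$ to control the bias of $\widehat g$ and $\widetilde g_t$ via Lemmas~\ref{lemma_operator_bias_1} and~\ref{lemma_operator_bias_2}, a $\tau$-blocked decomposition of the autocovariance sum to extract $\trace\{(I_d-M)^{-1}\MarkovS(I_d-M)^{-\top}\}/(N_k-n_0)$ with $O(\tau^2/(N_k-n_0)^2)$ corrections (Lemmas~\ref{stoch_opt_variance_2} and~\ref{bound_tilde_v_Mark}), and a geometric unrolling over epochs. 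The only slight misattribution is that the $(\tau+1)/(\mu(1-\gamma)^2N^2)$ inflation of the approximation error does not come from bias accumulated over the $T$ inner steps (the inner-loop bias is handled by Lemma~\ref{lemma_operator_bias_2} and carries no $\|\vbar-v^*\|_\Pi$ dependence); it arises from the cross-correlation terms $\mu_i$ of the recentering operator evaluated at $\bar\theta$, which are bounded via Lemma~\ref{lemma_operator_bias_1} and hence scale with $\|\vbar-v^*\|_\Pi^2$.
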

	\noindent See Section~\ref{proof_theorem_VRFTD_mark} for detailed proofs of Theorem~\ref{theorem_VRFTD_mark}. Let us now discuss a few aspects of the theorem. 
	\paragraph{Estimation of mixing time} From the conditions above, e.g., Ineq.~\eqref{n0_m0}, the parameters $\tau, n_0, m_0$ scale linearly in the mixing time $\tmix$ and logarithmically in other problem parameters.
	As such, only some rough estimation of the mixing time is sufficient, which has been the topic of active research.
	Nontrivial confidence intervals for the reversible case can be found in \cite{hsu2019mixing}. There are also guarantees in the more challenging and prevalent case when the underlying Markov chain is non-reversible~\citep{wolfer2019estimating}.
	\black{
		\paragraph{Sample complexities} We first consider the case when $\bar v = v^*$. In view of Ineq.~\eqref{result_VRFTD_mark_0} in Theorem~\ref{theorem_VRFTD_mark},  the total number of samples required by the VRFTD method to find a solution $\widehat v \in \R^D$, such that $\bbe[\|\widehat v - v^*\|_\Pi^2]\leq \epsilon$ is $\tsum_{k=1}^K(mT+N_k)$, which is bounded on the order\footnote{\black{Note that we omit the logarithmic dependence on the problem parameters, e.g., $\mu, \beta, \gamma$. Same for the following complexity.}}	
		\begin{align}\label{complexity_VRFTD_markov_0}
			\underbrace{ \tfrac{\tmix\beta }{(1-\gamma)\mu} \log(\tfrac{\|v^0 - \vbar\|_\Pi^2}{\epsilon})}_{\text{deterministic error}} + \underbrace{\tfrac{\tmix\varsigma^2 }{(1-\gamma)^2\mu} \log(\tfrac{\|v^0 - \vbar\|_\Pi^2}{\epsilon})+\tfrac{\trace\big((I_d-M)^{-1}\iidS(I_d-M)^{-\top}\big)}{\epsilon}}_{\text{stochastic error}}, 
		\end{align}
		where the mixing time only enters along with terms that scale logarithmically in $1/\epsilon$. The phenomenon that the mixing time does not enter multiplicatively with the leading-order stochastic error term was also noticed by \cite{li2021q} for vanilla TD learning, but as mentioned before, this algorithm does not attain the correct instance-dependent stochastic error.}

	\black{When $\bar v \neq v^*$, the Markovian setting has a biased stochastic operator at optimal solution $\bar v$, and a larger approximation error $\|\vbar - v^*\|_\Pi^2$ caused by linear function approximation enlarges the bias of the stochastic operator and consequently enlarges the dependence on the approximation error in Eq.~\eqref{result_VRFTD_mark_1}. Therefore, a natural stopping criterion for the Markovian setting is to find a solution $\widehat v\in \bbr^D$ satisfying $\bbe[\|\widehat v - v^*\|_\Pi^2]\leq c\|\vbar - v^*\|_\Pi^2+\epsilon$ for some absolute constant $c > 0$.}
	From Ineq.~\eqref{result_VRFTD_mark_1}, the total number of required samples $\tsum_{k=1}^K (m T + N_k)$ is bounded on the order
	\black{
		\begin{align}\label{comlexity_VRFTD_markov_2}
			\underbrace{ \tfrac{\tmix\beta }{(1-\gamma)\mu} \log(\tfrac{\|v^0 - \vbar\|_\Pi^2}{\epsilon})}_{\text{deterministic error}} + \underbrace{ \tfrac{\tmix\varsigma^2 \log(\frac{\|v^0 - \vbar\|_\Pi^2}{\epsilon})}{(1-\gamma)^2\mu} +\tfrac{\sqrt{{\mathcal{H}}}}{\sqrt{\epsilon}}+\tfrac{ \trace\big((I_d-M)^{-1}\MarkovS(I_d-M)^{-\top}\big)}{\epsilon}}_{\text{stochastic error}}.
		\end{align}
		Note that, in this bound, the leading stochastic error matches the lower bound proved in \cite{mou2021optimal}, which can depend on the mixing time, but is generally smaller than the product of the i.i.d. stochastic error and the mixing time. 
		These results show a delicate difference between how the mixing time of the Markov chain enters the bound depending on whether the function approximation is exact or not. 
		It should be noted that, the stronger stopping criterion, i.e., finding $\widehat v$ to satisfy $\bbe[\|\widehat v - \vbar\|_\Pi^2]\leq \epsilon$, can also be applied in this setting. We can generate a similar sample complexity by enlarging the constants $\tau$ and $N_k$ by an additive factor of $\log(\|v^*-\bar v\|_\Pi^2)$. However, given that the approximation error is unavoidable and generally unknown, there is marginal benefit to using this stronger stopping criterion.}
	
	\black{\cite{mou2021optimal} established convergence guarantees for TD with averaging in the Markovian noise setting, showing a similar leading order stochastic error term but without accelerating the deterministic error. Besides improving on the deterministic error, Theorem~\ref{theorem_VRFTD_mark} also guarantees that the higher-order terms on stochastic error are smaller than those proved in~\cite{mou2021optimal}.
	}

	\section{Numerical experiments}\label{sec_numerical}
	In this section, we report numerical experiments for both VRTD and VRFTD, comparing them against temporal difference learning (TD), conditional temporal difference learning (CTD), and fast temporal difference learning (FTD)~\citep{GGT_20a,kotsalis2020simple}. To generate a comprehensive performance profile, we conduct experiments under both the i.i.d. and  Markovian noise models.
	
	\subsection{The i.i.d. setting: A simple two-state construction}
	We consider a family of two-state MRPs inspired by the construction of \cite{duan2021optimal}. For a discount factor $\gamma \in (\tfrac{1}{2},1)$, the transition kernel $P$ and reward vector $r$ are given by
	$
	P = \begin{bmatrix} 
		\tfrac{2\gamma-1}{\gamma} & \tfrac{1-\gamma}{\gamma}\\	\tfrac{1-\gamma}{\gamma} & \tfrac{2\gamma-1}{\gamma}
	\end{bmatrix}$  and  
	$		r = \begin{bmatrix} 
		1\\
		-1
	\end{bmatrix}$.
	Clearly the transition kernel is symmetric, thus the stationary distribution is $\pi = [0.5, 0.5]$. For simplicity, we choose the feature matrix $\Psi = \diag([\sqrt{2},\sqrt{2}])$,
	which forms an orthonormal basis under $\ell_\Pi$-norm. Assuming the i.i.d. model in which $s_i \sim \pi$ and $s_i'\sim P(\cdot|s_i)$,
	it can be shown via simple calculation that the stochastic error term is given by
	\begin{align}\label{numerical_lower_bound}
		\trace\big((I_d-M)^{-1}\iidS(I_d-M)^{-\top}\big) =\tfrac{40}{81}\cdot \tfrac{2\gamma-1}{(1-\gamma)^3}.
	\end{align}
	
	\begin{figure}[ht!]
		\centering
		\subfigure[TD algorithm]{\label{fig:TD}\includegraphics[width=5.5cm]{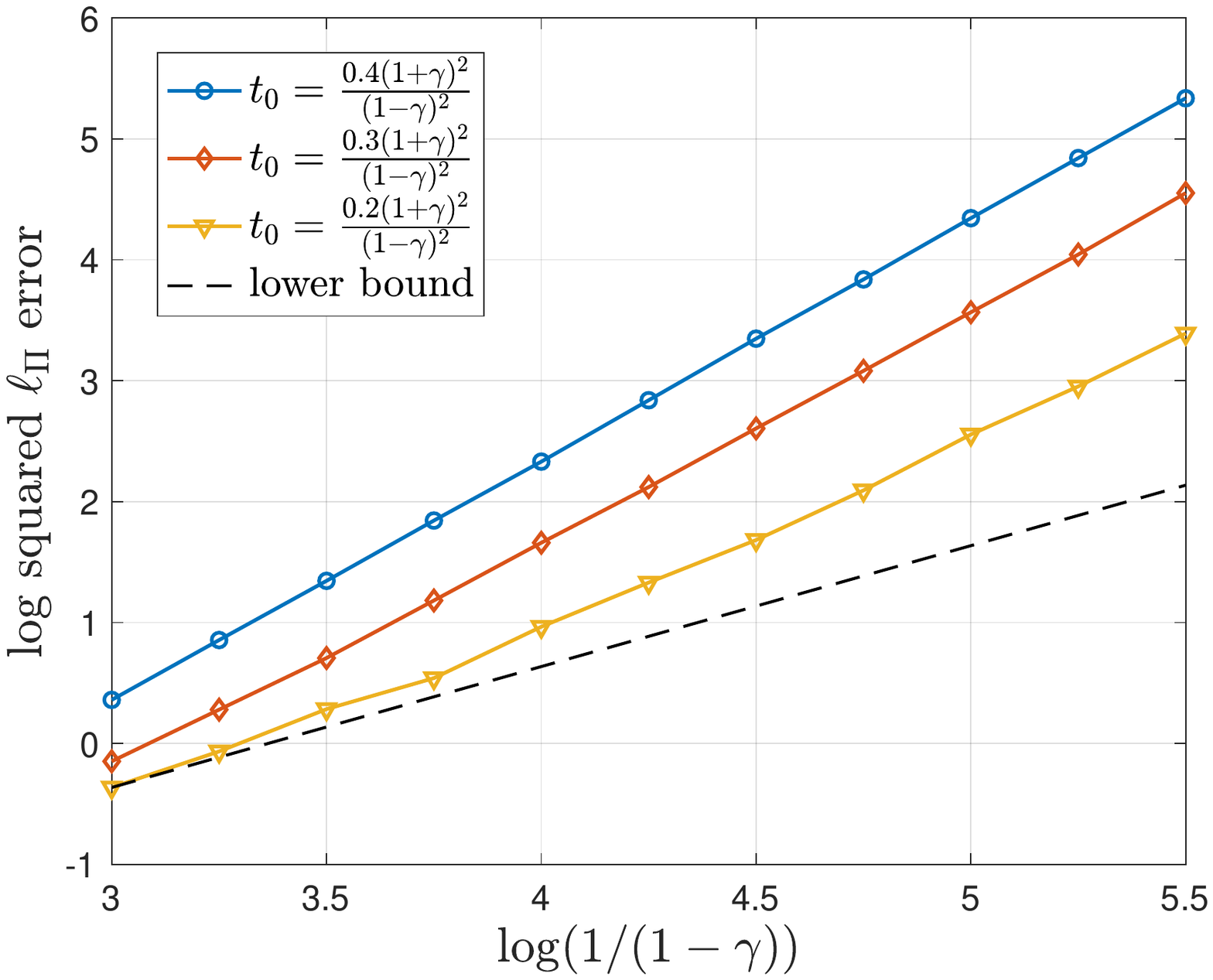}}
		\subfigure[FTD algorithm]{\label{fig:FTD}\includegraphics[width=5.5cm]{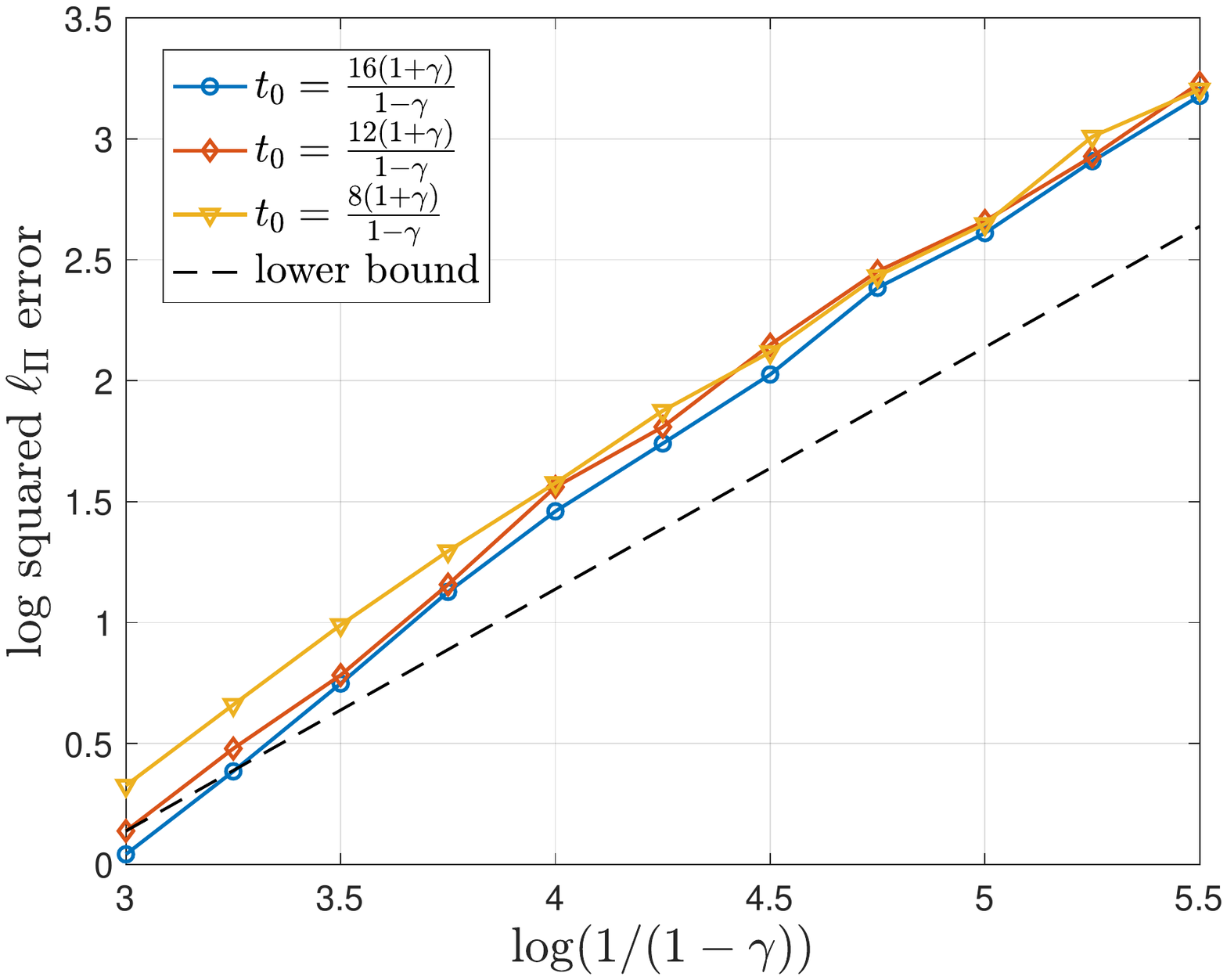}}
		\subfigure[VRTD algorithm]{\label{fig:VRTD}\includegraphics[width=5.5cm]{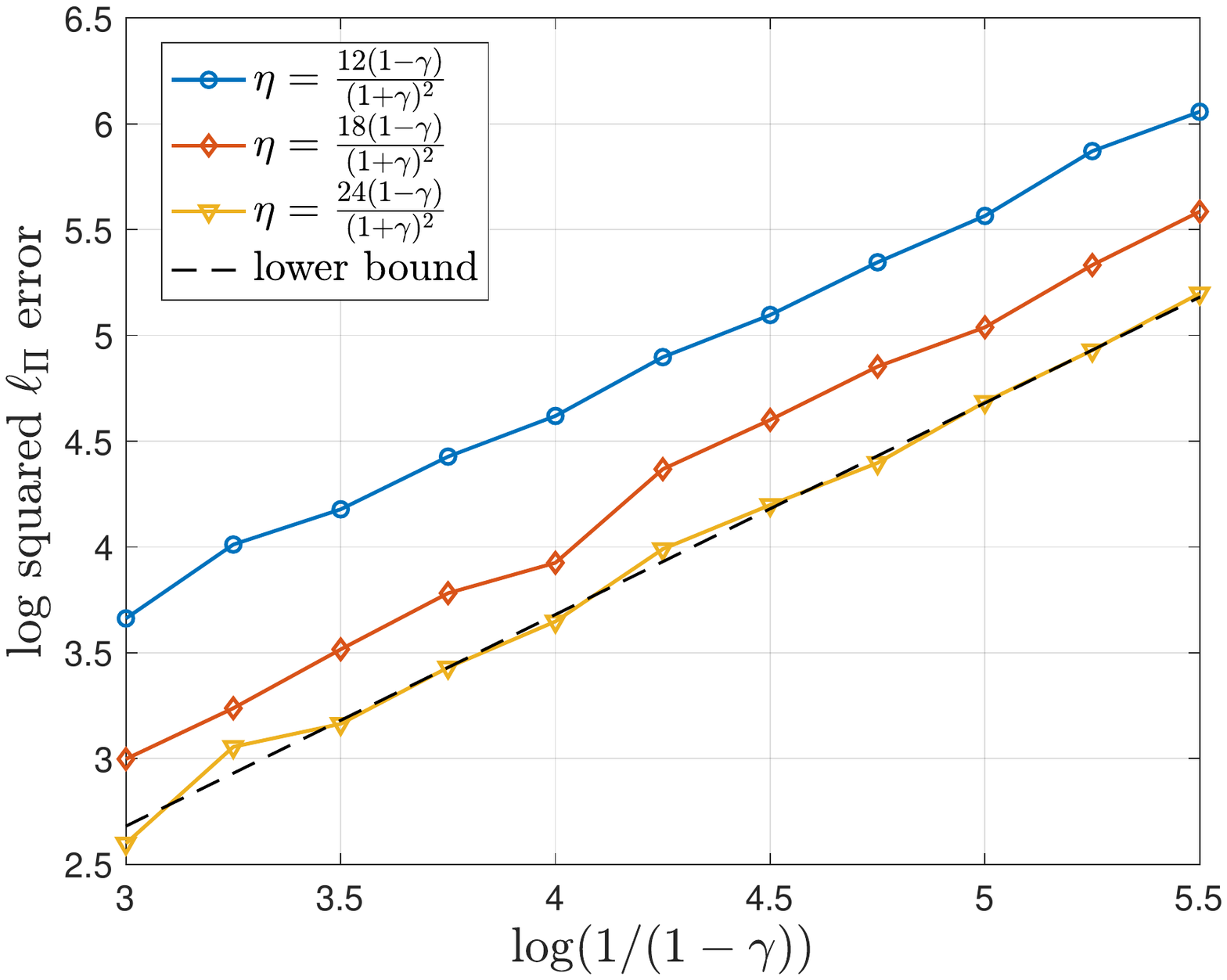}}
		\subfigure[VRFTD algorithm]{\label{fig:VRFTD}\includegraphics[width=6cm]{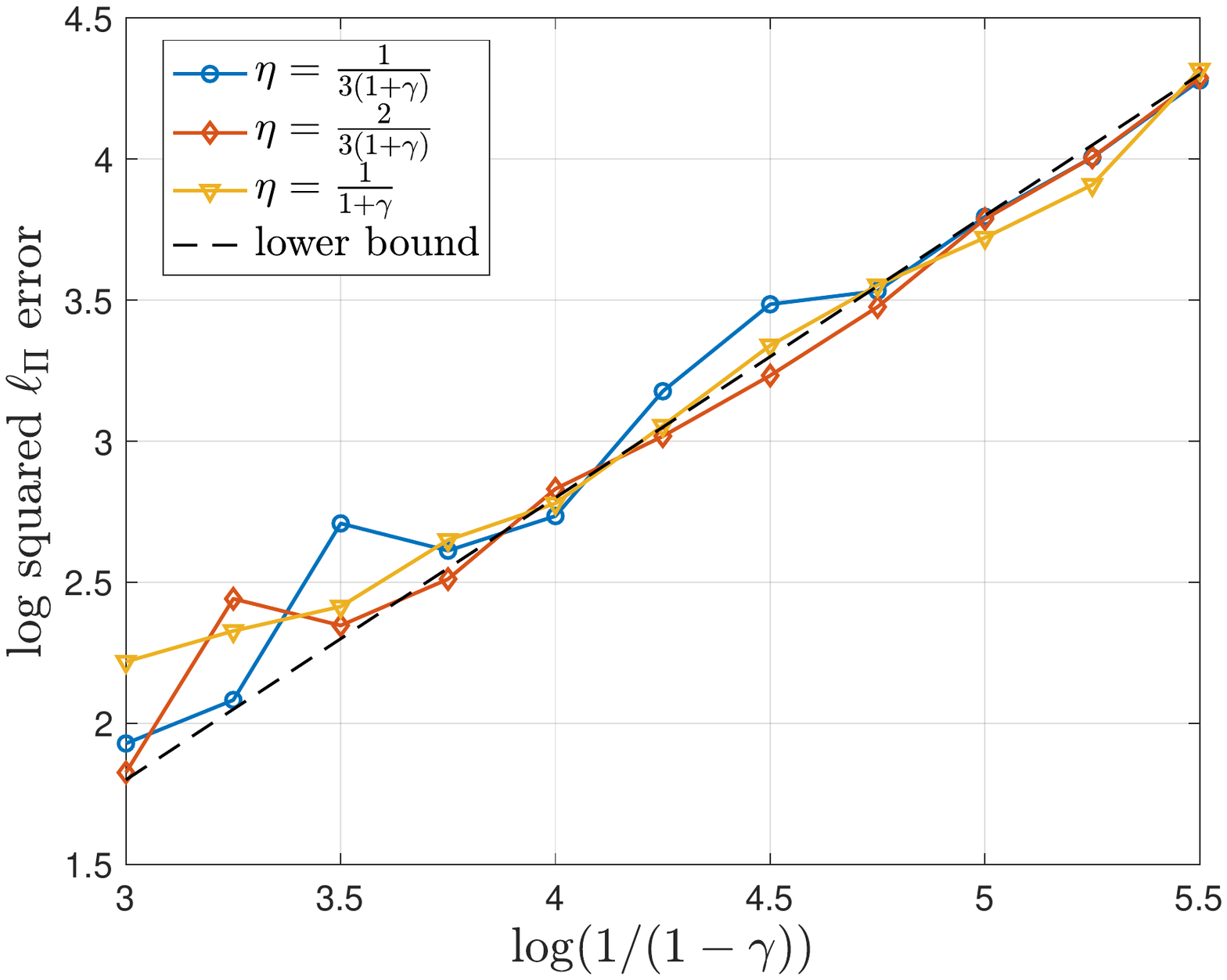}}
		\caption{ Log-log plots of the squared $\ell_\Pi$-norm error versus $1/(1-\gamma)$. Logarithms are to the natural base. The number of  samples used in each single experiment is $N = \lceil 5/(1-\gamma)^2 \rceil$. Each point in the plot is an average of $1000$ independent trials. The slope of the lower bound is 1.}
		\label{fig_iid_1}
	\end{figure}
	
	\paragraph{Instance-optimality} 
	We generate a range of MRPs with different values of discount factor $\gamma$ and run the four aforementioned algorithms on each MRP. In order to test the robustness of our results, we simulate various step-sizes for each algorithm. To be fair in our comparison, we also include a simulation of the best-tuned stepsize for each algorithm. We plot the prediction from the lower bound~\eqref{numerical_lower_bound} as well.
	
	
	
	From subplots (a) and (b) of Figure~\ref{fig_iid_1}, it is clear that the vanilla TD and FTD algorithms with diminishing stepsizes~\citep{kotsalis2020simple} do not achieve the lower bound calculated in equation~\eqref{numerical_lower_bound}. 
	On the other hand, sub-plots (c) and (d) show that the VRTD and VRFTD algorithms achieve the lower bound~\eqref{numerical_lower_bound}, and that these behaviors are robust to the choice of stepsize parameters. However, given their epoch-wise nature, the outputs of variance-reduced algorithms are more volatile than TD and FTD. Another interesting observation is that the accelerated algorithms---FTD and VRFTD---are less sensitive to stepsize parameters. Our next set of experiments explores this further.
	
	\black{
		\paragraph{Ablation analysis of VRFTD} Notice that VRFTD includes two new ingredients when compared with VRTD: mini-batching and operator extrapolation (OE). We now perform an ablation analysis to disentangle the contribution of both ingredients. We generate a range of MRPs with different values of $\gamma$ and run the experimental and control groups on each MRP.}
	\begin{figure}[H]
		\centering
		\subfigure[Ablation study for OE (I)]{\label{fig:OE}\includegraphics[width=5cm]{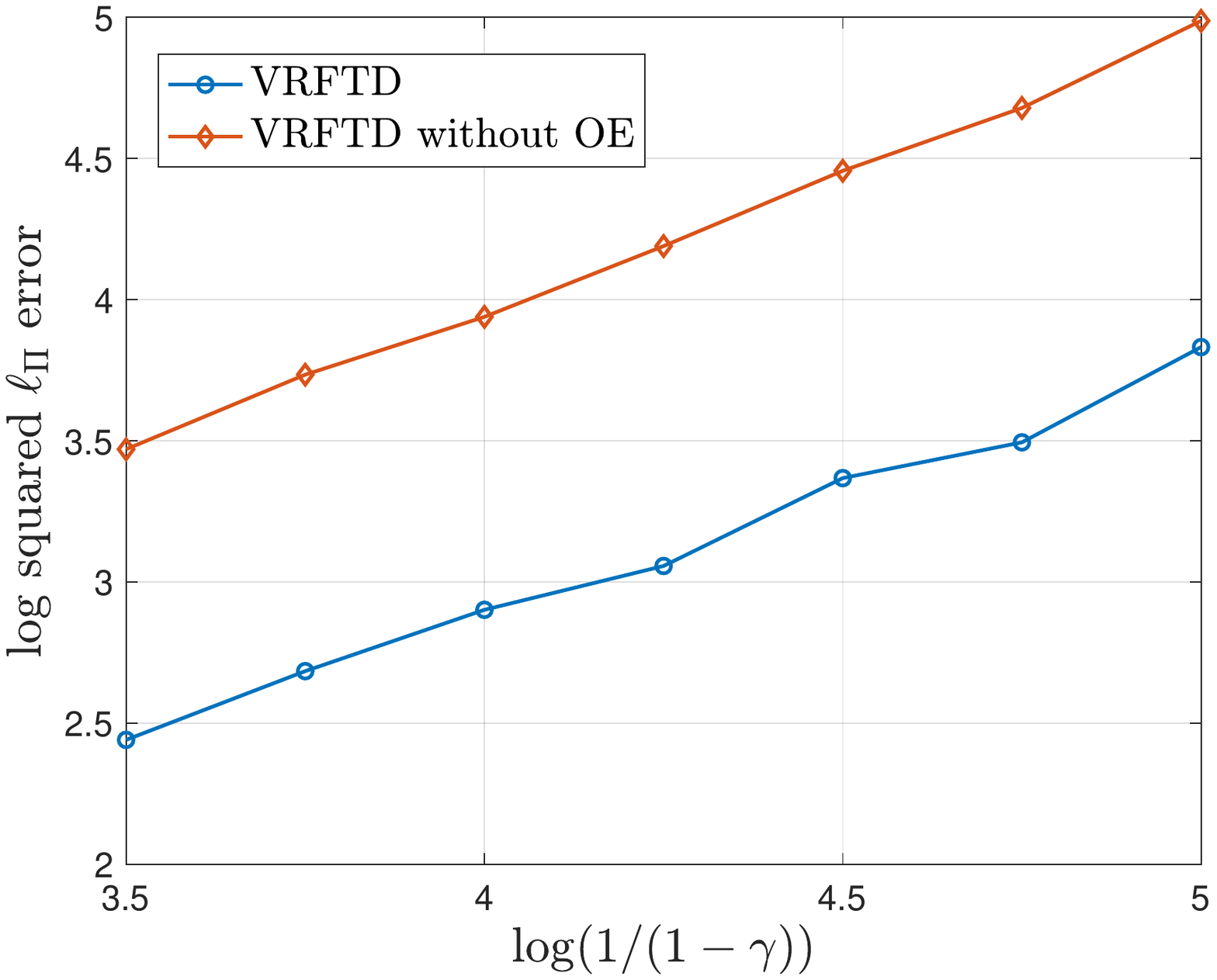}}
		\subfigure[Ablation study for OE (II)]{\label{fig:OE2}\includegraphics[width=5cm]{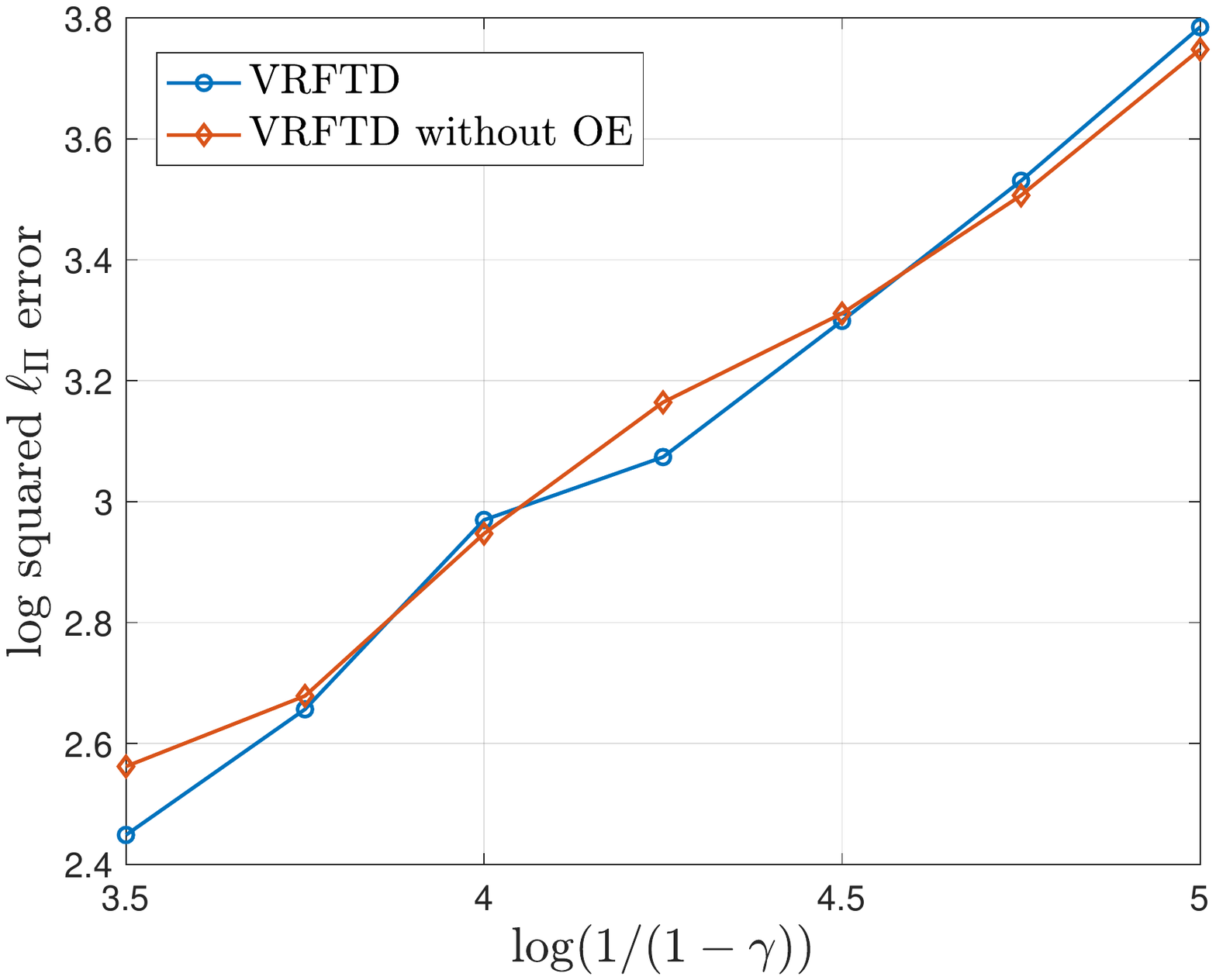}}
		\subfigure[Ablation study for mini-batching]{\label{fig:batch}\includegraphics[width=5cm]{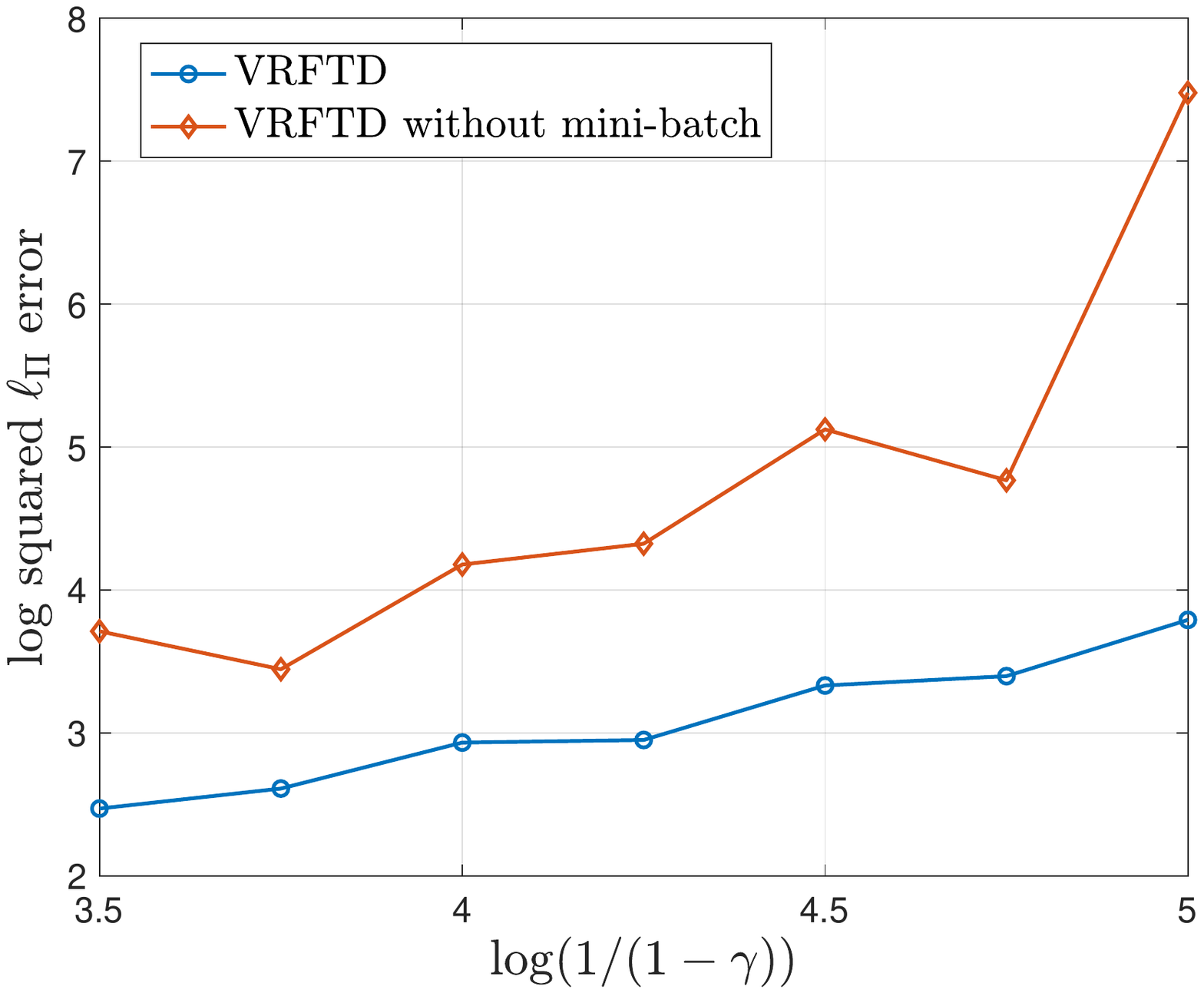}}
		\caption{ Log-log plots of the squared $\ell_\Pi$-norm error versus $1/(1-\gamma)$. Logarithms are to the natural base.
			The number of  samples used in each single experiment is $N = \lceil 5/(1-\gamma)^2 \rceil$. Each point in the plot is an average of $1000$ independent trials. }
		\label{fig_iid_0}
	\end{figure}
	\black{
		In the first experiments, we ran the experimental group with OE steps and the control group without OE steps.
		We first ran both groups with the stepsize policies suggested by the theoretical analysis (see subplot (a) of Figure~\ref{fig_iid_0}). The results indicate that the experimental group significantly outperforms the control group. However, this performance difference can largely be attributed to the conservative stepsize for the control group, as prescribed by the theory. To be more fair to both algorithms, we further fine-tuned the stepsize parameters of both algorithms and obtained subplot (b), where the two algorithms exhibit similar convergence rates. Taking stock, the first set of experiments shows that the analysis and stepsize policy of the VRFTD (with OE steps) serves as a better theoretical guideline for practical applications. }
	
	\black{
		To demonstrate the benefits of mini-batching in the inner loop, we ran a second experiment with two groups with and without mini-batching. Note that we keep the stepsizes and the total number of samples the same for both groups (which means that the control group without mini-batching has larger epoch lengths). From subplot (c), we can see that the performance difference is significant, showing that without mini-batching, the algorithm exhibits instability when run with aggressive stepsize policies.}

	\subsection{The Markovian setting: 2D Grid World}
	Our experiments under the Markovian noise model are conducted on the 2D Grid World environment. 
	This is a classical  problem in reinforcement learning with finite state and action spaces. An agent realizes a positive reward when reaching a predetermined goal and negative ones when going through ``traps". The dimension of the state space is set to be $D=400$, among which we assign a goal state (with reward $r=1$) and 30 traps (with reward $r=-0.2$). The transition kernel is fixed as follows: With probability $0.95$, the agent moves in a direction that points towards the goal and with probability $0.05$ in a random direction. Our goal is to compute the value function $v^*$---for each possible initial state of the agent. \black{We also incorporate linear function approximation in these experiments. Specifically, we generate random features with dimension $d=50$ to estimate the $D = 400$ dimensional value function in this problem.}
	
	
	\begin{figure}[H]
		\centering
		\begin{minipage}[t]{0.49\linewidth}
			\centering
			\includegraphics[width=6cm]{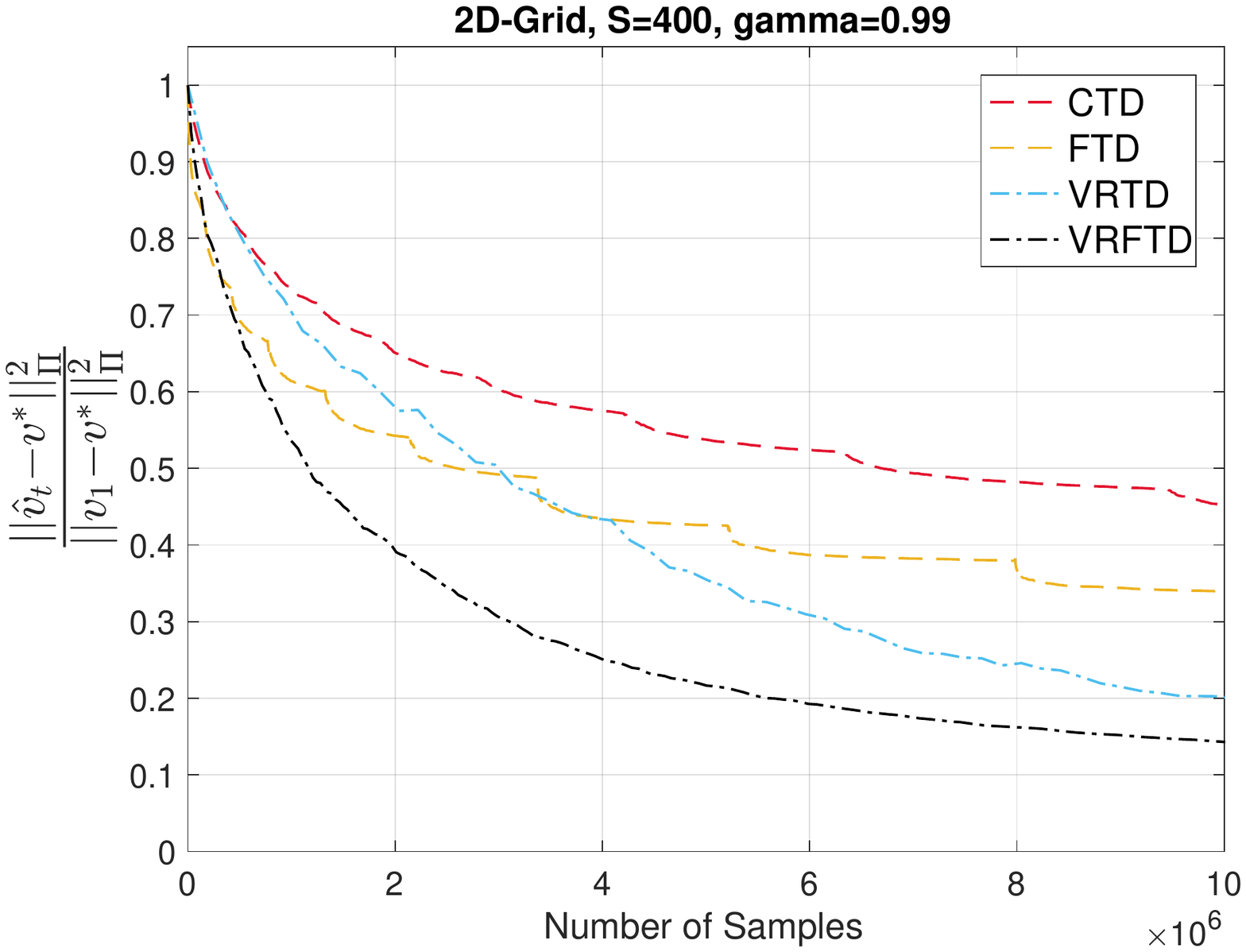}
		\end{minipage}
		\begin{minipage}[t]{0.49\linewidth}
			\centering
			\includegraphics[width=6cm]{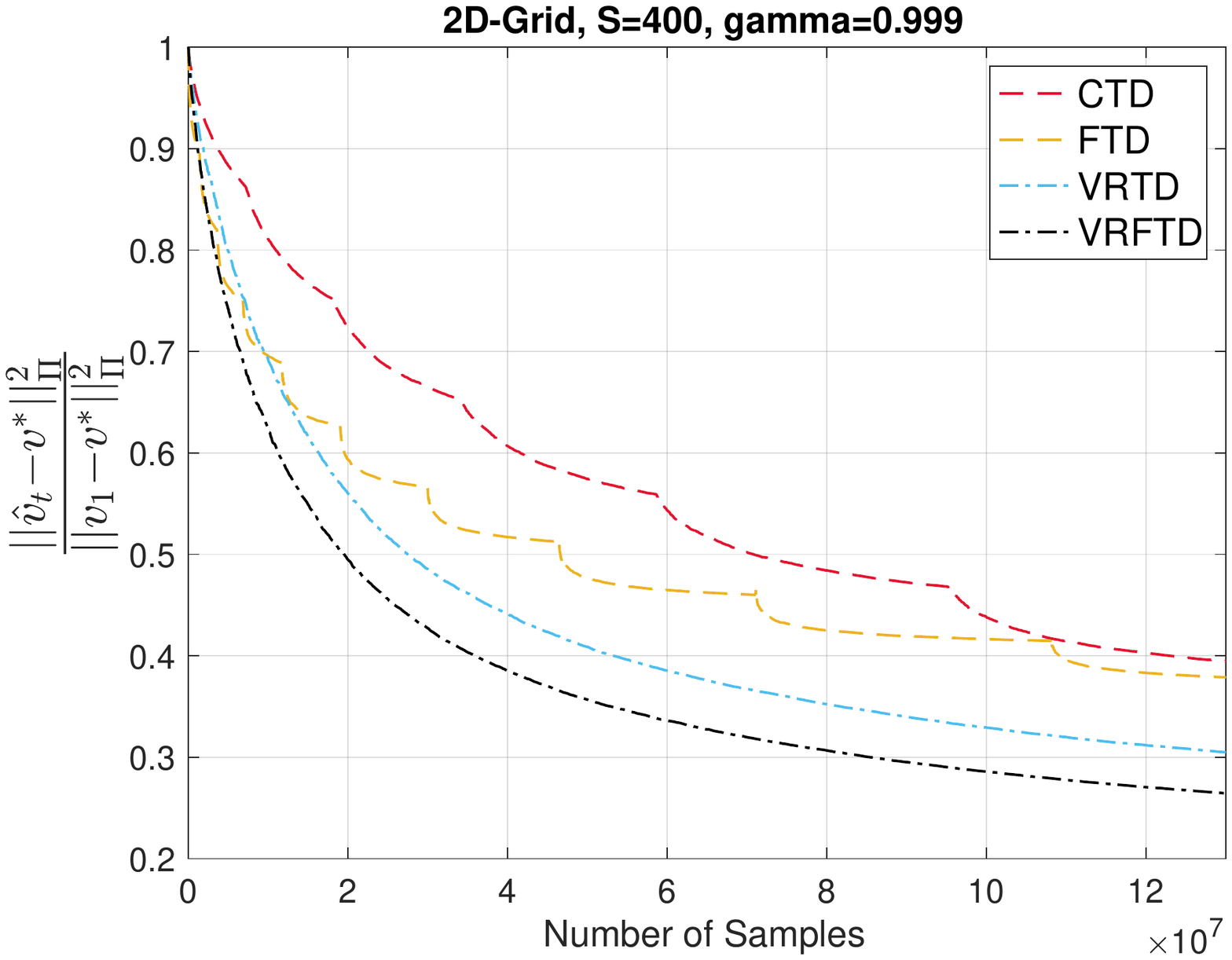}
		\end{minipage}
		\caption{Comparison of the algorithms for the 2D-Grid world example. From left to right $\gamma$ is set to $0.99$, and $0.999$  respectively. In the $y$-axis we report ratios in terms of the Euclidean norm $ \| \cdot \|_\Pi$. }
		\label{fig_2D_Grid_1}
	\end{figure}
	
	We test the performance of four algorithms, with the discount factor $\gamma$ set to $0.99$ and $0.999$. Figure~\ref{fig_2D_Grid_1} plots the normalized error in $\ell_\Pi$-norm against the length of the trajectory. In both experiments, the VRFTD algorithm exhibits the fastest convergence to the true value function, thereby corroborating our theoretical results. \black{Note that in both experiments, the estimation errors do not converge to zero, because there is a nontrivial error incurred by linear function approximation.} Another salient takeaway is the following: Closer to the optimal solution $v^*$, the variance-reduced algorithms (VRTD/VRFTD) achieve faster convergence rate compared to their counterparts that do not employ variance reduction.
	
	\section{Proofs}\label{sec_proof}
	This section contains full proofs of our main results. 
	\subsection{Preliminary supporting lemmas}
	We begin by establishing a few useful lemmas which are frequently used in the proof of the main theorems. All the lemmas are simple to state and prove, and provide the basis for several manipulations in our subsequent proofs.
	\begin{lemma}\label{lemma_pi_theta}
		For any $\theta\in \bbr^d$ and $v:=\Psi^\top \theta$, we have
		\begin{align}\label{eq_pi_theta}
			\|\theta\|_2 = \|\Phi^\top \theta\|_\Pi \quad \text{and} \quad \|v\|_\Pi= \|B^{\frac{1}{2}}\theta\|_2.
		\end{align}
	\end{lemma}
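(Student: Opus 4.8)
The plan is to reduce both identities to the two defining algebraic facts about $B$ and $\Phi$: namely that $B = \Psi\Pi\Psi^\top$ is the Gram matrix of the basis vectors in the $\langle\cdot,\cdot\rangle_\Pi$ inner product (so that $B_{i,j} = \langle\psi_i,\psi_j\rangle_\Pi$ as stated), and that the rows of $\Phi = B^{-1/2}\Psi$ form a $\Pi$-orthonormal system, i.e. $\Phi\Pi\Phi^\top = I_d$ (which is the matrix form of $\langle\phi_i,\phi_j\rangle_\Pi = \mathbb{I}(i=j)$). Both of these are already recorded in the excerpt, so no new work is needed there; the proof is then just a two-line expansion of each squared norm.

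First I would prove the second identity, since it is the most direct. For $v = \Psi^\top\theta$ we compute
\[
\|v\|_\Pi^2 = v^\top \Pi v = (\Psi^\top\theta)^\top \Pi (\Psi^\top\theta) = \theta^\top (\Psi\Pi\Psi^\top)\theta = \theta^\top B\theta = \|B^{1/2}\theta\|_2^2,
\]
where in the last step I use that $B$ is symmetric positive definite (as $\beta = \lambda_{\max}(B)$, $\mu = \lambda_{\min}(B)$ are well-defined and, by the linear independence of the $\psi_i$ together with $\Pi \succ 0$, strictly positive), so that $B^{1/2}$ exists and $\theta^\top B\theta = (B^{1/2}\theta)^\top(B^{1/2}\theta)$. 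Taking square roots gives $\|v\|_\Pi = \|B^{1/2}\theta\|_2$.

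For the first identity, note $\Phi^\top\theta = \Psi^\top B^{-1/2}\theta$, and expand:
\[
\|\Phi^\top\theta\|_\Pi^2 = \theta^\top \Phi\,\Pi\,\Phi^\top \theta = \theta^\top I_d \theta = \|\theta\|_2^2,
\]
using $\Phi\Pi\Phi^\top = I_d$. Taking square roots completes the proof. There is no real obstacle here: the only thing to be careful about is invoking the positive-definiteness of $B$ (hence existence of $B^{1/2}$ and $B^{-1/2}$), which follows from the standing assumption that $\psi_1,\dots,\psi_d$ are linearly independent and that $\pi$ has strictly positive entries; everything else is a direct substitution of the definitions of $v$, $B$, and $\Phi$.
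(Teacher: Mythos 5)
Your proposal is correct and follows essentially the same route as the paper, which simply cites $\Phi = B^{-1/2}\Psi$ and $\Phi\Pi\Phi^\top = I_d$ and leaves the expansion implicit; you have just written out the two one-line computations explicitly. The added remark on the positive-definiteness of $B$ (hence the existence of $B^{1/2}$ and $B^{-1/2}$) is a harmless and accurate piece of bookkeeping.
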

	\begin{proof}
		The desired result follows immediately from $\Phi = B^{-\frac{1}{2}}\Psi$ and $\Phi \Pi \Phi^\top = I_d$.
	\end{proof}
	
	\begin{lemma} \label{lem:transition-pi}
		For each vector $u\in \R^D$, we have
		\begin{align}\label{Pi_norm}
			\|Pu\|_\Pi \leq \|u\|_\Pi.
		\end{align}
	\end{lemma}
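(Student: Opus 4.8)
The statement $\|Pu\|_\Pi \le \|u\|_\Pi$ is a contraction (in fact non-expansiveness) property of the transition matrix $P$ with respect to the stationary-weighted norm. The natural route is to expand the squared norm $\|Pu\|_\Pi^2 = (Pu)^\top \Pi (Pu) = \sum_{i} \pi_i (Pu)_i^2$, write $(Pu)_i = \sum_j P_{i,j} u_j$, and apply Jensen's inequality (equivalently Cauchy--Schwarz) to the convex function $x \mapsto x^2$, using the fact that each row of $P$ is a probability distribution, i.e. $\sum_j P_{i,j} = 1$ with $P_{i,j} \ge 0$. This gives $(Pu)_i^2 = \big(\sum_j P_{i,j} u_j\big)^2 \le \sum_j P_{i,j} u_j^2$.

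**Key steps.** First I would write
$\|Pu\|_\Pi^2 = \sum_{i=1}^D \pi_i \Big(\sum_{j=1}^D P_{i,j} u_j\Big)^2.$
Next, apply Jensen/Cauchy--Schwarz rowwise to bound this by $\sum_{i} \pi_i \sum_{j} P_{i,j} u_j^2$. Then swap the order of summation to get $\sum_j \big(\sum_i \pi_i P_{i,j}\big) u_j^2$, and invoke the stationarity identity $\pi P = \pi$, which says precisely $\sum_i \pi_i P_{i,j} = \pi_j$ for every $j$. This collapses the bound to $\sum_j \pi_j u_j^2 = \|u\|_\Pi^2$, and taking square roots yields the claim.

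**Main obstacle.** There is essentially no obstacle here — this is a routine computation; the only "content" is remembering to use both that $P$ is row-stochastic (for the Jensen step) and that $\pi$ is stationary (for the summation-swap step). One could alternatively phrase it operator-theoretically: $\Pi^{1/2} P \Pi^{-1/2}$ is a stochastic-type operator whose $\ell_2\to\ell_2$ operator norm is at most $1$ because $\pi$ is stationary and reversibility is not needed; but the elementary Jensen argument is cleanest and I would present that. If one wants equality conditions, they hold iff $u$ is constant on each "support class" in a suitable sense, but the lemma only needs the inequality, so I would not belabor this.
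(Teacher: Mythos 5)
Your proposal is correct and follows exactly the same route as the paper's proof: expand $\|Pu\|_\Pi^2$, apply Jensen's inequality rowwise using the row-stochasticity of $P$, swap the order of summation, and invoke the stationarity identity $\pi P = \pi$. No gaps; nothing further to add.
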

	\begin{proof}
		Based on the definition of $\|\cdot\|_\Pi$, we have 
		\begin{align*}
			\|Pu\|_\Pi^2 = \tsum_{i=1}^D \pi_i (Pu)_{(i)}^2 \overset{(i)}\leq \tsum_{i=1}^D \pi_i(\tsum_{j=1}^D P_{i,j}u_{(j)}^2)=\tsum_{j=1}^D(\tsum_{i=1}^D \pi_i P_{i,j})u_{(j)}^2\overset{(ii)}
			= \|u\|_\Pi^2,
		\end{align*}
		where step (i) follows from Jensen's inequality and step (ii) follows from the definition of the stationary distribution.
	\end{proof}
	
	\begin{lemma}\label{lemma_strong_monotone}
		For any $\theta,\theta'\in\bbr^d$, we have
		\begin{align}\label{strong_monotone}
			\langle g(\theta) - g(\theta'),\theta-\theta'\rangle \geq (1-\gamma)\|v-v'\|_\Pi^2.
		\end{align}
	\end{lemma}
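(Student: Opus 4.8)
The plan is to reduce everything to a statement about the symmetric bilinear form associated with $\Pi(I-\gamma P)$ acting on the value-function difference $u := v - v'$, and then invoke Lemma~\ref{lem:transition-pi}. First I would use the definition~\eqref{deterministic_opt} of the deterministic operator to compute the difference
\begin{align*}
	g(\theta) - g(\theta') = \Psi\Pi\big(\Psi^\top\theta - \gamma P\Psi^\top\theta\big) - \Psi\Pi\big(\Psi^\top\theta' - \gamma P\Psi^\top\theta'\big) = \Psi\Pi(I-\gamma P)(v-v'),
\end{align*}
where the reward term $r$ cancels. Here it is important that the operator is affine in $\theta$, so only the linear part survives.

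Next I would take the inner product with $\theta-\theta'$ and use $\Psi^\top(\theta-\theta') = v-v'$ to pass from $\mathbb{R}^d$ to $\mathbb{R}^D$:
\begin{align*}
	\langle g(\theta)-g(\theta'), \theta-\theta'\rangle = (\theta-\theta')^\top\Psi\Pi(I-\gamma P)(v-v') = (v-v')^\top\Pi(I-\gamma P)(v-v').
\end{align*}
Writing $u = v-v'$ and expanding, this equals $u^\top\Pi u - \gamma\, u^\top\Pi P u = \|u\|_\Pi^2 - \gamma\langle u, Pu\rangle_\Pi$. So the claim~\eqref{strong_monotone} is equivalent to $\|u\|_\Pi^2 - \gamma\langle u, Pu\rangle_\Pi \ge (1-\gamma)\|u\|_\Pi^2$, i.e.\ to $\langle u, Pu\rangle_\Pi \le \|u\|_\Pi^2$ (after dividing by $\gamma>0$).

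Finally I would close the argument by Cauchy--Schwarz in the $\|\cdot\|_\Pi$-inner product together with the contraction bound from Lemma~\ref{lem:transition-pi}:
\begin{align*}
	\langle u, Pu\rangle_\Pi \le \|u\|_\Pi\,\|Pu\|_\Pi \le \|u\|_\Pi^2.
\end{align*}
There is no real obstacle here; the only point requiring a little care is the bookkeeping in the second step—correctly contracting $\Psi^\top(\theta-\theta')$ into $v-v'$ and recognizing that $\Pi P$ is exactly the operator controlled by Lemma~\ref{lem:transition-pi}. One could alternatively phrase the last step as noting that $I - \gamma P$ has "$\Pi$-numerical radius" lower bound $1-\gamma$, but the Cauchy--Schwarz route is the cleanest.
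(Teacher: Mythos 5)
Your proposal is correct and follows essentially the same route as the paper: rewrite the inner product as $\langle (I-\gamma P)u, u\rangle_\Pi$ with $u = v - v'$, then bound the cross term $\gamma\langle u, Pu\rangle_\Pi$ by Cauchy--Schwarz together with the non-expansiveness of $P$ in $\|\cdot\|_\Pi$ (Lemma~\ref{lem:transition-pi}). The bookkeeping contracting $\Psi^\top(\theta-\theta')$ into $v-v'$ is handled correctly.
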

	\begin{proof}
		By the definition of $g(\cdot)$, we have 
		\begin{align*}
			\langle g(\theta) - g(\theta'),\theta-\theta'\rangle &= \langle \Psi \Pi (I-\gamma P)\Psi^\top (\theta-\theta'),\theta-\theta'\rangle\\
			&= \langle  (I-\gamma P)\Psi^\top (\theta-\theta'),\Psi^\top(\theta-\theta')\rangle_\Pi\\
			&\overset{(i)}\geq \|v-v'\|_\Pi^2 - \gamma \|P(v-v')\|_\Pi\|v-v'\|_\Pi\\
			&\overset{(ii)}\geq (1-\gamma)\|v-v'\|_\Pi^2 ,
		\end{align*}
		where step (i) follows from the Cauchy--Schwarz inequality, and step (ii) follows from Lemma \ref{lem:transition-pi}. 
	\end{proof}
	
	\begin{lemma}\label{lemma_lipschitz}
		For any $\theta,\theta'\in\bbr^d$, we have
		\begin{align}\label{lipschitz}
			\| g(\theta) - g(\theta')\|_2 \leq(1+\gamma)\sqrt{\beta} \|v-v'\|_\Pi \leq (1+\gamma)\beta \|\theta-\theta'\|_2.
		\end{align}
	\end{lemma}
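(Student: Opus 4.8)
The plan is to unwind the definition of $g$ and bound the two factors that appear. Recall from Eq.~\eqref{deterministic_opt} that $g(\theta) = \Psi\Pi(I-\gamma P)\Psi^\top\theta - \Psi\Pi r$, so that $g(\theta) - g(\theta') = \Psi\Pi(I-\gamma P)\Psi^\top(\theta-\theta')$. Writing $v = \Psi^\top\theta$ and $v' = \Psi^\top\theta'$, the first step is to peel off one copy of $\Psi$ by passing to the $\ell_2$-adjoint: for any vector $w\in\bbr^D$ we have $\|\Psi\Pi w\|_2 = \|\Psi\Pi^{1/2}(\Pi^{1/2}w)\|_2 \le \|\Psi\Pi^{1/2}\|_2\,\|w\|_\Pi$, and since $\Psi\Pi\Psi^\top = B$ we get $\|\Psi\Pi^{1/2}\|_2 = \sqrt{\lambda_{\max}(B)} = \sqrt{\beta}$. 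Applying this with $w = (I-\gamma P)(v-v')$ gives $\|g(\theta)-g(\theta')\|_2 \le \sqrt{\beta}\,\|(I-\gamma P)(v-v')\|_\Pi$.

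The second step is to bound $\|(I-\gamma P)(v-v')\|_\Pi$ by $(1+\gamma)\|v-v'\|_\Pi$, which follows from the triangle inequality together with Lemma~\ref{lem:transition-pi}: $\|(I-\gamma P)(v-v')\|_\Pi \le \|v-v'\|_\Pi + \gamma\|P(v-v')\|_\Pi \le (1+\gamma)\|v-v'\|_\Pi$. Chaining the two steps establishes the first inequality $\|g(\theta)-g(\theta')\|_2 \le (1+\gamma)\sqrt{\beta}\,\|v-v'\|_\Pi$.

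For the second inequality, I would invoke Lemma~\ref{lemma_pi_theta}, which gives $\|v-v'\|_\Pi = \|B^{1/2}(\theta-\theta')\|_2 \le \sqrt{\lambda_{\max}(B)}\,\|\theta-\theta'\|_2 = \sqrt{\beta}\,\|\theta-\theta'\|_2$. Substituting this into the bound just obtained yields $\|g(\theta)-g(\theta')\|_2 \le (1+\gamma)\beta\,\|\theta-\theta'\|_2$, completing the proof. There is no real obstacle here: the only mildly delicate point is correctly tracking that the relevant operator norm is $\|\Psi\Pi^{1/2}\|_2 = \sqrt{\beta}$ rather than $\|\Psi\|_2$ or $\beta$ itself, which is exactly what the orthonormalization construction (the matrix $B$ and $\Phi = B^{-1/2}\Psi$) was set up to make transparent.
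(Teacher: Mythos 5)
Your proof is correct. It takes a slightly different, and in fact cleaner, route than the paper's. The paper writes $\Psi = B^{1/2}\Phi$, pulls out $\|B^{1/2}\|_2 \le \sqrt{\beta}$, converts the remaining $\ell_2$-norm to a $\Pi$-norm via Lemma~\ref{lemma_pi_theta}, and is then left with the \emph{projected} operator $I - \gamma\Pi_\bbs P$ acting on $v-v'$, so it needs both the identity $\Phi\Pi\Phi^\top = I_d$ and the non-expansiveness of the projection $\Pi_\bbs$ before invoking Lemma~\ref{lem:transition-pi}. You instead factor $\Psi\Pi = (\Psi\Pi^{1/2})\Pi^{1/2}$ and use $\|\Psi\Pi^{1/2}\|_2^2 = \lambda_{\max}(\Psi\Pi\Psi^\top) = \lambda_{\max}(B) = \beta$, which lands you directly on $\sqrt{\beta}\,\|(I-\gamma P)(v-v')\|_\Pi$ with the \emph{unprojected} operator; the $(1+\gamma)$ factor then follows from the triangle inequality and Lemma~\ref{lem:transition-pi} alone. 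Both arguments rest on the same two facts (the feature map has operator norm $\sqrt{\beta}$ in the right geometry, and $I-\gamma P$ is $(1+\gamma)$-Lipschitz in $\|\cdot\|_\Pi$), but your factorization bypasses the orthonormal basis $\Phi$ and the projection entirely. The final step, $\|v-v'\|_\Pi = \|B^{1/2}(\theta-\theta')\|_2 \le \sqrt{\beta}\|\theta-\theta'\|_2$, is identical to the paper's.
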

	\begin{proof}
		Recall the definition of $\beta$ in Eq.~\eqref{beta_mu}. By the definition of $g(\cdot)$, we have
		\begin{align}\label{lipschitz_proof_1}
			\| g(\theta) - g(\theta')\|_2 &=\|( \Psi \Pi \Psi^\top -\gamma\Psi\Pi P \Psi^\top )(\theta-\theta')\|_2\nn\\
			&\overset{(i)}\leq \sqrt{\beta}\|( \Phi \Pi \Psi^\top -\gamma\Phi\Pi P \Psi^\top )(\theta-\theta')\|_2\nn\\
			& \overset{(ii)}= \sqrt{\beta}\|(\Phi^\top \Phi \Pi  -\gamma\Phi^\top \Phi\Pi P  )\Psi^\top(\theta-\theta')\|_\Pi
		\end{align}
		where step (i) follows from $\Psi = B^{\frac{1}{2}}\Phi$ and $\|B^{\frac{1}{2}}\|_2\leq \sqrt{\beta}$, and step (ii) follows from Lemma \ref{lemma_pi_theta}. Invoking the relationship $\Phi \Pi \Phi^\top = I_d$ gives us
		\begin{align}\label{lipschitz_proof_2}
			\|(\Phi^\top \Phi \Pi  -\gamma\Phi^\top \Phi\Pi P  )\Psi^\top(\theta-\theta')\|_\Pi=\|(I-\gamma\Pi_\bbs P )\Psi^\top(\theta-\theta')\|_\Pi\overset{(i)}\leq(1+\gamma) \|v-v'\|_\Pi,
		\end{align}
		where step (i) follows from $\|\Pi_\bbs P \Psi^\top(\theta-\theta')\|_\Pi\leq \|P \Psi^\top(\theta-\theta')\|_\Pi \leq \|\Psi^\top(\theta-\theta')\|_\Pi.$ 
		
		Substituting Ineq.~\eqref{lipschitz_proof_2} into~\eqref{lipschitz_proof_1} yields the first inequality in the chain~\eqref{lipschitz}.
		The second inequality in the chain~\eqref{lipschitz} follows from the fact that 
		\begin{align*}
			\|\Psi(\theta-\theta')\|_\Pi  = \|B^{\frac{1}{2}}(\theta-\theta')\|_2 \leq \sqrt{\beta} \|\theta-\theta'\|_2,
		\end{align*}
		which completes the proof.
	\end{proof}

	\begin{lemma}\label{lemma_inverse}
		For each vector $u\in \R^D$, we have
		\begin{align}\label{variance_1}
			\|(I-\gamma P)^{-1} u \|_\Pi \leq \tfrac{1}{1-\gamma} \|u\|_\Pi.
		\end{align}
	\end{lemma}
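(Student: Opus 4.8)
The plan is to avoid expanding any series and instead work directly with the defining equation. First I would note that $I-\gamma P$ is invertible: if $(I-\gamma P)w = 0$ for some $w \in \R^D$, then $w = \gamma P w$, so $\|w\|_\Pi = \gamma\|Pw\|_\Pi \le \gamma \|w\|_\Pi$ by Lemma~\ref{lem:transition-pi}, which forces $w = 0$ since $\gamma < 1$; invertibility then follows because we are in finite dimensions. Hence $(I-\gamma P)^{-1}u$ is well-defined for every $u$.

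Next, set $w := (I-\gamma P)^{-1} u$, so that $u = w - \gamma P w$. Applying the triangle inequality for the $\|\cdot\|_\Pi$-norm and then Lemma~\ref{lem:transition-pi},
\begin{align*}
\|u\|_\Pi \;=\; \|w - \gamma P w\|_\Pi \;\ge\; \|w\|_\Pi - \gamma \|P w\|_\Pi \;\ge\; \|w\|_\Pi - \gamma \|w\|_\Pi \;=\; (1-\gamma)\|w\|_\Pi.
\end{align*}
Dividing by $1-\gamma > 0$ and recalling $w = (I-\gamma P)^{-1}u$ gives exactly $\|(I-\gamma P)^{-1}u\|_\Pi \le \tfrac{1}{1-\gamma}\|u\|_\Pi$, which is the claim.

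There is essentially no obstacle here: the only ingredient beyond elementary norm manipulations is Lemma~\ref{lem:transition-pi} (the $\Pi$-stochastic contraction property of $P$), which is already available, and the invertibility of $I-\gamma P$, which follows from the same contraction property together with finite-dimensionality. An alternative route is to write $(I-\gamma P)^{-1} = \sum_{t=0}^\infty \gamma^t P^t$, justify convergence in the $\|\cdot\|_\Pi$-operator norm via Lemma~\ref{lem:transition-pi}, and bound $\|\sum_t \gamma^t P^t u\|_\Pi \le \sum_t \gamma^t \|u\|_\Pi = \tfrac{1}{1-\gamma}\|u\|_\Pi$; I would prefer the first, shorter argument since it sidesteps any convergence bookkeeping.
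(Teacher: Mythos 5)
Your proposal is correct and is essentially the paper's own argument: write $u = (I-\gamma P)w$ for $w = (I-\gamma P)^{-1}u$, apply the triangle inequality, and invoke Lemma~\ref{lem:transition-pi}. The extra remark on invertibility is a harmless (and welcome) addition the paper leaves implicit.
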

	\begin{proof}
		Applying the triangle inequality, we have
		\begin{align*}
			\|u\|_\Pi &= \|(I-\gamma P)(I-\gamma P)^{-1}u\|_\Pi \\
			&\geq \|(I-\gamma P)^{-1}u\|_\Pi - \gamma \|P(I-\gamma P)^{-1}u\|_\Pi\\
			&\overset{(i)}\geq (1-\gamma) \|(I-\gamma P)^{-1}u\|_\Pi .
		\end{align*}
		Here, step (i) follows from Lemma~\ref{lem:transition-pi}. For an alternative proof, note that $(1 - \gamma) (I - \gamma P)^{-1}$ is a Markov transition matrix with stationary distribution $\pi$, and the desired result follows from Lemma~\ref{lem:transition-pi}.
	\end{proof}
	
	\begin{lemma}\label{lemma_trans_0}
		For any vector $u\in \bbr^D$, we have
		\begin{align}\label{lemma_trans}
			(I-\Pi_\bbs \gamma P)^{-1} \Pi_\bbs u = \Phi^\top (I_d-M)^{-1}\Phi \Pi u.
		\end{align}
	\end{lemma}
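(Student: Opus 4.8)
The plan is to verify the identity by the standard ``multiply through and check uniqueness'' device, using the orthonormality relation $\Phi\Pi\Phi^\top = I_d$ as the main algebraic tool.

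First I would record the matrix form of the projection $\Pi_{\mathbb{S}}$. Since $\phi_1,\dots,\phi_d$ are orthonormal in the $\Pi$-inner product (by construction $\langle\phi_i,\phi_j\rangle_\Pi=\mathbb{I}(i=j)$) and span $\mathbb{S}$, the $\|\cdot\|_\Pi$-projection of any $w\in\mathbb{R}^D$ onto $\mathbb{S}$ is $\sum_{i}\langle\phi_i,w\rangle_\Pi\,\phi_i=\Phi^\top\Phi\Pi w$; hence $\Pi_{\mathbb{S}}=\Phi^\top\Phi\Pi$ and $\gamma\Pi_{\mathbb{S}}P=\gamma\Phi^\top\Phi\Pi P$. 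Next I would note that $I-\gamma\Pi_{\mathbb{S}}P$ is invertible: $\Pi_{\mathbb{S}}$ is self-adjoint with respect to $\langle\cdot,\cdot\rangle_\Pi$, hence a $\Pi$-orthogonal projection with $\|\Pi_{\mathbb{S}}\|_\Pi=1$, while $\|P\|_\Pi\le 1$ by Lemma~\ref{lem:transition-pi}, so $\|\gamma\Pi_{\mathbb{S}}P\|_\Pi\le\gamma<1$ and the Neumann series converges. (Invertibility of $I_d-M$, which appears on the right-hand side, follows by the analogous estimate $\|M\|_2\le\gamma$, and is in any case presupposed wherever $(I_d-M)^{-1}$ is used in the paper.)

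The key computation is then the one-line identity
\[
(I-\gamma\Pi_{\mathbb{S}}P)\,\Phi^\top \;=\; \Phi^\top-\gamma\Phi^\top\Phi\Pi P\Phi^\top \;=\; \Phi^\top\bigl(I_d-\gamma\Phi\Pi P\Phi^\top\bigr)\;=\;\Phi^\top(I_d-M),
\]
using only the definition $M=\gamma\Phi\Pi P\Phi^\top$. Applying both sides to the vector $(I_d-M)^{-1}\Phi\Pi u$ gives
\[
(I-\gamma\Pi_{\mathbb{S}}P)\,\Phi^\top(I_d-M)^{-1}\Phi\Pi u \;=\; \Phi^\top(I_d-M)(I_d-M)^{-1}\Phi\Pi u\;=\;\Phi^\top\Phi\Pi u\;=\;\Pi_{\mathbb{S}}u.
\]
Since $I-\gamma\Pi_{\mathbb{S}}P$ is invertible, left-multiplying by its inverse yields $\Phi^\top(I_d-M)^{-1}\Phi\Pi u=(I-\gamma\Pi_{\mathbb{S}}P)^{-1}\Pi_{\mathbb{S}}u$, which is the claimed equality.

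There is no substantive obstacle here; the only points requiring a little care are the explicit form $\Pi_{\mathbb{S}}=\Phi^\top\Phi\Pi$ and the invertibility of $I-\gamma\Pi_{\mathbb{S}}P$, after which the lemma reduces to the displayed algebraic manipulation.
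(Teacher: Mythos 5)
Your proof is correct and rests on the same ingredients as the paper's: the representation $\Pi_{\mathbb{S}}=\Phi^\top\Phi\Pi$, the orthonormality $\Phi\Pi\Phi^\top=I_d$, and the definition of $M$. The only difference is direction — the paper derives the formula by setting $z=(I-\gamma\Pi_{\mathbb{S}}P)^{-1}\Pi_{\mathbb{S}}u$ and solving the resulting $d$-dimensional fixed-point equation, whereas you verify the candidate expression by multiplying through by $I-\gamma\Pi_{\mathbb{S}}P$ (and you explicitly justify its invertibility, which the paper leaves implicit).
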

	\begin{proof}
		Note that $z:=(I-\Pi_\bbs \gamma P)^{-1} \Pi_\bbs u \in \bbs$ since $z =\Pi_\bbs \gamma P z + \Pi_\bbs u $. Also, we have $z = \Pi_\bbs z = \Phi^\top \Phi \Pi z$. Consequently,
		$$
		\Phi \Pi z = \gamma\Phi \Pi P z + \Phi \Pi u = \Phi\gamma \Pi P \Phi^\top  \Phi\Pi z + \Phi \Pi u = M \Phi \Pi z + \Phi \Pi u.
		$$
		Rearranging yields $\Phi \Pi z = (I_d - M)^{-1} \Phi \Pi u$ and putting together the pieces, we have $z = \Phi^\top \Phi \Pi z = \Phi^\top(I_d-M)^{-1}\Phi\Pi u$.
	\end{proof}
	
	\begin{lemma}
		For each vector $\theta\in \R^d$, we have
		\begin{align}\label{variance_2}
			\|(I_d-M)^{-1} \theta \|_2 \leq \tfrac{1}{1-\gamma} \|\theta\|_2.
		\end{align}
	\end{lemma}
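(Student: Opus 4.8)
The plan is to transport the claimed $\ell_2$ inequality on $\R^d$ into an $\ell_\Pi$ inequality on the subspace $\bbs$, where it follows from the same triangle-inequality argument that proves Lemma~\ref{lemma_inverse}. Fix $\theta\in\R^d$, set $w:=(I_d-M)^{-1}\theta\in\R^d$, and let $v:=\Phi^\top w\in\bbs$. By Lemma~\ref{lemma_pi_theta} we have $\|w\|_2=\|\Phi^\top w\|_\Pi=\|v\|_\Pi$ and $\|\theta\|_2=\|\Phi^\top\theta\|_\Pi$, so it suffices to prove $\|\Phi^\top\theta\|_\Pi\ge(1-\gamma)\|v\|_\Pi$; the inequality $\|\theta\|_2\ge(1-\gamma)\|w\|_2$ then rearranges into the claim.

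Next I would rewrite $\Phi^\top\theta$ in terms of $v$. Since $\theta=(I_d-M)w$ with $M=\gamma\Phi\Pi P\Phi^\top$, we get $\Phi^\top\theta=\Phi^\top w-\gamma\,\Phi^\top\Phi\Pi P\Phi^\top w$. Because $\phi_1,\dots,\phi_d$ are orthonormal under $\langle\cdot,\cdot\rangle_\Pi$, the matrix $\Phi^\top\Phi\Pi$ is exactly the $\ell_\Pi$-orthogonal projector $\Pi_\bbs$ onto $\bbs$ (this is the identity already used in the proof of Lemma~\ref{lemma_lipschitz}), so $\Phi^\top\theta=v-\gamma\Pi_\bbs Pv=(I-\gamma\Pi_\bbs P)v$.

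Finally, the triangle inequality gives $\|\Phi^\top\theta\|_\Pi\ge\|v\|_\Pi-\gamma\|\Pi_\bbs Pv\|_\Pi$, and since the orthogonal projection $\Pi_\bbs$ is non-expansive in $\|\cdot\|_\Pi$ and $\|Pv\|_\Pi\le\|v\|_\Pi$ by Lemma~\ref{lem:transition-pi}, we obtain $\|\Phi^\top\theta\|_\Pi\ge\|v\|_\Pi-\gamma\|v\|_\Pi=(1-\gamma)\|v\|_\Pi$, which finishes the argument. An equivalent route is to take $u:=\Phi^\top\theta$, note $\Phi\Pi u=\theta$ since $\Phi\Pi\Phi^\top=I_d$, apply Lemma~\ref{lemma_trans_0} to write $\Phi^\top(I_d-M)^{-1}\theta=(I-\gamma\Pi_\bbs P)^{-1}\Pi_\bbs u$, and run the identical estimate on $z:=(I-\gamma\Pi_\bbs P)^{-1}\Pi_\bbs u$. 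There is no genuine obstacle here; the only things to watch are keeping straight which norm each object lives in ($\ell_2$ on $\R^d$ versus $\ell_\Pi$ on $\R^D$) and correctly identifying $\Phi^\top\Phi\Pi$ with the projector $\Pi_\bbs$.
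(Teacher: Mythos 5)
Your proof is correct and follows essentially the same route as the paper: the core estimate is the identical triangle inequality combined with the non-expansiveness of $\Pi_\bbs$ and Lemma~\ref{lem:transition-pi}. The only (cosmetic) difference is that you work forward with $\theta=(I_d-M)w$ and the identity $\Phi^\top\Phi\Pi=\Pi_\bbs$, whereas the paper first invokes Lemma~\ref{lemma_trans_0} to rewrite $(I_d-M)^{-1}\theta$ as $(I-\Pi_\bbs\gamma P)^{-1}\Phi^\top\theta$ --- the alternative route you sketch at the end is exactly the paper's argument.
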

	\begin{proof}
		Invoking Lemma~\ref{lemma_trans_0}, we have
		\begin{align*}
			\|(I_d-M)^{-1}\theta\|_2 & = \|\Phi^\top(I_d-M)^{-1}\theta\|_\Pi\\
			& = \|\Phi^\top(I_d-M)^{-1}\Phi \Pi \Phi^\top \theta\|_\Pi\\
			&\overset{(i)}=\|(I-\Pi_\bbs \gamma P)^{-1} \Pi_\bbs \Phi^\top \theta\|_\Pi\\
			&=\|(I-\Pi_\bbs \gamma P)^{-1} \Phi^\top \theta\|_\Pi,
		\end{align*}
		where step (i) follows from Lemma~\ref{lemma_trans_0}. At the same time, we have
		\begin{align*}
			\|\theta\|_2 = \|\Phi^\top \theta\|_\Pi &= \|(I-\Pi_\bbs \gamma P)(I-\Pi_\bbs \gamma P)^{-1}\Phi^\top\theta\|_\Pi\\
			&\geq \|(I-\Pi_\bbs \gamma P)^{-1}\Phi^\top\theta\|_\Pi - \|\Pi_\bbs \gamma P(I-\Pi_\bbs \gamma P)^{-1}\Phi^\top\theta\|_\Pi\\
			&\overset{(i)}\geq \|(I-\Pi_\bbs \gamma P)^{-1}\Phi^\top\theta\|_\Pi - \gamma\|P(I-\Pi_\bbs \gamma P)^{-1}\Phi^\top\theta\|_\Pi\\
			&\overset{(ii)}\geq (1-\gamma)\|(I-\Pi_\bbs \gamma P)^{-1}\Phi^\top\theta\|_\Pi,
		\end{align*}
		where step (i) follows from the non-expansiveness of the projection, and
		step (ii) follows from Lemma~\ref{lem:transition-pi}.  
		Putting together the two pieces yields the result.
	\end{proof}
	
	\subsection{Proof of Theorem~\ref{theorem_lower_bound}}\label{proof_theorem_lower_bound}
	First, it is clear that the methods of this type are invariant to a simultaneous shift of variables. The sequence of iterates for solving $G(v) = 0$ starting from $v_0$ is just a shift of the sequence generated for solving $G(v+v_0) = 0$ starting from the origin. Therefore, without loss of generality, we assume $v_0 = 0$. 
	
	Now let us construct a specific instance $(P, r)$ to show the lower bound. Consider the $D \times D$ matrix 
	\begin{align}\label{P}
		P := \begin{bmatrix} 
			\tfrac{1}{2\gamma} & 0 &0 &...  &0&1-\tfrac{1}{2\gamma}\\ 
			1-\tfrac{1}{2\gamma} &\tfrac{1}{2\gamma} &0&...  &0&0\\
			0 &1-\tfrac{1}{2\gamma} &\tfrac{1}{2\gamma} &...  &0&0\\
			\vdots&\vdots &\vdots &\ddots&\vdots&\vdots\\
			0 &0&0 &...&1-\tfrac{1}{2\gamma}&\tfrac{1}{2\gamma}
		\end{bmatrix},
	\end{align}
	and note that
	\begin{align}\label{I_P}
		I-\gamma P= \begin{bmatrix} 
			\tfrac{1}{2} & 0 &0 &...  &0&\tfrac{1}{2}-\gamma\\ 
			\tfrac{1}{2}-\gamma&\tfrac{1}{2} &0&...  &0&0\\
			0 &\tfrac{1}{2}-\gamma &\tfrac{1}{2} &...  &0&0\\
			\vdots&\vdots &\vdots &\ddots&\vdots&\vdots\\
			0 &0&0 &...&\tfrac{1}{2}-\gamma&\tfrac{1}{2}
		\end{bmatrix}.
	\end{align}
	Also define the $D$-dimensional vector
	\begin{align}\label{b}
		r:=[\gamma - \tfrac{1}{2} + (\tfrac{1}{2}-\gamma)(2\gamma-1)^D, ~0,~0,...,~0,~0]^\top
	\end{align}
	The matrix $I-\gamma P$ is square  and full rank, and it is straightforward to verify that the unique solution of the linear equation $(I-\gamma P) v^* = r$ is 
	\begin{align}\label{v_star}
		(v^*)_{(i)} = (2\gamma - 1)^i \quad \text{for all } i \in [D].
	\end{align}
	It is also easy to see that the Markov chain induced by the transition kernel $P$ is irreducible and aperiodic. From the cyclical behavior of the Markov chain, we have that the stationary distribution is uniform, i.e.,
	\begin{align*}
		\Pi = [\tfrac{1}{D}, ~\tfrac{1}{D},~\tfrac{1}{D},...,~\tfrac{1}{D}]^\top.
	\end{align*}
	Thus, we obtain
	\begin{align}\label{v_0_dis}
		\|v_0-v^*\|_\Pi^2 =\tfrac{1}{D}\tsum_{i=1}^D (2\gamma-1)^{2i} = \tfrac{(2\gamma-1)^2[1-(2\gamma-1)^{2D}]}{D[1-(2\gamma-1)^2]}.
	\end{align}
	Let $\R^{k,D}:= \{v\in \R^D \;|\; v_{(i)}=0 \text{ for all } k+1\leq i\leq D\}$ denote the set of all $D$-dimensional vectors lying in the span of the first $k$ standard basis vectors. 
	Since all entries of $I-\gamma P$ below its subdiagonal are equal to $0$ and all entries of $r$ except for its first are equal to $0$, we conclude that $v_k \in \R^{k, D}$. Therefore,
	\begin{align*}
		\|v_k-v^*\|_\Pi^2 \geq \tfrac{1}{D} \tsum_{i=k+1}^D (2\gamma-1)^{2i} = \tfrac{(2\gamma-1)^{2k+2}[1-(2\gamma-1)^{2D-2k}]}{D[1-(2\gamma-1)^2]}.
	\end{align*}
	If $D\gg k$ is such that $\tfrac{1-(2\gamma-1)^{2D-2k}}{1-(2\gamma-1)^{2D}}\geq \tfrac{1}{2}$, then we conclude that
	\begin{align*}
		\|v_k-v^*\|_\Pi^2 \geq \tfrac{1}{2} (2\gamma-1)^{2k}\|v_0-v^*\|_\Pi^2,
	\end{align*}
	as desired.
	
	\subsection{Proof of Proposition~\ref{theorem_stochastic_lower_bound}}\label{proof_stochastic_lower_bound}
	For the reader's convenience, we begin by stating a version of the H\'{a}jek-Le Cam local asymptotic minimax theorem. 
	
	\begin{theorem}\label{theorem_hajek}
		Let $\{ \mathbb{P}_{\param} \}_{\param \in \Theta}$ denote a family of parametric models, quadratically mean differentiable with Fisher information matrix $J_{\param'}$ . Fix some parameter $\param\in \Theta$, and consider a function $h: \Theta \rightarrow \bbr^d$ that is differentiable at $\param$. Then for any quasi-convex loss $L:\bbr^d \rightarrow \bbr$, we have
		\begin{align}\label{hajek}
			\lim_{c \rightarrow \infty} \lim_{N\rightarrow \infty} \inf_{\widehat h_N} \sup_{\param':\|\param'-\param\|_2\leq c/\sqrt{N}} \bbe_{\param'}\left[L\big(\sqrt{N}(\widehat h_N - h(\param')\big)\right] = \bbe[L(Z)],
		\end{align}
		where the infimum is taken over all estimators $\widehat h_N$ that are measurable functions of $N$ i.i.d. data points drawn from $\mathbb{P}_{\param'}$ , and the expectation is taken over a multivariate Gaussian $$ Z\sim \mathcal{N}\left(0, \nabla h(\param)^\top  J^\dagger_{\param} \; \nabla h(\param) \right).$$
	\end{theorem}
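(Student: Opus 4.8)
This is the classical Hájek--Le Cam local asymptotic minimax theorem, and the plan is to prove both directions of the equality—the lower bound (no estimator beats $\bbe[L(Z)]$ in the local minimax sense) and the matching upper bound—by reducing the shrinking sequence of experiments to a single Gaussian shift experiment, in which the minimax problem can be solved exactly. First I would reparametrize locally, writing $\param' = \param + u/\sqrt N$ so that the local parameter $u \in \bbr^p$ ranges over a fixed ball of radius $c$ as $N$ grows. The workhorse of the argument is \emph{Local Asymptotic Normality}, which the quadratic-mean-differentiability assumption delivers: writing $\dot\ell_\param$ for the score and $\Delta_N := \tfrac{1}{\sqrt N}\sum_{i=1}^N \dot\ell_\param(X_i)$, one has
\[
\log \frac{dP_{\param+u/\sqrt N}^{\otimes N}}{dP_\param^{\otimes N}} = u^\top \Delta_N - \tfrac12 u^\top J_\param u + o_{P_\param}(1), \qquad \Delta_N \rightsquigarrow \mathcal N(0, J_\param)\ \text{under}\ P_\param.
\]
I would invoke this expansion as a standard consequence of QMD rather than re-deriving it.

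From the LAN expansion, the sequence of local experiments $\mathcal E_N = (P_{\param+u/\sqrt N}^{\otimes N} : u)$ converges in Le Cam's sense to the Gaussian shift experiment in which one observes $Y \sim \mathcal N(u, J_\param^\dagger)$, restricted to the range of $J_\param$. By differentiability of $h$ at $\param$, the rescaled target satisfies $\sqrt N\big(h(\param + u/\sqrt N) - h(\param)\big) \to \nabla h(\param)^\top u =: \kappa(u)$, so in the limit the estimand is the linear functional $\kappa$. The asymptotic minimax theorem for convergent experiments (equivalently Hájek's convolution/representation theorem) then bounds the local minimax risk in $\mathcal E_N$ below by the minimax risk for estimating $\kappa(u)$ in the limit experiment. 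There the plug-in estimator $\widehat\kappa := \nabla h(\param)^\top Y$ has error $\widehat\kappa - \kappa(u) \sim \mathcal N\big(0, \nabla h(\param)^\top J_\param^\dagger \nabla h(\param)\big)$, which is exactly the law of $Z$ and is independent of $u$; any competing equivariant estimator has error law equal to a convolution of this Gaussian with an independent perturbation, so \emph{Anderson's lemma}—valid because the sublevel sets of the quasi-convex, symmetric loss $L$ are convex and symmetric—shows its $L$-risk is at least $\bbe[L(Z)]$. Letting $c \to \infty$ removes the radius constraint and yields the lower bound.

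For the matching upper bound I would exhibit the one-step efficient estimator $\widehat h_N := h(\param) + \nabla h(\param)^\top J_\param^\dagger \Delta_N$ (or $h(\widehat\param_N)$ for an efficient $\widehat\param_N$) and show by the delta method that $\sqrt N(\widehat h_N - h(\param)) \rightsquigarrow Z$ under $P_\param$. The delicate step is transferring this convergence to the alternatives: LAN makes $P_{\param+u/\sqrt N}^{\otimes N}$ and $P_\param^{\otimes N}$ mutually contiguous, so \emph{Le Cam's third lemma} identifies the limiting law of $\sqrt N(\widehat h_N - h(\param+u/\sqrt N))$ under $P_{\param+u/\sqrt N}$ as precisely $\mathcal N\big(0, \nabla h(\param)^\top J_\param^\dagger \nabla h(\param)\big)$, uniformly over $u$ in the ball, whence $\sup_u \bbe_{\param+u/\sqrt N}[L(\cdot)] \to \bbe[L(Z)]$.

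The main obstacle I anticipate is exactly this uniform transfer: controlling the risk simultaneously over the whole shrinking ball, combined with a uniform-integrability (truncation) argument that is needed because $L$ may be unbounded, so that weak convergence of the rescaled error can be upgraded to convergence of the $L$-risk. A secondary source of bookkeeping is the degenerate case where $J_\param$ is singular—then only the component of $\nabla h(\param)$ lying in the range of $J_\param$ is estimable, and the pseudo-inverse $J_\param^\dagger$ must be used consistently throughout both the limit-experiment calculation and the construction of the efficient estimator.
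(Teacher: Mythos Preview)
The paper does not prove this theorem at all: it is stated in Section~\ref{proof_stochastic_lower_bound} purely ``for the reader's convenience'' as a classical result attributed to H\'ajek and Le~Cam, and is then immediately invoked (without proof) as a black box to derive Proposition~\ref{theorem_stochastic_lower_bound}. So there is no ``paper's own proof'' to compare your proposal against.

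That said, your sketch is a faithful outline of the standard textbook proof (as in van~der~Vaart's \emph{Asymptotic Statistics}, Chapters~7--8): deduce LAN from QMD, pass to the Gaussian shift limit experiment via Le~Cam convergence, use the convolution theorem plus Anderson's lemma for the lower bound, and construct an efficient estimator plus contiguity/Le~Cam's third lemma for the upper bound. One small point worth tightening if you ever write this out in full: the estimator $\widehat h_N = h(\param) + \nabla h(\param)^\top J_\param^\dagger \Delta_N$ as written depends on the unknown $\param$, so you would actually need a preliminary $\sqrt N$-consistent estimator $\widetilde\param_N$ and a one-step correction from it (or else argue directly with a regular efficient estimator of $\param$ followed by the delta method). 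You flag the uniform-integrability issue correctly; the standard way around it is to first prove the result for bounded continuous $L$ and then truncate.
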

	
	In our model, we have $\param = (\omega, P, R)$ and $h(\param)=(\Psi \Omega \Psi^\top- \Psi \Omega \gamma P \Psi^\top)^{-1} \Psi \Omega r$. We set the loss function $L$ to be $\|\cdot\|_{\widetilde B}^2$. Invoking Theorem~\ref{theorem_hajek} yields
	\begin{align}\label{hajek_2}
		\mathfrak{M}(\param) = \bbe\left[\|Z\|_{\widetilde B}^2\right] \quad \text{where} \quad Z\sim \mathcal{N}\left(0, \nabla h(\param)^\top  J^\dagger_{\param} \; \nabla h(\param) \right).
	\end{align}
	The covariance is explicitly computed in the following lemma.
	\begin{lemma}\label{lemma_hajek}
		We have
		\begin{align}\label{hajek_3}
			\nabla h(\param)^\top  J^\dagger_{\param} \; \nabla h(\param)  = (\Psi \Omega\Psi^\top-\gamma\Psi \Omega P \Psi^\top)^{-1} \Sigma ~(\Psi \Omega \Psi^\top-\gamma\Psi \Omega P \Psi^\top)^{-T} ,
		\end{align}
		where $\Sigma = \cov\big[
		\big(\langle\psi(s) - \gamma \psi(s'),\bar\theta\rangle - R(s, s')\big) \psi(s)\big]$ for $s\sim \omega,~s'\sim \mathbb{P}(\cdot|s)$.
	\end{lemma}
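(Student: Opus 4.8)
The plan is to identify $h(\param) = \bar\theta(\param)$ as a \emph{pathwise differentiable} functional of the model whose \emph{efficient influence function} at $\param$ is
\begin{align*}
\varphi(\xi) \;:=\; -A^{-1}\,\widetilde g\big(\bar\theta(\param),\xi\big), \qquad A \;:=\; \Psi\Omega\Psi^\top - \gamma\Psi\Omega P \Psi^\top ,
\end{align*}
with $A$ invertible by hypothesis. Granting this, the Cram\'er--Rao functional in the H\'ajek--Le Cam bound of Theorem~\ref{theorem_hajek} is the covariance of the efficient influence function, so
\begin{align*}
\nabla h(\param)^\top J_\param^\dagger \,\nabla h(\param) \;=\; \cov_\param[\varphi(\xi)] \;=\; A^{-1}\,\cov_\param\!\big[\widetilde g(\bar\theta(\param),\xi)\big]\,A^{-\top} \;=\; A^{-1}\,\Sigma\, A^{-\top},
\end{align*}
which is precisely~\eqref{hajek_3}. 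One could instead verify~\eqref{hajek_3} by a direct block computation of $\nabla h(\param)$ via implicit differentiation and of $J_\param$, but that requires inverting the high-dimensional Fisher matrix; the argument below sidesteps this. The content of the proof is thus twofold: the pathwise-derivative identity, and membership of $\varphi$ in the tangent space.

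\textbf{Step 1: the pathwise derivative.} Recall that $\bar\theta(\param)$ is the unique zero of the affine map $G_\param(\theta) := A\theta - \Psi\Omega r = \bbe_\param[\widetilde g(\theta,\xi)]$, and that $\widetilde g(\theta,\xi)$, as a function of the observation $\xi = (s,s',R(s,s'))$, does not depend on $\param$ except through its first argument. Fix a regular one-dimensional submodel $t\mapsto \param_t$ with $\param_0 = \param$ and score $S\in L^2(\mathbb{P}_\param)$. Differentiating $0 = \bbe_{\param_t}[\widetilde g(\bar\theta(\param_t),\xi)]$ at $t = 0$ — interchange of differentiation and expectation being routine in the finite-state setting — and using $\nabla_\theta\bbe_\param[\widetilde g(\theta,\xi)] = A$ yields
\begin{align*}
0 \;=\; A\cdot \tfrac{d}{dt}\bar\theta(\param_t)\big|_{t=0} \;+\; \bbe_\param\!\big[\widetilde g(\bar\theta(\param),\xi)\, S(\xi)\big],
\end{align*}
so that $\tfrac{d}{dt}h(\param_t)\big|_{t=0} = \bbe_\param[\varphi(\xi)\,S(\xi)]$. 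Since the submodel was arbitrary, $\varphi$ is a gradient of $h$ at $\param$ in the sense of semiparametric theory.

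\textbf{Step 2: $\varphi$ lies in the tangent space.} One must first handle the deterministic reward: a visited transition reveals $R$ at that pair exactly, so the reward coordinates carry no asymptotic risk. Formally, regularize the reward channel by additive noise of variance $\sigma_R^2$; the model is then quadratically mean differentiable, its reward Fisher block diverges as $\sigma_R\to 0$, and the associated contribution to $\nabla h^\top J^\dagger \nabla h$ equals $\sigma_R^2\,A^{-1}(\Psi\Omega\Psi^\top) A^{-\top}\to 0$, so in the limit the tangent space $\mathcal{T}_\param$ is spanned by the scores with respect to $\omega$ and to the rows of $P$ alone. This span equals the space of all mean-zero functions of $(s,s')$ in $L^2(\omega\otimes P)$: any such $f$ splits as $f(s,s') = \big(f(s,s') - \bbe[f\mid s]\big) + \bbe[f\mid s]$, the first term conditionally centered in $s'$ (hence in the $P$-score space) and the second centered in $s$ (hence in the $\omega$-score space), and these subspaces are orthogonal and of complementary dimension. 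Finally observe that in the deterministic-reward model
\begin{align*}
\widetilde g\big(\bar\theta(\param),\xi\big) \;=\; \big(\langle \psi(s) - \gamma\psi(s'),\,\bar\theta(\param)\rangle - R(s,s')\big)\,\psi(s)
\end{align*}
is a function of $(s,s')$ only, and it is mean-zero because $\bar\theta(\param)$ solves $G_\param = 0$; hence each coordinate of $\widetilde g(\bar\theta(\param),\cdot)$, and therefore of $\varphi$, lies in $\mathcal{T}_\param$. Combined with Step 1, $\varphi$ is then the efficient influence function, which completes the proof.

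\textbf{Main obstacle.} I expect the delicate part to be Step 2: pinning down the tangent space of the MRP model — where the deterministic-reward subtlety and the $\omega$-versus-$P$ decomposition both enter — and then noticing that the natural estimating function $\widetilde g(\bar\theta,\cdot)$ is exactly a mean-zero function of $(s,s')$ and so sits inside it. Step 1 and the final identification $\cov_\param[\varphi] = A^{-1}\Sigma A^{-\top}$ are routine.
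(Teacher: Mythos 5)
Your proposal is correct, but it takes a genuinely different route from the paper. The paper's proof is a direct block computation: it writes out the log-likelihood, differentiates with respect to $\omega$, the rows of $P$, and $R$, assembles the block-diagonal Fisher information, inverts each multinomial block via Sherman--Morrison to obtain $J_\param^\dagger$ explicitly, computes $\nabla h(\param)$ block by block via the chain rule applied to $h(\param)=(\Psi\Omega\Psi^\top-\gamma\Psi\Omega P\Psi^\top)^{-1}\Psi\Omega r$, and multiplies everything out to recognize the sandwich form in~\eqref{hajek_3}. You instead identify $-A^{-1}\widetilde g(\bar\theta,\cdot)$ as the efficient influence function of the Z-functional $\bar\theta(\param)$ and invoke the identity between the Cram\'er--Rao functional and the variance of the efficient influence function; this sidesteps inverting $J_\param$ and explains conceptually why the answer is a sandwich covariance. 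What your route buys is brevity, reusability (it applies verbatim to other estimating equations), and insight into the structure of the answer; what the paper's route buys is complete self-containedness and an explicit view of the one delicate feature of this model, namely that $\nabla_R h\neq 0$ while the reward scores vanish identically, so $h$ is \emph{not} differentiable relative to the full model and the efficient influence function exists only relative to the $(\omega,P)$-tangent space. Your regularization in Step~2 handles this, though somewhat informally; the cleanest justification is the paper's observation that the $R$-block of $J_\param^\dagger$ is zero, so the $R$-components of $\nabla h$ are annihilated and the quantity to be computed coincides with the restricted $(\omega,P)$ Cram\'er--Rao functional, for which your Step~1 (gradient identity along $(\omega,P)$-submodels) and Step~2 (membership of $\widetilde g(\bar\theta,\cdot)$ in the span of the $\omega$- and $P$-scores, i.e., the mean-zero functions of $(s,s')$) correctly single out $-A^{-1}\widetilde g(\bar\theta,\cdot)$ as the efficient influence function.
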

	\noindent The proof of this lemma is straightforward but involves some lengthy calculations, which we defer to Appendix~\ref{proof_lemma_hajek}. 
	
	Recall the definition of $\widetilde M$ and $\widetilde \Sigma$ in the statement of Proposition~\ref{theorem_stochastic_lower_bound}. By substituting equality~\eqref{hajek_3} into~\eqref{hajek_2} and invoking the relation  $\widetilde B^{-\frac{1}{2}} \Psi \Omega \Psi^\top \widetilde B^{-\frac{1}{2}} = I_d$, we obtain
	$$\widetilde B^{\frac{1}{2}} Z \sim \mathcal{N}\left(0, (I_d-\widetilde M)^{-1}  \widetilde \Sigma (I_d- \widetilde M)^{-T}\big)  \right),$$
	which completes the proof of the proposition. 

	\subsection{Proof of Theorem~\ref{theorem_VRTD_2}}\label{proof_theorem_VRTD}
	Let $\underline \theta$ satisfy $g(\underline \theta) - g(\widetilde \theta) + \widehat g(\widetilde \theta) = 0$ and $\underline v = \Psi^\top\underline \theta $. Recalling the definition of $\bar v$ from Eq.~\eqref{proj_fixed_point}, the following lemma provides a bound for $\|\underline v - \vbar\|_\Pi^2$. This bound is also valid for the VRFTD algorithm in the i.i.d. setting.
	
	\begin{lemma}\label{bound_tilde_v}
		Consider a single epoch with index $k \in [K]$. We have 
		\begin{align}\label{VRTD_step_0}
			\bbe[\|\underline v - \vbar \|^2_\Pi]\leq \tfrac{2}{N_k} \trace\left((I_d-M)^{-1}\iidS(I_d-M)^{-\top}\right) + \tfrac{2\varsigma^2 }{N_k(1-\gamma)^2\mu}\bbe\|\widetilde v - \vbar\|_\Pi^2.
		\end{align}
	\end{lemma}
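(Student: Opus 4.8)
The plan is to exploit the affine structure of the deterministic operator. Write $g(\theta)=A\theta-b$ with $A:=\Psi\Pi(I-\gamma P)\Psi^\top$ and $b:=\Psi\Pi r$, so that $g(\bar\theta)=0$ simply says $A\bar\theta=b$. Since $\underline\theta$ solves $g(\underline\theta)=g(\widetilde\theta)-\widehat g(\widetilde\theta)$, subtracting $g(\bar\theta)=0$ gives $A(\underline\theta-\bar\theta)=-\big(\widehat g(\widetilde\theta)-g(\widetilde\theta)\big)=:-\Delta$, where $\Delta$ is the Monte Carlo error of the recentering operator and has conditional mean zero given $\widetilde\theta$. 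A one-line computation using $B^{-1/2}\Psi=\Phi$ and $\Phi\Pi\Phi^\top=I_d$ shows $B^{-1/2}AB^{-1/2}=I_d-M$, hence $A^{-1}=B^{-1/2}(I_d-M)^{-1}B^{-1/2}$; combining this with Lemma~\ref{lemma_pi_theta} yields the clean identity
\[
\|\underline v-\vbar\|_\Pi=\|B^{1/2}(\underline\theta-\bar\theta)\|_2=\|(I_d-M)^{-1}B^{-1/2}\Delta\|_2 .
\]

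Next I would decompose $\Delta=\tfrac{1}{N_k}\tsum_{i=1}^{N_k}(U_i+V_i)$, where $V_i:=\widetilde g(\bar\theta,\xi_i^k)$ and $U_i:=\widetilde g(\widetilde\theta,\xi_i^k)-\widetilde g(\bar\theta,\xi_i^k)-\big(g(\widetilde\theta)-g(\bar\theta)\big)$, using $g(\bar\theta)=0$ so that the $V_i$ alone carry the ``noise at the solution.'' Note $\bbe[V_i]=g(\bar\theta)=0$, so its covariance equals its second moment, and by the definition~\eqref{def_Sigma} of $\iidS$ we have $\bbe[B^{-1/2}V_iV_i^\top B^{-1/2}]=\iidS$. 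Moreover, since $\widetilde\theta=\widehat\theta_{k-1}$ depends only on samples from earlier epochs while the batch $\{\xi_i^k\}$ is drawn afresh, conditionally on $\widetilde\theta$ the $U_i$ are i.i.d.\ with $\bbe[U_i\mid\widetilde\theta]=0$. Applying $\|a+b\|_2^2\le 2\|a\|_2^2+2\|b\|_2^2$ to the identity above splits the bound into a ``$V$-part'' and a ``$U$-part,'' avoiding any cross term.

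For the $V$-part, i.i.d.\ averaging of the mean-zero vectors $(I_d-M)^{-1}B^{-1/2}V_i$ gives exactly $\tfrac{1}{N_k}\trace\big((I_d-M)^{-1}\iidS(I_d-M)^{-\top}\big)$, which after the factor of $2$ is the first term of~\eqref{VRTD_step_0}. For the $U$-part, conditioning on $\widetilde\theta$ and using i.i.d.\ averaging again reduces matters to $\tfrac{1}{N_k}\bbe\big[\|(I_d-M)^{-1}B^{-1/2}U_1\|_2^2\mid\widetilde\theta\big]$; I would then chain Eq.~\eqref{variance_2} ($\|(I_d-M)^{-1}x\|_2\le\tfrac{1}{1-\gamma}\|x\|_2$), the operator-norm bound $\|B^{-1/2}\|_2=\mu^{-1/2}$, and Assumption~\ref{assump_variance} applied with $\theta=\widetilde\theta$, $\theta'=\bar\theta$, to obtain $\bbe\big[\|(I_d-M)^{-1}B^{-1/2}U_1\|_2^2\mid\widetilde\theta\big]\le\tfrac{\varsigma^2}{(1-\gamma)^2\mu}\|\widetilde v-\vbar\|_\Pi^2$. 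Taking expectations over $\widetilde\theta$ and adding the factor of $2$ produces the second term of~\eqref{VRTD_step_0}; summing the two parts closes the proof.

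The main obstacle is bookkeeping rather than anything deep: one must (i) correctly identify $\iidS$ with the second moment of $B^{-1/2}V_i$, which relies on $g(\bar\theta)=0$, and (ii) verify that the recentering batch $\{\xi_i^k\}$ is independent of $\widetilde\theta=\widehat\theta_{k-1}$ so that the conditional-expectation manipulations on the $U_i$ are legitimate. The crude inequality $\|a+b\|_2^2\le 2\|a\|_2^2+2\|b\|_2^2$ keeps the argument short by eliminating the $U$--$V$ cross term, at the (harmless) cost of the explicit constant $2$ in the statement.
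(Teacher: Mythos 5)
Your proposal is correct and follows essentially the same route as the paper: the identity $\|\underline v-\vbar\|_\Pi=\|(I_d-M)^{-1}B^{-1/2}(\widehat g(\widetilde\theta)-g(\widetilde\theta))\|_2$, the same two-part decomposition into the noise at $\bar\theta$ plus the centered difference term (your $V_i$/$U_i$ split is exactly the paper's $\widehat g(\bar\theta)-g(\bar\theta)$ and $\widehat g(\widetilde\theta)-\widehat g(\bar\theta)-g(\widetilde\theta)+g(\bar\theta)$, using $g(\bar\theta)=0$), Young's inequality, i.i.d.\ averaging to pull out $1/N_k$, and then the trace identity for the first piece and Eq.~\eqref{variance_2} together with Assumption~\ref{assump_variance} for the second. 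Your explicit remarks on the conditional mean-zero structure and the independence of the recentering batch from $\widehat\theta_{k-1}$ merely spell out what the paper's ``i.i.d.\ property'' step uses implicitly.
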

	\noindent See Appendix~\ref{proof_bound_tilde_v} for the proof of this lemma.
	
	Given Lemma~\ref{bound_tilde_v}, we can derive the following proposition which characterizes the progress of the VRTD algorithm in a single epoch.
	\begin{proposition}\label{prop_VRTD}
		Consider a single epoch with index $k \in [K]$. Suppose that the parameters $\eta$, $N_k$ and $T$ satisfy 
		\beq\label{stepsize_1}
		\eta\leq \min\{ \tfrac{(1-\gamma)}{2\beta(1+\gamma)^2},\tfrac{1-\gamma}{32\varsigma^2}\},~~T\geq \tfrac{32}{\mu(1-\gamma)\eta},~~\text{and}~~N_k\geq\tfrac{38\varsigma^2}{\mu(1-\gamma)^2}.
		\eeq
		Set the output of this epoch to be $\widehat v_k:=\frac{\sum_{t=1}^T \eta (1-\gamma) v_t+(1/\beta )v_{T+1}}{T \eta (1-\gamma) +(1/\beta )}$. Then we have
		\begin{align}\label{VRTD_prop}
			\bbe[\|\widehat v_k - \vbar\|_\Pi^2] &\leq \tfrac{1}{2}\bbe[\|\widehat v_{k-1}- \vbar\|_\Pi^2] + \tfrac{5}{N_k}\trace\{(I_d-M)^{-1}\iidS(I_d-M)^{-\top}\}.
		\end{align}
	\end{proposition}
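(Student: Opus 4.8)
The plan is to run a standard variance-reduced stochastic-approximation argument within a single epoch $k$, measuring progress relative to the auxiliary fixed point $\underline\theta$ rather than directly relative to $\vbar$, and only invoking Lemma~\ref{bound_tilde_v} at the very end to pass from $\underline v$ back to $\vbar$. Throughout, expectations are taken conditionally on the history of previous epochs (so that $\widetilde\theta=\widehat\theta_{k-1}$ and $\widetilde v=\widehat v_{k-1}$ are frozen), and for the inner-loop analysis also conditionally on the $\sigma$-field $\mathcal{G}$ generated by the $N_k$ epoch samples, so that $\widehat g(\widetilde\theta)$ and hence $\underline\theta,\underline v$ are deterministic while the inner samples $\xi_1,\dots,\xi_T$ remain fresh i.i.d.\ draws. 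Set $F(\theta):=g(\theta)-g(\widetilde\theta)+\widehat g(\widetilde\theta)$, so that the stochastic increment $\widetilde F_t(\theta):=\widetilde g(\theta,\xi_t)-\widetilde g(\widetilde\theta,\xi_t)+\widehat g(\widetilde\theta)$ is conditionally unbiased for $F$ and $F(\underline\theta)=0$. Since $F-g$ is a constant, $F$ inherits from Lemmas~\ref{lemma_strong_monotone} and~\ref{lemma_lipschitz} the bounds $\langle F(\theta)-F(\theta'),\theta-\theta'\rangle\ge(1-\gamma)\|v-v'\|_\Pi^2$ and $\|F(\theta)-F(\theta')\|_2\le(1+\gamma)\sqrt{\beta}\,\|v-v'\|_\Pi$.

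First I would establish the one-step recursion. Expanding $\|\theta_{t+1}-\underline\theta\|_2^2$ from~\eqref{TD_iid_step}, taking conditional expectation over $\xi_t$ (given the filtration $\mathcal{F}_t$ generated by $\mathcal{G}$ and $\xi_1,\dots,\xi_{t-1}$), and using unbiasedness, strong monotonicity of $F$ at $(\theta_t,\underline\theta)$, the identity $\bbe[\|\widetilde F_t(\theta_t)\|_2^2\mid\mathcal{F}_t]=\|F(\theta_t)\|_2^2+\bbe[\|\widetilde F_t(\theta_t)-F(\theta_t)\|_2^2\mid\mathcal{F}_t]$, the Lipschitz bound on $\|F(\theta_t)-F(\underline\theta)\|_2$, and Assumption~\ref{assump_variance} (which gives $\bbe[\|\widetilde F_t(\theta_t)-F(\theta_t)\|_2^2\mid\mathcal{F}_t]\le\varsigma^2\|v_t-\widetilde v\|_\Pi^2\le 2\varsigma^2\|v_t-\underline v\|_\Pi^2+2\varsigma^2\|\underline v-\widetilde v\|_\Pi^2$), one obtains
\[
\bbe[\|\theta_{t+1}-\underline\theta\|_2^2\mid\mathcal{F}_t]\le\|\theta_t-\underline\theta\|_2^2-\big(2\eta(1-\gamma)-\eta^2(1+\gamma)^2\beta-2\eta^2\varsigma^2\big)\|v_t-\underline v\|_\Pi^2+2\eta^2\varsigma^2\|\underline v-\widetilde v\|_\Pi^2 .
\]
The condition $\eta\le\tfrac{1-\gamma}{2\beta(1+\gamma)^2}$ makes $\eta^2(1+\gamma)^2\beta\le\tfrac12\eta(1-\gamma)$ and $\eta\le\tfrac{1-\gamma}{32\varsigma^2}$ makes $2\eta^2\varsigma^2\le\tfrac1{16}\eta(1-\gamma)$, so the bracketed coefficient is at least $\eta(1-\gamma)$, leaving $\bbe[\|\theta_{t+1}-\underline\theta\|_2^2\mid\mathcal{F}_t]\le\|\theta_t-\underline\theta\|_2^2-\eta(1-\gamma)\|v_t-\underline v\|_\Pi^2+2\eta^2\varsigma^2\|\underline v-\widetilde v\|_\Pi^2$.

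Next I would telescope over $t=1,\dots,T$ and add the single inequality $\tfrac1\beta\|v_{T+1}-\underline v\|_\Pi^2\le\|\theta_{T+1}-\underline\theta\|_2^2$ (valid since $\|v\|_\Pi^2\le\beta\|\theta\|_2^2$) to cancel the leftover $\|\theta_{T+1}-\underline\theta\|_2^2$; using $\theta_1=\widetilde\theta$ and $\|\widetilde\theta-\underline\theta\|_2^2\le\tfrac1\mu\|\widetilde v-\underline v\|_\Pi^2$ this yields
\[
\sum_{t=1}^{T}\eta(1-\gamma)\,\bbe\|v_t-\underline v\|_\Pi^2+\tfrac1\beta\,\bbe\|v_{T+1}-\underline v\|_\Pi^2\le\big(\tfrac1\mu+2T\eta^2\varsigma^2\big)\|\widetilde v-\underline v\|_\Pi^2 .
\]
Since $\widehat v_k$ is exactly the convex combination of $v_1,\dots,v_{T+1}$ with the weights $\eta(1-\gamma),\dots,\eta(1-\gamma),1/\beta$, convexity of $\|\cdot-\underline v\|_\Pi^2$ gives $\bbe[\|\widehat v_k-\underline v\|_\Pi^2\mid\mathcal{G}]\le\frac{1/\mu+2T\eta^2\varsigma^2}{T\eta(1-\gamma)+1/\beta}\,\|\widetilde v-\underline v\|_\Pi^2$; the conditions $T\ge\tfrac{32}{\mu(1-\gamma)\eta}$ and $\eta\le\tfrac{1-\gamma}{32\varsigma^2}$ bound the two numerator summands by $\tfrac1{32}$ and $\tfrac1{16}$ of the denominator, so $\bbe\|\widehat v_k-\underline v\|_\Pi^2\le\tfrac3{32}\bbe\|\widetilde v-\underline v\|_\Pi^2$. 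Finally I would combine $\|\widehat v_k-\vbar\|_\Pi^2\le 2\|\widehat v_k-\underline v\|_\Pi^2+2\|\underline v-\vbar\|_\Pi^2$ with $\|\widetilde v-\underline v\|_\Pi^2\le 2\|\widetilde v-\vbar\|_\Pi^2+2\|\underline v-\vbar\|_\Pi^2$, then substitute Lemma~\ref{bound_tilde_v} and use $N_k\ge\tfrac{38\varsigma^2}{\mu(1-\gamma)^2}$ (so $\tfrac{2\varsigma^2}{N_k(1-\gamma)^2\mu}\le\tfrac1{19}$) to absorb every $\|\widetilde v-\vbar\|_\Pi^2=\|\widehat v_{k-1}-\vbar\|_\Pi^2$ contribution into a total coefficient $\tfrac12$ and collect the trace term with coefficient $\tfrac{19}{4N_k}\le\tfrac5{N_k}$; taking expectations over all preceding randomness gives~\eqref{VRTD_prop}.

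\textbf{Main obstacle.} No individual step is deep; the difficulty is bookkeeping. One must check that the constants produced by the three stepsize constraints in~\eqref{stepsize_1} and by the lower bound on $N_k$ chain together to produce exactly the advertised contraction factor $\tfrac12$ and trace coefficient $\tfrac5{N_k}$, while keeping the conditioning straight — $\underline\theta$ (hence $\underline v$) is random and measurable with respect to the epoch samples $\mathcal{G}$, so the one-step recursion, the application of Assumption~\ref{assump_variance}, and the invocation of Lemma~\ref{bound_tilde_v} must all be carried out relative to a consistent filtration before the outer expectation is taken.
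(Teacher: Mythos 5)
Your proposal is correct and follows essentially the same route as the paper's proof: the same auxiliary point $\underline\theta$, the same one-step recursion via strong monotonicity, the Lipschitz bound, and Assumption~\ref{assump_variance}, the same telescoping with the weighted average $\widehat v_k$, and the same final combination of Young's inequality with Lemma~\ref{bound_tilde_v}; your constant-tracking (total coefficient exactly $\tfrac12$ on $\|\widehat v_{k-1}-\vbar\|_\Pi^2$ and $\tfrac{19}{4N_k}\le\tfrac{5}{N_k}$ on the trace term) matches the paper's. The only differences are presentational — introducing the recentered operator $F$ explicitly and keeping $\|\widetilde v-\underline v\|_\Pi^2$ intact until the end rather than splitting it mid-recursion — which do not change the argument.
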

	
	Taking Proposition~\ref{prop_VRTD} as given for the moment, let us complete the proof of the theorem. The main idea is to bound the approximation error term $\|\vbar - v^*\|_\Pi^2$ separately from the term $\|\widehat v_K - \vbar\|_\Pi^2$. 
	To bound $\|\vbar - v^*\|_\Pi^2$, we use the instance-dependent upper bound in Ineq.~\eqref{approximation_error}.  
	We bound the term $\|\widehat v_K - \vbar\|_\Pi^2$ by using Ineq.~\eqref{VRTD_prop} as follows
	\begin{align}\label{step_5}
		\bbe\|\widehat v_K - \vbar\|_\Pi^2 &\leq \tfrac{1}{2^K}\|v^0 - \vbar\|^2_\Pi  + \tsum_{k=1}^K \tfrac{5}{2^{K-k} N_k} \cdot \trace\{(I_d-M)^{-1}\iidS(I_d-M)^{-\top}\}\nn\\
		&\overset{(i)}= \tfrac{1}{2^K}\|v^0 - \vbar\|^2_\Pi  + \tsum_{k=1}^K(\tfrac{2}{3})^{K-k}\tfrac{5}{N} \cdot \trace\{(I_d-M)^{-1}\iidS(I_d-M)^{-\top}\}\nn\\
		&\leq \tfrac{1}{2^K}\|v^0 - \vbar\|^2_\Pi  + \tfrac{15}{N}\cdot \trace\{(I_d-M)^{-1}\iidS(I_d-M)^{-\top}\}.
	\end{align}
	Here, step (i) from the condition that $N_k\geq (\tfrac{3}{4})^{K-k}N$ for all $k\in [K]$.
	To conclude, we use Young's inequality to obtain
	\begin{align*}
		\bbe\|\widehat v_K - v^*\|_\Pi^2 &\leq (1+\delta) \cdot \bbe\|\vbar - v^*\|_\Pi^2 + (1+\tfrac{1}{\delta}) \cdot \bbe\|\widehat v_K - \vbar\|_\Pi^2\\
		& \leq (1+\delta) \cdot \mathcal{A}(M, \gamma) \cdot  \inf_{v \in S}\|v-v^*\|_\Pi^2  + (1+\tfrac{1}{\delta}) \cdot \bbe\|\widehat v_K - \vbar\|_\Pi^2.
	\end{align*}

	\subsubsection{Proof of Proposition~\ref{prop_VRTD}}
	We first provide an upper bound on the term  $\bbe\|\widehat v_k - \underline v\|_{\Pi}^2$. By the updates of the VRTD algorithm, we have
	\begin{align*}
		\theta_{t+1}-\underline \theta=\theta_{t} - \eta \big(\widetilde g(\theta_t,\xi_t)-\widetilde g(\widetilde \theta,\xi_t) + \widehat g(\widetilde \theta)\big)-\underline \theta.
	\end{align*}
	Now write
	\begin{align}\label{VRTD_1}
		\bbe \|\theta_{t+1}-\underline \theta\|_2^2 
		&\overset{(i)}=\bbe\|\theta_t - \underline \theta  - \eta\big(g(\theta_t) - g(\underline \theta)\big) + \eta \big(g(\theta_t) - \widetilde g(\theta_t, \xi_t) - g(\widetilde \theta)+\widetilde g(\widetilde \theta,\xi_t)\big)\|_2^2\nn\\
		&= \bbe\|\theta_t - \underline \theta\|_2^2-2\eta\bbe\langle g(\theta_t) - g(\underline \theta), \theta_t-\underline \theta\rangle \nn\\&\quad+\eta^2\bbe\|g(\theta_t) - g(\underline \theta)\|_2^2 +\eta^2\bbe\|g(\theta_t) - \widetilde g(\theta_t, \xi_t) - g(\widetilde \theta)+\widetilde g(\widetilde \theta,\xi_t)\|_2^2\nn\\
		& \overset{(ii)}\leq \bbe\|\theta_t - \underline \theta\|_2^2 + \big( \eta^2\beta(1+\gamma)^2-2\eta(1-\gamma) \big)\bbe\|v_t-\underline v\|_\Pi^2 + \eta^2\varsigma^2\bbe\|\widetilde v- v_t\|_\Pi^2\nn
	\end{align}
	where step (i) follows from the fact that $  \widehat g(\widetilde \theta) = g(\widetilde \theta)-g(\underline \theta) $, and step (ii) follows from using Lemma~\ref{lemma_strong_monotone} to bound the second term, Lemma~\ref{lemma_lipschitz} to bound the third term and Assumption~\ref{assump_variance} to bound the last term. 
	Using Young's inequality then yields
	\begin{align}
		\bbe \|\theta_{t+1}&-\underline \theta\|_2^2 \nn\\
		&\leq \bbe\|\theta_t - \underline \theta\|_2^2+\big( \eta^2\beta(1+\gamma)^2 + 2\eta^2\varsigma^2-2\eta(1-\gamma)\big)\bbe\|v_t-\underline v\|_\Pi^2 + 2\eta^2\varsigma^2\bbe\|\widetilde v- \underline v\|_\Pi^2\nn\\
		&\overset{(i)}\leq \bbe\|\theta_t - \underline \theta\|_2^2 -\eta(1-\gamma)\bbe\|v_t-\underline v\|_\Pi^2 + 4\eta^2\varsigma^2\bbe\|\vbar - \widetilde v\|_\Pi^2 + 4\eta^2\varsigma^2\bbe\|\underline v - \vbar\|_\Pi^2,
	\end{align}
	where step (i) follows from the assumption $\eta \leq \min\{\tfrac{1-\gamma}{2\beta(1+\gamma)},\tfrac{1-\gamma}{4\varsigma^2}\}$ and a further application of Young's inequality. By taking a telescopic sum of Ineq.~\eqref{VRTD_1} from $t=1$ to $T$, we obtain
	\begin{align*}
		\tsum_{t=1}^T \eta(1-\gamma)\bbe[\|v_t-\underline v\|_\Pi^2] &+ \bbe[\|\theta_{T+1}-\underline \theta\|_2^2]  \\&\leq \bbe[\|\theta_1-\underline \theta\|_2^2]+ 4T\eta^2\varsigma^2\big(\bbe[\|\underline v - \vbar\|_\Pi^2]+ \bbe[\|\widetilde v - \vbar\|_\Pi^2]\big).
	\end{align*}
	Utilizing the inequalities $\|\theta\|_2^2 \geq \tfrac{1}{\beta}\|B^{\frac{1}{2}}\theta\|_2^2 = \tfrac{1}{\beta}\|v\|_\Pi^2$ and  $\|\theta\|_2^2 \leq \tfrac{1}{\mu}\|B^{\frac{1}{2}}\theta\|_2^2 = \tfrac{1}{\mu}\|v\|_\Pi^2$, we then have
	\begin{align*}
		\tsum_{t=1}^T \eta(1-\gamma)\bbe[\|v_t-\underline v\|_\Pi^2] &+ \tfrac{1}{\beta}\bbe[\|v_{T+1}-\underline v\|_\Pi^2]  \\&\leq \tfrac{1}{\mu}\bbe[\|v_1-\underline v\|_\Pi^2]+ 4T\eta^2\varsigma^2(\bbe[\|\underline v - \vbar\|_\Pi^2]+ \bbe[\|\widetilde v - \vbar\|_\Pi^2]).
	\end{align*}
	By definition, we have $\widehat v_k=\frac{\sum_{t=1}^T \eta (1-\gamma) v_t+(1/\beta )v_{T+1}}{T \eta (1-\gamma) +(1/\beta )}$; noting in addition that $v_1=\widetilde v$ then yields
	\begin{align}\label{VRTD_step_2}
		\bbe[\|\widehat v_k - \underline v\|_\Pi^2] &\leq \tfrac{1}{T \eta (1-\gamma) +(1/\beta )}\big[ \tsum_{t=1}^T \eta(1-\gamma)\bbe[\|v_t-\underline v\|_\Pi^2] + \bbe[\|v_{T+1}-\underline v\|_\Pi^2]\big]\nn\\
		&\leq \tfrac{1}{T \eta (1-\gamma) +(1/\beta )} \big[ \tfrac{1}{\mu}\bbe[\|\widetilde v-\underline v\|_\Pi^2]+ 4T\eta^2\varsigma^2(\bbe[\|\underline v - \vbar\|_\Pi^2]+ \bbe[\|\widetilde v - \vbar\|_\Pi^2])\big]\nn\\
		&\leq \tfrac{2/\mu+4T\eta^2\varsigma^2}{T \eta (1-\gamma) }(\bbe[\|\underline v - \vbar\|_\Pi^2]+ \bbe[\|\widetilde v - \vbar\|_\Pi^2]).
	\end{align}
	Combining the bound on $\|\underline v - v^*\|_\Pi^2$ shown in Lemma~\ref{bound_tilde_v} and the bound on $\|\widehat v_k - \underline v\|_\Pi^2$ from Ineq.~\eqref{VRTD_step_2}, we obtain
	\begin{align}\label{VRTD_step_3}
		\bbe[\|\widehat v_k &- \vbar\|_\Pi^2]\nn\\ &\leq 2 \bbe[\|\widehat v_k - \underline v\|_\Pi^2] + 2 \bbe[\|\vbar - \underline v\|_\Pi^2]\nn\\
		& \leq \tfrac{4/\mu+8T\eta^2\varsigma^2}{T \eta (1-\gamma) }\bbe[\|\widetilde v - \vbar\|_\Pi^2]+ \left(\tfrac{4/\mu+8T\eta^2\varsigma^2}{T \eta (1-\gamma) }+2\right)\bbe[\|\underline v - \vbar\|_\Pi^2]\nn\\
		& \leq \tfrac{4/\mu+8T\eta^2\varsigma^2}{T \eta (1-\gamma) }\bbe[\|\widetilde v - \vbar\|_\Pi^2] \nn \\
		& \quad + \left(\tfrac{4/\mu+8T\eta^2\varsigma^2}{T \eta (1-\gamma) }+2\right)\big(\tfrac{2\varsigma^2}{\mu(1-\gamma)^2N_k}\bbe[\|\widetilde v - \vbar \|_\Pi^2] 
		+\tfrac{2}{N_k}\trace\{(I_d-M)^{-1}\iidS(I_d-M)^{-\top}\}\big).
	\end{align}
	The final term takes the desired form; we are left with bounding the first two terms in the display above.
	First, recall by assumption that $\eta \leq \tfrac{1-\gamma}{32\varsigma^2}$, and $T\geq\tfrac{32}{\mu(1-\gamma)\eta}$, so that
	\begin{align}\label{cond_1}
		\tfrac{4/\mu+8T\eta^2\varsigma^2}{T\eta (1-\gamma) }\leq \tfrac{\frac{4}{\mu} + \frac{256\eta\varsigma^2}{\mu(1-\gamma)}}{32/\mu}\leq \tfrac{3}{8}.
	\end{align}
	Second, utilizing the condition $N_k\geq \tfrac{38\varsigma^2}{\mu(1-\gamma)^2}$, we have 
	\begin{align}\label{cond_2}
		\left(\tfrac{4/\mu+8T\eta^2\varsigma^2}{T\eta (1-\gamma) }+2\right)\tfrac{2\varsigma^2}{\mu(1-\gamma)^2N_k} \leq \tfrac{1}{8}.
	\end{align}
	Substituting the bounds~\eqref{cond_1} and~\eqref{cond_2} into inequality \eqnok{VRTD_step_3}, we obtain
	\begin{align*}
		\bbe[\|\widehat v_k - \vbar\|_\Pi^2] &\leq \tfrac{1}{2}\bbe[\|\widetilde v - \vbar\|_\Pi^2] + \tfrac{5}{N_k}\trace\{(I_d-M)^{-1}\iidS(I_d-M)^{-\top}\}\big),
	\end{align*}
	which completes the proof of Proposition~\ref{prop_VRTD}.

	\subsection{Proof of Theorem~\ref{theorem_VRFTD}}\label{proof_theorem_VRFTD}
	The structure of the proof is similar to the analysis of VRTD. We first state a proposition that characterizes the progress of the VRFTD algorithm in a single epoch.
	\begin{proposition}\label{prop_VRFTD}
		Assume that for each epoch $k\in[K]$, the parameter $\eta$, $\lambda$, $m$, $N_k$ and $T$ satisfy 
		\beq\label{stepsize_VRFTD}
		\eta \leq \tfrac{1}{4\beta(1+\gamma)}, ~\lambda=1, ~T\geq\tfrac{32}{\mu(1-\gamma)\eta}, ~m \geq \max\{1, \tfrac{256\eta \varsigma^2}{1-\gamma}\},~\text{and}~N_k\geq \tfrac{56\varsigma^2 }{\mu(1-\gamma)^2}.
		\eeq
		Set the output of this epoch to be $\widehat v_k:=\frac{\sum_{t=2}^{T+1} v_t}{T}$. Then we have
		\begin{align}\label{VRFTD_prop}
			\bbe[\|\widehat v_k - \vbar\|_\Pi^2] &\leq \tfrac{1}{2}\bbe[\|\widehat v_{k-1} - \vbar\|_\Pi^2] + \tfrac{5}{N_k}\trace\{(I_d-M)^{-1}\iidS(I_d-M)^{-\top}\}.
		\end{align}
	\end{proposition}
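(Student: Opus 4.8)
The plan is to mirror the two-stage structure of the proof of Proposition~\ref{prop_VRTD}. First I would establish a one-epoch recursion bounding $\bbe\|\widehat v_k-\underline v\|_\Pi^2$, where $\underline\theta$ (with $\underline v=\Psi^\top\underline\theta$) is as in Lemma~\ref{bound_tilde_v}, i.e.\ $g(\underline\theta)-g(\widetilde\theta)+\widehat g(\widetilde\theta)=0$; then I would combine it with the bound on $\bbe\|\underline v-\vbar\|_\Pi^2$ from Lemma~\ref{bound_tilde_v}, use $\|\widehat v_k-\vbar\|_\Pi^2\le 2\|\widehat v_k-\underline v\|_\Pi^2+2\|\underline v-\vbar\|_\Pi^2$, and plug in the stepsize choices~\eqref{stepsize_VRFTD} exactly as in Ineqs.~\eqref{cond_1}--\eqref{cond_2}. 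Since $\widetilde v=\widehat v_{k-1}$, this produces the asserted contraction. The only genuinely new work is the one-epoch bound, because~\eqref{FTD_step_2} is an operator-extrapolation step.

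Write $F(\theta):=g(\theta)-g(\widetilde\theta)+\widehat g(\widetilde\theta)$, the per-epoch operator with unique root $\underline\theta$. As $F$ differs from $g$ by an additive constant, Lemma~\ref{lemma_strong_monotone} gives $\langle F(\theta)-F(\underline\theta),\theta-\underline\theta\rangle\ge(1-\gamma)\|v-\underline v\|_\Pi^2$ and Lemma~\ref{lemma_lipschitz} gives $\|F(\theta)-F(\theta')\|_2\le(1+\gamma)\sqrt\beta\,\|v-v'\|_\Pi$. Since $\theta_t$, $\widetilde\theta$ and $\widehat g(\widetilde\theta)$ are measurable with respect to $\mathcal{F}_{t-1}$ (the samples seen before inner step $t$), one has $\bbe[\widetilde F_t(\theta_t)\mid\mathcal{F}_{t-1}]=F(\theta_t)$, and averaging over the inner mini-batch together with Assumption~\ref{assump_variance} gives the variance-reduction estimate $\bbe[\|\widetilde F_t(\theta_t)-F(\theta_t)\|_2^2\mid \mathcal{F}_{t-1}]\le \tfrac{\varsigma^2}{m}\|v_t-\widetilde v\|_\Pi^2$; it is this $1/m$ factor that permits the large stepsize $\eta\le \tfrac{1}{4\beta(1+\gamma)}$.

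Next I would run the standard stochastic operator-extrapolation estimate with $x=\underline\theta$. The optimality characterization of $\theta_{t+1}=\theta_t-\eta\big(2\widetilde F_t(\theta_t)-\widetilde F_{t-1}(\theta_{t-1})\big)$ (recall $\lambda=1$) combined with the three-point identity yields
\[
\eta\big\langle 2\widetilde F_t(\theta_t)-\widetilde F_{t-1}(\theta_{t-1}),\,\theta_{t+1}-\underline\theta\big\rangle=\tfrac12\|\theta_t-\underline\theta\|_2^2-\tfrac12\|\theta_{t+1}-\underline\theta\|_2^2-\tfrac12\|\theta_{t+1}-\theta_t\|_2^2 .
\]
On the left I would split off $\widetilde F_t(\theta_t)$ from $\widetilde F_t(\theta_t)-\widetilde F_{t-1}(\theta_{t-1})$, and write $\theta_{t+1}-\underline\theta=(\theta_t-\underline\theta)+(\theta_{t+1}-\theta_t)$. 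Strong monotonicity gives $\eta\langle F(\theta_t),\theta_t-\underline\theta\rangle\ge\eta(1-\gamma)\|v_t-\underline v\|_\Pi^2$; the extrapolation inner product $\eta\langle F(\theta_t)-F(\theta_{t-1}),\theta_t-\underline\theta\rangle$ is rewritten as the telescoping increment $\eta[\langle F(\theta_t),\theta_t-\underline\theta\rangle-\langle F(\theta_{t-1}),\theta_{t-1}-\underline\theta\rangle]-\eta\langle F(\theta_{t-1}),\theta_t-\theta_{t-1}\rangle$; and every remaining deterministic term (the leftover $\langle F(\theta_{t-1}),\theta_t-\theta_{t-1}\rangle$ and all the ``$\times(\theta_{t+1}-\theta_t)$'' pieces) is absorbed, via Cauchy--Schwarz, the Lipschitz bound, and Young's inequality, into the negative quadratics $\tfrac12\|\theta_{t+1}-\theta_t\|_2^2$ and $\tfrac12\|\theta_t-\theta_{t-1}\|_2^2$ from the three-point identities at steps $t$ and $t-1$, which is where $\eta\le\tfrac{1}{4\beta(1+\gamma)}$ is used. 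The main obstacle is the noise cross-term $\bbe\langle\delta_{t-1},\theta_t-\underline\theta\rangle$, with $\delta_s:=\widetilde F_s(\theta_s)-F(\theta_s)$: unlike $\bbe\langle\delta_t,\theta_t-\underline\theta\rangle=0$, it does not vanish, because $\theta_t$ depends on the mini-batch defining $\widetilde F_{t-1}$. I would resolve it with the further split $\langle\delta_{t-1},\theta_t-\underline\theta\rangle=\langle\delta_{t-1},\theta_{t-1}-\underline\theta\rangle+\langle\delta_{t-1},\theta_t-\theta_{t-1}\rangle$, the first summand having zero mean and the second being dominated, after Young, by a term of order $\tfrac{\eta^2\varsigma^2}{m}\bbe\|v_{t-1}-\widetilde v\|_\Pi^2$ (a matching bound for the noise of $\widetilde F_t(\theta_t)$ enters through the $\|\theta_{t+1}-\theta_t\|_2^2$ channel); the condition $m\ge 256\eta\varsigma^2/(1-\gamma)$ makes all such variance terms a small fraction of $\eta(1-\gamma)\sum_t\|v_t-\underline v\|_\Pi^2$, so they can be absorbed.

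Finally, summing over $t=1,\dots,T$ (the boundary convention $\widetilde F_0(\theta_0)=\widetilde F_1(\theta_1)$ with $\theta_0=\theta_1=\widetilde\theta$ makes the step-$1$ contributions cancel), converting $\|\cdot\|_2$ to $\|\cdot\|_\Pi$ through $\mu\le\lambda_{\min}(B)\le\lambda_{\max}(B)\le\beta$ and Lemma~\ref{lemma_pi_theta}, and applying Jensen's inequality to $\widehat v_k=\tfrac1T\sum_{t=2}^{T+1}v_t$, I would obtain a bound of the same shape as Ineq.~\eqref{VRTD_step_2}, namely $\bbe\|\widehat v_k-\underline v\|_\Pi^2\le \tfrac{c}{T\eta(1-\gamma)}\big(\bbe\|\widetilde v-\vbar\|_\Pi^2+\bbe\|\underline v-\vbar\|_\Pi^2\big)$ for an absolute constant $c$. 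Substituting Lemma~\ref{bound_tilde_v}, using $T\ge 32/(\mu(1-\gamma)\eta)$ to shrink the prefactor and $N_k\ge 56\varsigma^2/(\mu(1-\gamma)^2)$ to absorb the $\bbe\|\underline v-\vbar\|_\Pi^2$ term, and recalling $\widetilde v=\widehat v_{k-1}$, yields $\bbe\|\widehat v_k-\vbar\|_\Pi^2\le\tfrac12\bbe\|\widehat v_{k-1}-\vbar\|_\Pi^2+\tfrac5{N_k}\trace\{(I_d-M)^{-1}\iidS(I_d-M)^{-\top}\}$, completing the proof.
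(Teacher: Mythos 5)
Your overall architecture is exactly the paper's: define $\underline\theta$ via $g(\underline\theta)-g(\widetilde\theta)+\widehat g(\widetilde\theta)=0$, run a three-point-identity/telescoping analysis of the operator-extrapolation recursion to bound $\bbe\|\widehat v_k-\underline v\|_\Pi^2$, combine with Lemma~\ref{bound_tilde_v} via Young's inequality, and finish with the parameter conditions. Your identification of the transferred monotonicity and Lipschitz properties of $F$, the $\varsigma^2/m$ mini-batch variance bound, and the zero-mean-plus-Young treatment of the delayed noise cross-term $\langle\delta_{t-1},\theta_t-\underline\theta\rangle$ are all correct and match the paper.

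There is, however, a genuine gap in how you propose to handle the ``full operator times displacement'' terms, and it sits exactly where the acceleration lives. Your pairing of $\widetilde F_t(\theta_t)$ with $\theta_{t+1}-\underline\theta=(\theta_t-\underline\theta)+(\theta_{t+1}-\theta_t)$ produces, at every step, the term $2\eta\langle \widetilde F_t(\theta_t),\theta_{t+1}-\theta_t\rangle$, and your telescoping of the extrapolation inner product produces the leftover $-2\eta\langle F(\theta_{t-1}),\theta_t-\theta_{t-1}\rangle$. You propose to absorb each of these individually via Cauchy--Schwarz, the Lipschitz bound and Young's inequality into the negative quadratics $\tfrac12\|\theta_{t+1}-\theta_t\|_2^2$ and $\tfrac12\|\theta_t-\theta_{t-1}\|_2^2$. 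That fails at the accelerated stepsize: since $\|F(\theta_t)\|_2=\|F(\theta_t)-F(\underline\theta)\|_2\le(1+\gamma)\sqrt\beta\,\|v_t-\underline v\|_\Pi$ is controlled only by the distance to $\underline\theta$, not by a consecutive-iterate displacement, Young's inequality leaves a per-step residual of order $\eta^2(1+\gamma)^2\beta\,\|v_t-\underline v\|_\Pi^2$, which at $\eta=\tfrac{1}{4\beta(1+\gamma)}$ equals $\eta(1+\gamma)\|v_t-\underline v\|_\Pi^2$ and exceeds the per-step monotonicity gain $2\eta(1-\gamma)\|v_t-\underline v\|_\Pi^2$ whenever $\gamma>1/3$; absorbing it would force $\eta\lesssim\tfrac{1-\gamma}{\beta(1+\gamma)^2}$, i.e.\ the unaccelerated VRTD stepsize. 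These pieces must instead be recognized as telescoping against each other: $\sum_{t=1}^T\bigl[\langle F(\theta_t),\theta_{t+1}-\theta_t\rangle-\langle F(\theta_{t-1}),\theta_t-\theta_{t-1}\rangle\bigr]$ collapses (using $\theta_1=\theta_0$) to the single boundary term $\langle F(\theta_T),\theta_{T+1}-\theta_T\rangle$, with only noise increments $\langle\delta_t,\theta_{t+1}-\theta_t\rangle$ surviving per step; and that boundary term needs one further rewriting, $F(\theta_T)=F(\theta_{T+1})-\bigl[F(\theta_{T+1})-F(\theta_T)\bigr]$, so that it becomes a Lipschitz-controllable operator \emph{difference} plus a nonnegative monotonicity term at $t=T+1$ (which also supplies the $\|v_{T+1}-\underline v\|_\Pi^2$ gain your output $\widehat v_k=\tfrac1T\sum_{t=2}^{T+1}v_t$ requires). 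This is precisely what the paper's reindexing $\langle\widetilde F_t(\theta_t),\theta_{t+1}-\underline\theta\rangle=\langle\widetilde F_{t+1}(\theta_{t+1}),\theta_{t+1}-\underline\theta\rangle-\langle\widetilde F_{t+1}(\theta_{t+1})-\widetilde F_t(\theta_t),\theta_{t+1}-\underline\theta\rangle$ accomplishes: after it, only operator differences at consecutive iterates ever multiply displacements, so only the genuine difference term $\langle\widetilde F_t(\theta_t)-\widetilde F_{t-1}(\theta_{t-1}),\theta_{t+1}-\theta_t\rangle$ remains to be absorbed the way you describe.
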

	
	Taking Proposition~\ref{prop_VRFTD} as given, the proof of Theorem~\ref{theorem_VRFTD} follows exactly as the proof of Theorem~\ref{theorem_VRTD_2} in Section~\ref{proof_theorem_VRTD}. Therefore, to complete the proof, it remains to prove Proposition~\ref{prop_VRFTD}, which we do in the following subsection.
	
	\subsubsection{Proof of Proposition~\ref{prop_VRFTD}}
	The proof structure parallels that of Proposition~\ref{prop_VRTD}: The main idea is to bound the terms $\|\widehat v_k - \underline v\|_\Pi^2$ and $\|\underline v - \vbar\|_\Pi^2$ separately and then use Young's inequality to bound $\|\widehat v_k - \vbar\|_\Pi^2$. As alluded to before, the bound proved on $\|\underline v - \vbar\|_\Pi^2$ in Lemma~\ref{bound_tilde_v} is still valid. It remains to show a bound on the term $\|\widehat v_k - \underline v\|_\Pi^2$.
	
	From step~\eqnok{FTD_step_2} of Algorithm~\ref{alg:FTD_2}, we have
	\begin{align*}
		\theta_t-\underline \theta   = \theta_{t+1}+ \eta\left[\widetilde F_t(\theta_t)+\lambda \left(\widetilde F_t(\theta_t)-\widetilde F_{t-1}(\theta_{t-1})\right)\right] - \underline \theta.
	\end{align*}
	As before, we like to obtain a one-step inequality that will allow us to take a telescopic sum. Toward that end, note that some manipulation yields 
	\begin{align*}
		\|\theta_t-\underline \theta\|_2^2 &=\|\theta_{t+1}-\underline \theta\|_2^2 + 2\eta \langle\widetilde F_t(\theta_t)+\lambda [\widetilde F_t(\theta_t)-\widetilde F_{t-1}(\theta_{t-1})], \theta_{t+1}-\underline \theta \rangle +\|\theta_{t+1}-\theta_t\|_2^2\\
		&=\|\theta_{t+1}-\underline \theta\|_2^2 +\|\theta_{t+1}-\theta_t\|_2^2 +2\eta \langle \widetilde F_{t+1}(\theta_{t+1}), \theta_{t+1} - \underline \theta \rangle \\
		&\quad- 2\eta \langle \widetilde F_{t+1}(\theta_{t+1})-\widetilde F_t(\theta_t), \theta_{t+1}-\underline \theta \rangle+ 2\eta \lambda\langle \widetilde F_{t}(\theta_t)-\widetilde F_{t-1}(\theta_{t-1}), \theta_t-\underline \theta \rangle\\
		&\quad+ 2\eta \lambda\langle \widetilde F_{t}(\theta_t)-\widetilde F_{t-1}(\theta_{t-1}), \theta_{t+1}-\theta_t \rangle.
	\end{align*}
	Defining $\widetilde \Delta_t(\theta) := \widetilde g_t(\theta) - g(\theta)$ for convenience and recalling the definition of $\widetilde F_t$ in Algorithm~\ref{alg:FTD_2}, we can then write
	\begin{align*}
		\|&\theta_t-\underline \theta\|_2^2\\
		&=\|\theta_{t+1}-\underline \theta\|_2^2 +\|\theta_{t+1}-\theta_t\|_2^2 +2\eta \langle \widetilde F_{t+1}(\theta_{t+1}), \theta_{t+1} - \underline \theta \rangle - 2\eta \langle \widetilde F_{t+1}(\theta_{t+1})-\widetilde F_{t}(\theta_t), \theta_{t+1}-\underline \theta \rangle \\
		&\qquad \qquad+ 2\eta \lambda\langle \widetilde F_{t}(\theta_t)-\widetilde F_{t-1}(\theta_{t-1}), \theta_t-\underline \theta \rangle+ 2\eta \lambda\langle g(\theta_t) - g(\theta_{t-1}), \theta_{t+1}-\theta_t \rangle \\
		&\qquad \qquad + 2\eta \lambda \langle  \widetilde \Delta_t(\theta_t) - \widetilde \Delta_t(\widetilde\theta)-\widetilde \Delta_{t-1}(\theta_{t-1}) + \widetilde \Delta_{t-1}(\widetilde\theta)  ,\theta_{t+1}-\theta_t \rangle.
	\end{align*}
	Summing both sides of the equation above from $t=1$ to $T$ and noting that $\lambda=1$, $\theta_1=\theta_0$, and $\widetilde F_0(\theta_0)=\widetilde F_1(\theta_1)$, we have
	\begin{align}\label{VRFTD_step_1}
		\|\theta_1-\underline \theta\|_2^2 &= \|\theta_{T+1}-\underline \theta\|_2^2 -2\eta \langle \widetilde F_{T+1}(\theta_{T+1})-\widetilde F_T(\theta_T), \theta_{T+1 }- \underline \theta\rangle + Q_1\nn\\&\qquad\qquad\qquad + \tsum_{t=1}^T 2\eta \langle \widetilde F_{t+1}(\theta_{t+1}), \theta_{t+1} - \underline \theta \rangle,
	\end{align}
	where we have defined
	\begin{align*}
		Q_1 &:= \tsum_{t=1}^T\|\theta_{t+1}-\theta_t\|_2^2 +  \tsum_{t=1}^T 2\eta \langle g(\theta_t) - g(\theta_{t-1}), \theta_{t+1}-\theta_t \rangle \\
		&\qquad \qquad \qquad +  \tsum_{t=1}^T 2\eta  \langle  \widetilde \Delta_t(\theta_t) - \widetilde \Delta_t(\widetilde\theta)-\widetilde \Delta_{t-1}(\theta_{t-1}) + \widetilde \Delta_{t-1}(\widetilde\theta)  ,\theta_{t+1}-\theta_t \rangle
	\end{align*}
	for convenience.
	Noting that it suffices to lower bound $Q$, write
	\begin{align}\label{VRFTD_step_2}
		Q_1 &\overset{(i)}\geq \tsum_{t=1}^T\|\theta_{t+1}-\theta_t\|_2^2 -  \tsum_{t=1}^T 2\eta (1+\gamma)\beta\|\theta_t-\theta_{t-1}\|_2\| \theta_{t+1}-\theta_t \|_2\nn\\
		&\qquad \qquad -  \tsum_{t=1}^T 2\eta   \|  \widetilde \Delta_t(\theta_t) - \widetilde \Delta_t(\widetilde\theta)-\widetilde \Delta_{t-1}(\theta_{t-1}) + \widetilde \Delta_{t-1}(\widetilde\theta)\|_2 \|\theta_{t+1}-\theta_t\|_2\nn\\
		&=\tfrac{1}{4}\|\theta_{T+1}-\theta_T\|_2^2\nn\\
		&\qquad+\tsum_{t=1}^T\left [ \tfrac{1}{4} \|\theta_{t+1}-\theta_t\|_2^2 +\tfrac{1}{4} \|\theta_{t}-\theta_{t-1}\|_2^2  -2\eta (1+\gamma)\beta\|\theta_t-\theta_{t-1}\|_2\| \theta_{t+1}-\theta_t \| _2 \right]\nn\\
		&\qquad + \tsum_{t=1}^T \left[ \tfrac{1}{2}\|\theta_{t+1}-\theta_t\|_2^2-2\eta   \|  \widetilde \Delta_t(\theta_t) - \widetilde \Delta_t(\widetilde\theta)-\widetilde \Delta_{t-1}(\theta_{t-1}) + \widetilde \Delta_{t-1}(\widetilde\theta)\|_2 \|\theta_{t+1}-\theta_t\| _2\right]\nn\\
		&\overset{(ii)}\geq \tfrac{1}{4}\|\theta_{T+1}-\theta_T\|_2^2 -  \tsum_{t=1}^T 2\eta^2\|  \widetilde \Delta_t(\theta_t) - \widetilde \Delta_t(\widetilde\theta)-\widetilde \Delta_{t-1}(\theta_{t-1}) + \widetilde \Delta_{t-1}(\widetilde\theta)\|_2^2\nn\\
		&\overset{(iii)}\geq \tfrac{1}{4}\|\theta_{T+1}-\theta_T\|_2^2 -  \tsum_{t=1}^T 4\eta^2(\|  \widetilde \Delta_t(\theta_t) - \widetilde \Delta_t(\widetilde\theta) \|_2^2+\|  \widetilde \Delta_{t-1}(\widetilde\theta)-\widetilde \Delta_{t-1}(\theta_{t-1})\|_2^2),
	\end{align}
	where step (i) follows from the Cauchy--Schwarz inequality and Lemma~\ref{lemma_lipschitz}, step (ii) follows from the fact that $4\eta(1+\gamma)\beta\leq 1$ and Young's inequality, and step (iii) follows from Young's inequality.
	Substituting Ineq.~\eqnok{VRFTD_step_2} into Eq.~\eqnok{VRFTD_step_1}, we obtain
	\begin{align*}
		\|\theta_{T+1}-\underline \theta\|_2^2 + \tfrac{1}{4}\|\theta_{T+1}&-\theta_T\|_2^2 + \tsum_{t=1}^T 2\eta \langle \widetilde F_{t+1}(\theta_{t+1}), \theta_{t+1} - \underline \theta \rangle \\
		&\leq \|\theta_1-\underline \theta\|_2^2 +2\eta \langle \widetilde F_{T+1}(\theta_{T+1})-\widetilde F_T(\theta_T), \theta_{T+1 }- \underline \theta\rangle\\
		&\qquad+  \tsum_{t=1}^T 4\eta^2(\|  \widetilde \Delta_t(\theta_t) - \widetilde \Delta_t(\widetilde\theta) \|_2^2+\|  \widetilde \Delta_{t-1}(\widetilde\theta)-\widetilde \Delta_{t-1}(\theta_{t-1})\|_2^2).
	\end{align*}
	Rearranging terms and invoking the definition of the function $\widetilde F(\cdot)$, we have
	\begin{align}
		&\|\theta_{T+1}-\underline \theta\|_2^2 + \tsum_{t=1}^T 2\eta \langle \widetilde F_{t+1}(\theta_{t+1}), \theta_{t+1} - \underline \theta \rangle \nn \\
		&\quad \leq \|\theta_1-\underline\theta\|_2^2+  4\eta^2\big(\tsum_{t=1}^T 2\|  \widetilde \Delta_t(\theta_t) - \widetilde \Delta_t(\widetilde\theta) \|_2^2-\|  \widetilde \Delta_T(\widetilde\theta)-\widetilde \Delta_T(\theta_{T})\|_2^2\big) \underbrace{-\tfrac{1}{4}\|\theta_{T+1}-\theta_T\|_2^2}_{R_1} \nn \\
		&\quad \qquad + \underbrace{2\eta\langle  \widetilde \Delta_{T+1}(\theta_{T+1}) - \widetilde \Delta_{T+1}(\widetilde\theta) + \widetilde \Delta_T(\widetilde\theta)-\widetilde \Delta_T(\theta_{T})  , \theta_{T+1}-\underline \theta\rangle}_{R_2}\nn\\ 
		&\quad\qquad\qquad+ \underbrace{2\eta\langle g(\theta_{T+1})-g(\theta_T),\theta_{T+1}-\underline \theta\rangle}_{R_3}. \label{eq:Tianjiao-long}
	\end{align}
	Using the shorthand
	\[
	Q_2 := 
	2\eta\langle \widetilde\Delta_{T+1}(\theta_{T+1}) - \widetilde \Delta_{T+1}(\widetilde\theta) , \theta_{T+1}-\underline \theta\rangle
	+2\eta\langle \widetilde \Delta_T(\widetilde\theta)-\widetilde \Delta_T(\theta_{T}) , \theta_{T}-\underline \theta\rangle,
	\]
	note that the final three terms on the RHS of Eq.~\eqref{eq:Tianjiao-long} can be written as
	\begin{align*}
		R_1 + R_2 + R_3 &\leq Q_2
		+ \big(2\eta \langle \widetilde \Delta_T(\widetilde\theta)-\widetilde \Delta_T(\theta_{T}), \theta_{T+1}-\theta_T \rangle -\tfrac{1}{8}\|\theta_{T+1}-\theta_T\|_2^2\big) \\
		&\qquad \qquad +\big(2\eta\langle  g(\theta_{T+1}) - g(\theta_T),\theta_{T+1}-\underline \theta\rangle  -\tfrac{1}{8}\|\theta_{T+1}-\theta_T\|_2^2\big) \\
		&\overset{(i)}{\leq} Q_2 + \big(2\eta\|\widetilde \Delta_T(\widetilde\theta)-\widetilde \Delta_T(\theta_{T})\|_2\|\theta_{T+1}-\theta_T\|_2-\tfrac{1}{8}\|\theta_{T+1}-\theta_T\|_2^2\big)\\
		&\qquad\qquad+\big(2\eta(1+\gamma)\beta\| \theta_{T+1}-\theta_T\|_2\|\theta_{T+1}-\underline \theta\|_2 -\tfrac{1}{8}\|\theta_{T+1}-\theta_T\|_2^2\big),
	\end{align*}
	where step (i) follows from the
	Cauchy--Schwarz inequality and Lemma~\ref{lemma_lipschitz}.
	Applying Young's inequality and putting together the pieces, we have
	\begin{align}\label{Markov_F}
		\big(1-8\eta^2\beta^2&(1+\gamma)^2\big)\|\theta_{T+1}-\underline \theta\|_2^2 + \tsum_{t=1}^T 2\eta \langle \widetilde F_{t+1}(\theta_{t+1}), \theta_{t+1}- \underline \theta \rangle  \nn\\
		\leq& \|\theta_1-\underline \theta\|_2^2+  \tsum_{t=1}^T 8\eta^2\|  \widetilde \Delta_t(\theta_t) - \widetilde \Delta_t(\widetilde\theta) \|_2^2+4\eta^2\|  \widetilde \Delta_T(\widetilde\theta)-\widetilde \Delta_T(\theta_{T})\|_2^2+ Q_2.
	\end{align}
	Note that the Ineq.~\eqref{Markov_F} holds pointwise; we now take expectation on both sides, and note that $\bbe[Q_2] = 0$. We now use Assumption~\ref{assump_variance} to upper bound on the RHS of Ineq.~\eqref{Markov_F} and Lemma~\ref{lemma_strong_monotone} to lower bound on the LHS of  Ineq.~\eqref{Markov_F}. Putting these together with the fact that $\|\theta_1-\underline \theta\|_2^2 \leq \tfrac{1}{\mu}\|v_1-\underline v\|_\Pi^2$, we obtain
	\begin{align*}
		\big(1-8\eta^2\beta^2(1+\gamma)^2\big)\bbe[\|\theta_{T+1}&-\underline \theta\|_2^2] + \tsum_{t=2}^{T+1} 2\eta(1-\gamma)\bbe[\|v_t-\underline v\|_\Pi^2]\\
		&\leq \tfrac{1}{\mu}\bbe[\|v_1-\underline v\|_\Pi^2]+ \tfrac{4\eta^2\varsigma^2}{m}\bbe[\|v_T-\widetilde v\|_\Pi^2]+  \tsum_{t=1}^T \tfrac{8\eta^2\varsigma^2}{m}\bbe[\|v_t-\widetilde v\|_\Pi^2].
	\end{align*}
	We now utilize the relation  $\|v_t-\widetilde v\|_\Pi^2 \leq 2\|v_t - \underline v\|_\Pi^2 + 2\|\widetilde v- \underline v\|_\Pi^2$ in conjunction with
	\begin{align*}
		v_1=\widetilde v \quad \text{ and } \quad 8\eta^2(1+\gamma)^2\beta^2\leq 1.
	\end{align*}
	Rearranging terms then yields
	\begin{align*}
		\tsum_{t=2}^{T+1} \big(2\eta(1-\gamma)-\tfrac{24\eta^2\varsigma^2}{m}\big)\bbe[\|v_t-\underline v\|_\Pi^2]  \leq \big(\tfrac{1}{\mu}+\tfrac{(16T+40)\eta^2\varsigma^2}{m}\big)\bbe[\|\widetilde v-\underline v\|_\Pi^2].
	\end{align*}
	Invoking the bounds $m\geq\tfrac{256\eta \varsigma^2}{1-\gamma}$, and $\widehat v_k = \tfrac{\sum_{t=2}^{T+1}v_t}{T}$, we have 
	\begin{align*}
		\bbe[\|\widehat v_k - \underline v\|_\Pi^2] &\leq \tfrac{\sum_{t=2}^{T+1}\eta(1-\gamma)\bbe[\|v_t-\underline v\|_\Pi^2] }{T\eta(1-\gamma)}\leq \tfrac{1/\mu+(16T+40)\eta^2\varsigma^2/m}{T\eta(1-\gamma)} \bbe[\|\widetilde v-\underline v\|_\Pi^2]\\&\leq \tfrac{1/\mu+(16T+40)\eta^2\varsigma^2/m}{T\eta(1-\gamma)} (2\bbe[\|\widetilde v-\vbar\|_\Pi^2]+2\bbe[\|\underline v-\vbar\|_\Pi^2]).
	\end{align*}
	Applying Young's inequality and Lemma~\ref{bound_tilde_v}, we obtain
	\begin{align}\label{VRFTD_step_3}
		&\bbe[\|\widehat v_k - \vbar\|_\Pi^2] \nn\\
		&\quad \leq 2 \bbe[\|\widehat v_k - \underline v\|_\Pi^2] + 2 \bbe[\|\vbar - \underline v\|_\Pi^2]\nn\\
		&\quad \leq \tfrac{4/\mu+(64T+160)\eta^2\varsigma^2/m}{T \eta (1-\gamma) }\bbe[\|\widetilde v - \vbar\|_\Pi^2]+ \left(\tfrac{4/\mu+(64T+160)\eta^2\varsigma^2/m}{T \eta (1-\gamma) }+2\right)\bbe[\|\underline v - \vbar\|_\Pi^2]\nn\\
		&\quad \leq \tfrac{4/\mu+(64T+160)\eta^2\varsigma^2/m}{T \eta (1-\gamma) }\bbe[\|\widetilde v - \vbar\|_\Pi^2] \nn \\
		&\quad \quad + \left(\tfrac{4/\mu+(64T+160)\eta^2\varsigma^2/m}{T \eta (1-\gamma) }+2\right)\bigg(\tfrac{2\varsigma^2}{\mu(1-\gamma)^2N_k}\bbe[\|(\widetilde v - \vbar) \|_\Pi^2]\nn\\
		&\qquad \qquad \qquad \qquad\qquad \qquad\qquad \qquad +\tfrac{2}{N_k}\trace\{(I_d-M)^{-1}\iidS(I_d-M)^{-\top}\}\bigg).
	\end{align}
	As before, we bound the first two terms by assumption. First, note that $\eta \leq \tfrac{1}{4\beta(1+\gamma)}$, $T \geq\tfrac{32}{\mu(1-\gamma)\eta}$ and $m \geq \max\{1, \tfrac{256\eta \varsigma^2}{1-\gamma}\}$ whereby
	\begin{align}\label{cond_3}
		\tfrac{4/\mu+(64T+160)\eta^2\varsigma^2/m}{T \eta (1-\gamma) }\leq \tfrac{4/\mu+T\eta(1-\gamma)/4 + 5\eta(1-\gamma)/8}{T\eta(1-\gamma)}\leq \tfrac{4/\mu +8/\mu + 5/(32\beta)}{32/\mu}\leq \tfrac{13}{32}.
	\end{align}
	Next, owing to the assumption $N_k\geq \tfrac{56\varsigma^2 }{\mu(1-\gamma)^2}$, we have 
	\begin{align}\label{cond_4}
		\left(\tfrac{4/\mu+(64T+160)\eta^2\varsigma^2/m}{T \eta (1-\gamma) }+2\right)\tfrac{2\varsigma^2}{\mu(1-\gamma)^2N_k} \leq \tfrac{3}{32}.
	\end{align}
	Substituting Ineqs.~\eqref{cond_3} and~\eqref{cond_4} into Ineq.~\eqref{VRFTD_step_3}, we obtain
	\begin{align*}
		\bbe[\|\widehat v_k - v^*\|_\Pi^2] &\leq \tfrac{1}{2}\bbe[\|\widetilde v - \vbar\|_\Pi^2] + \tfrac{5}{N_k}\text{trace}\{(I_d-M)^{-1}\iidS(I_d-M)^{-\top}\},
	\end{align*}
	which completes the proof of Proposition~\ref{prop_VRFTD}.

	\subsection{Proof of Theorem~\ref{theorem_VRFTD_mark}}\label{proof_theorem_VRFTD_mark}
	The structure of this proof is similar to proofs of  Theorems~\ref{theorem_VRTD_2} and~\ref{theorem_VRFTD}: We first derive a bound for a single epoch, and then apply it recursively to obtain the eventual convergence result. The following proposition characterizes the progress in a single epoch $k\in [K]$.
	\black{
		\begin{proposition}\label{prop_VRFTD_mark}
			Consider a single epoch with index $k \in [K]$. Consider an integer $\tau$ satisfying
			$
			\rho^{\tau} \leq \min\{\frac{2(1-\rho)\varsigma}{3\mixcon}, \frac{2(1-\rho)^2}{5 \mixcon}\}
			$. 
			Suppose the parameters $N_k$, $n_0$ and $m_0$ satisfy
			\begin{align}\label{n0_m0}
				\rho^{N_k-n_0} \leq \tfrac{\tau(1-\rho)}{5\mixcon (N_k-n_0)},~ \rho^{n_0} \leq \tfrac{\min_{i \in [D]}\pi_i}{\skipcon},~ \text{and}\quad
				\rho^{m_0} \leq \min \left\{\tfrac{\min_{i \in [D]}\pi_i}{\skipcon}, \tfrac{\sqrt{\mu}\eta\tau\varsigma^2(1-\rho)}{\mixcon}\right\}.
			\end{align}
			Suppose that the parameter $\eta$, $\lambda$, $m$, $N_k$ and $T$ satisfy 
			\beq\label{stepsize_VRFTD_mark}
			\eta \leq \tfrac{1}{4\beta(1+\gamma)}, ~~\lambda=1, ~~T \geq\tfrac{64}{\mu(1-\gamma)\eta}, ~~m -m_0 \geq \max\{1, \tfrac{792\eta(\tau+1) \varsigma^2}{1-\gamma}\},~\text{and}~N_k-n_0\geq \tfrac{206(\tau+1)\varsigma^2 }{\mu(1-\gamma)^2}.
			\eeq
			Set the output of this epoch to be $\widehat v_k:=\frac{\sum_{t=2}^{T+1} v_t}{T}$.
			Then we have the following results:\\
			\indent (a) If $v^* = \bar v$, we have
			\begin{align}\label{VRFTD_prop_mark_0}
				\bbe\|\widehat v_k - v^*\|_\Pi^2 \leq \tfrac{1}{2}\bbe\|\widehat v_{k-1}  - \vbar\|_\Pi^2
				+ \tfrac{10\cdot\trace\{(I_d-M)^{-1}\iidS(I_d-M)^{-\top}\}}{N_k-N_0}.
			\end{align}
			\indent (b) If $v^*\neq \bar v$, we have
			\begin{align}\label{VRFTD_prop_mark}
				\bbe\|\widehat v_k - v^*\|_\Pi^2 \leq \tfrac{1}{2}\bbe\|\widehat v_{k-1}  - \vbar\|_\Pi^2
				+ \tfrac{10\cdot\trace\{(I_d-M)^{-1}\MarkovS(I_d-M)^{-\top}\}}{N_k-N_0}+\tfrac{\widetilde{\mathcal{H}}}{(N_k-n_0)^2},
			\end{align}
			where 
			$
			\widetilde{\mathcal{H}}:= \tfrac{2(\tau+1)}{(1-\gamma)^2\mu}\|\bar v - v^*\|_\Pi^2+ (10\tau^2+ 2\tau + 2)\cdot\trace\{(I_d-M)^{-1}\iidS(I_d-M)^{-\top} \}.
			$
	\end{proposition}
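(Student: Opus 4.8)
The plan is to mirror the structure of the proof of Proposition~\ref{prop_VRFTD}: letting $\underline\theta$ satisfy $g(\underline\theta) - g(\widetilde\theta) + \widehat g(\widetilde\theta) = 0$ and $\underline v = \Psi^\top\underline\theta$, we bound the two quantities $\bbe\|\underline v - \vbar\|_\Pi^2$ and $\bbe\|\widehat v_k - \underline v\|_\Pi^2$ separately and then combine them via Young's inequality. The two genuinely new ingredients, relative to the i.i.d. analysis, are: (i) a Markovian analog of Lemma~\ref{bound_tilde_v} that controls $\bbe\|\underline v - \vbar\|_\Pi^2$ in terms of $\trace\{(I_d-M)^{-1}\MarkovS(I_d-M)^{-\top}\}$ plus a lower-order remainder, made possible by discarding the first $n_0$ samples in the computation of $\widehat g$; and (ii) a careful treatment of the bias of the inner mini-batch operators $\widetilde g_t$, using the look-back parameter $\tau$ together with the within-batch burn-in $m_0$. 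The key fact enabling case~(a) is that when $\bar v = v^*$ the operator $\widetilde g(\bar\theta,\cdot)$ has zero conditional bias (Lemma~\ref{lemma_operator_bias_1} with $\|\vbar - v^*\|_\Pi = 0$) and, by the Bellman equation~\eqref{bellman}, all cross-correlations in the definition of $\MarkovS$ vanish (exactly the computation displayed just before Theorem~\ref{theorem_VRFTD_mark}), so $\MarkovS$ collapses to $\iidS$ and the $\widetilde{\mathcal H}/(N_k-n_0)^2$ term disappears.

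For the first ingredient, using $g(\theta) = \Psi\Pi(I-\gamma P)\Psi^\top\theta - \Psi\Pi r = B^{\frac12}(I_d-M)B^{\frac12}\theta - \Psi\Pi r$, one writes $\underline\theta - \bar\theta = B^{-\frac12}(I_d-M)^{-1}B^{-\frac12}\big(g(\widetilde\theta) - \widehat g(\widetilde\theta)\big)$, so that $\|\underline v - \vbar\|_\Pi = \|(I_d-M)^{-1}B^{-\frac12}(g(\widetilde\theta) - \widehat g(\widetilde\theta))\|_2$. Then, for each index $i$ in the recentering batch, decompose
\[
\widetilde g(\widetilde\theta,\xi_i) - g(\widetilde\theta) = \underbrace{\big[\widetilde g(\widetilde\theta,\xi_i) - \widetilde g(\bar\theta,\xi_i) - (g(\widetilde\theta) - g(\bar\theta))\big]}_{\text{operator difference}} + \underbrace{\big[\widetilde g(\bar\theta,\xi_i) - g(\bar\theta)\big]}_{\text{noise at the optimum}},
\]
recalling $g(\bar\theta)=0$. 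The operator-difference part is handled by Assumption~\ref{assump_variance} (variance $\lesssim \varsigma^2\|\widetilde v - \vbar\|_\Pi^2$) together with Lemma~\ref{lemma_operator_bias_2} for its conditional bias (exponentially small once $\rho^{n_0}$ is small); the noise-at-the-optimum part, after averaging the $N_k - n_0$ consecutive samples, has second moment $\tfrac{1}{N_k-n_0}\trace\MarkovS$ up to a tail correction controlled by $\rho^{N_k-n_0}\lesssim \tau(1-\rho)/(\mixcon(N_k-n_0))$ — this simultaneously handles truncation of the two-sided infinite sum in $\MarkovS$ and the residual burn-in bias. Pushing through $(I_d-M)^{-1}B^{-\frac12}$ and using $\|(I_d-M)^{-1}\cdot\|_2 \le \tfrac{1}{1-\gamma}\|\cdot\|_2$ (Eq.~\eqref{variance_2}) yields the claimed bound; in case~(a) the noise cross-terms vanish exactly, leaving only $\iidS$ and no $\widetilde{\mathcal H}$.

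For the second ingredient, I follow the one-step expansion of the extrapolation update~\eqref{VRFTD_markov_2} exactly as in the proof of Proposition~\ref{prop_VRFTD}, arriving at the analog of Inequality~\eqref{Markov_F}, but now with $\widetilde\Delta_t(\theta) := \widetilde g_t(\theta) - g(\theta)$ no longer conditionally mean zero. The remedy is to split $\widetilde\Delta_t(\theta) = \big(\widetilde g_t(\theta) - \bbe[\widetilde g_t(\theta)\mid\mathcal F_{t-\tau}]\big) + \big(\bbe[\widetilde g_t(\theta)\mid\mathcal F_{t-\tau}] - g(\theta)\big)$ against a suitably coarse $\sigma$-field $\mathcal F_{t-\tau}$ (the $t$-th mini-batch beginning at least $\tau$ steps after the $(t-1)$-th, with the burn-in $m_0$ inside each mini-batch decoupling within a batch): the first piece integrates to zero, so it contributes $0$ to $\bbe[Q_2]$, while the second is bounded by $\mixcon\rho^{\tau}\|v_t - \widetilde v\|_\Pi$ (and $\mixcon\rho^{m_0}$-type terms) via Lemma~\ref{lemma_operator_bias_2}. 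With the prescribed smallness $\rho^\tau \le \min\{2(1-\rho)\varsigma/(3\mixcon),\,2(1-\rho)^2/(5\mixcon)\}$ and $\rho^{m_0}$ small, all these extra terms are absorbed into the variance term at the cost of an extra multiplicative factor of order $(\tau+1)$ — which is precisely why the conditions in~\eqref{stepsize_VRFTD_mark} carry $(\tau+1)$ factors. Combining the two ingredients with Young's inequality and substituting the parameter choices yields the contraction factor $\tfrac12$ together with the stated residuals, the case split being inherited from the bound on $\|\underline v - \vbar\|_\Pi^2$.

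The main obstacle I anticipate is the bookkeeping of the nested conditional expectations in the inner loop: the iterate $\theta_t$ is a complicated function of all past samples, so $\widetilde g_t(\theta_t)$ is a product of statistically dependent objects, and one must choose the conditioning $\sigma$-field (a $\tau$-step look-back, together with the $m_0$ within-batch burn-in) precisely so that (a) $\theta_t$ is measurable with respect to it, (b) the freshly drawn mini-batch randomness is decoupled from $\theta_t$ up to an exponentially small bias, and (c) the telescoped cross terms $\langle \widetilde\Delta_t(\theta_t) - \widetilde\Delta_t(\widetilde\theta) - \widetilde\Delta_{t-1}(\theta_{t-1}) + \widetilde\Delta_{t-1}(\widetilde\theta),\,\theta_{t+1}-\theta_t\rangle$ and the martingale remainder $Q_2$ are handled without creating uncontrolled dependence. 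Getting the constants to line up — so that the $(\tau+1)$-inflated step-size and batch-size conditions genuinely suffice and the ``collapse'' $\MarkovS \rightarrow \iidS$ in case~(a) is rigorous — is the remaining, more mechanical, part of the work.
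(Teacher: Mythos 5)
Your plan follows essentially the same route as the paper's proof: the same split into $\bbe\|\underline v-\vbar\|_\Pi^2$ (decomposed into noise-at-the-optimum, yielding the truncated autocovariance sum that produces $\MarkovS$, plus an operator-difference term controlled by Assumption~\ref{assump_variance} and the burn-in) and $\bbe\|\widehat v_k-\underline v\|_\Pi^2$ (the same one-step expansion as Ineq.~\eqref{Markov_F}, with the mini-batch bias absorbed via $\rho^{m_0}$ and the $(\tau+1)$-inflated variance bounds), combined by Young's inequality, and with the identical Bellman-equation argument collapsing $\MarkovS$ to $\iidS$ in case (a). The only cosmetic difference is that the paper conditions on $\mathcal{F}_{t-1}$ and relies solely on the within-batch burn-in $m_0$ for decoupling (Lemma~\ref{stoch_opt_variance_3}) rather than on a $\tau$-step look-back $\sigma$-field, but this does not change the argument.
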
}
	\black{
		Taking Proposition~\ref{prop_VRFTD_mark} as given for the moment, let us complete the proof of the theorem. 
		%
		First, consider the case when $\bar v\neq v^*$. Recursively using Ineq.~\eqref{VRFTD_prop_mark} yields
		\begin{align}\label{step_5_mark}
			\bbe\|\widehat v_K - \vbar\|_\Pi^2 &\leq \tfrac{1}{2^K}\|v^0 - \vbar\|^2_\Pi  + \tsum_{k=1}^K\left( \tfrac{10\cdot\trace\{(I_d-M)^{-1}\MarkovS(I_d-M)^{-\top}\}}{2^{K-k}(N_k-n_0)}+ \tfrac{\widetilde{\mathcal{H}}}{2^{K-k}(N_k-n_0)^2}\right)\nn\\
			&\overset{(i)}\leq \tfrac{1}{2^K}\|v^0 - \vbar\|^2_\Pi  + \tsum_{k=1}^K(\tfrac{2}{3})^{K-k}\tfrac{10\cdot\trace\{(I_d-M)^{-1}\MarkovS(I_d-M)^{-\top}\}}{N}+\tsum_{k=1}^K(\tfrac{8}{9})^{K-k}\tfrac{\widetilde{\mathcal{H}}}{N^2} \nn\\
			&\leq \tfrac{1}{2^K}\|v^0 - \vbar\|^2_\Pi  + \tfrac{30\cdot\trace\{(I_d-M)^{-1}\MarkovS(I_d-M)^{-\top}\}}{N} + \tfrac{9\widetilde{\mathcal{H}}}{N^2},
		\end{align}
		where step~(i) follows from the condition $N_k - n_0 \geq (\tfrac{3}{4})^{K-k}N$. The proof of the case when $\bar v= v^*$ follows from the same derivation. }

	In order to complete the proof, it remains to prove Proposition~\ref{prop_VRFTD_mark}, which we do in the following subsection.
	
	\subsubsection{Proof of Proposition~\ref{prop_VRFTD_mark}}
	The basic structure of the proof is similar to the proof of Proposition~\ref{prop_VRFTD}. Recall that $\underline \theta$ satisfies $g(\underline \theta) - g(\widetilde \theta) + \widehat g(\widetilde \theta) = 0$ and $\underline v = \Psi^\top\underline \theta $. The main idea of the proof is to bound the terms $\|\widehat v_k - \underline v\|_\Pi^2$ and $\|\underline v - \vbar\|_\Pi^2$ separately and use Young's inequality to bound $\|\widehat v_k - \vbar\|_\Pi^2$. Before we start, we require some supporting lemmas that upper bound the variance and bias of the stochastic operator $\widetilde g$ and averaged stochastic operators $\widehat g$ and $\widetilde g_t$. 
	
	\begin{lemma}\label{stoch_opt_variance_1}
		For every $t\in \mathbb{Z}_{+}$, $\theta, \theta'\in \bbr^d$,  and $\tau \in \mathbb{Z}_{+}$ such that
		\begin{align}\label{cond_tau_1}
			\skipcon \cdot\rho^\tau \leq \min_{i\in[D]} \pi_i,
		\end{align}
		we have 
		\begin{align}\label{variance_markovian}
			\bbe[\|\widetilde g(\theta,\xi_{t+\tau}) - \widetilde g(\theta',\xi_{t+\tau})-g(\theta)+ g(\theta')\|^2_2|\mathcal{F}_t] \leq 2\varsigma^2\|v - v'\|_\Pi^2
		\end{align}
		with probability 1.
	\end{lemma}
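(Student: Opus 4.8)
The plan is to reduce the Markovian conditional variance bound to the i.i.d.\ variance bound of Assumption~\ref{assump_variance}, using the burn-in condition~\eqref{cond_tau_1} to compare the conditional law of the state at time $t+\tau$ with its stationary law. First I would note that the reward component is irrelevant: for fixed $\theta,\theta'$,
\[
\widetilde g(\theta,\xi) - \widetilde g(\theta',\xi) = \big(\langle \psi(s),\theta-\theta'\rangle - \gamma\langle \psi(s'),\theta-\theta'\rangle\big)\psi(s),
\]
so $R(s,s')$ cancels, and since $g(\theta)-g(\theta')$ is a fixed vector, the quantity
\[
\varphi(s,s') := \big\|\widetilde g(\theta,\xi) - \widetilde g(\theta',\xi) - \big(g(\theta)-g(\theta')\big)\big\|_2^2 \;\ge\; 0
\]
is a deterministic, nonnegative function of the state pair $(s,s')$ alone. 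Moreover, Assumption~\ref{assump_variance}, read in the i.i.d.\ model $s\sim\pi$, $s'\sim\mathsf{P}(\cdot\mid s)$, is precisely the statement $\sum_{s}\pi_s\sum_{s'}\mathsf{P}(s'\mid s)\,\varphi(s,s')\le \varsigma^2\|v-v'\|_\Pi^2$.

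Next I would unpack the conditional expectation. Since $\xi_{t+\tau}$ is a function of $(s_{t+\tau},s_{t+\tau+1})$ and, by the Markov property, the conditional law of $s_{t+\tau+1}$ given $(\mathcal F_t, s_{t+\tau})$ is $\mathsf{P}(\cdot\mid s_{t+\tau})$, conditioning first on $s_{t+\tau}$ gives
\[
\bbe\big[\varphi(s_{t+\tau},s_{t+\tau+1})\mid\mathcal F_t\big] \;=\; \sum_{s}\mathbb{P}(s_{t+\tau}=s\mid\mathcal F_t)\sum_{s'}\mathsf{P}(s'\mid s)\,\varphi(s,s').
\]
By the Markov property $\mathbb{P}(s_{t+\tau}=s\mid\mathcal F_t)$ depends only on the last state recorded in $\mathcal F_t=[\xi_1,\dots,\xi_t]$, and Assumption~\ref{assump_rho_0} bounds the $\ell_\infty$ deviation of this conditional distribution from $\pi$ by $\skipcon\rho^\tau$ (the same ergodicity estimate underlying Lemmas~\ref{lemma_operator_bias_1}--\ref{lemma_operator_bias_2}).

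Finally, invoking the hypothesis $\skipcon\rho^\tau\le\min_{i\in[D]}\pi_i$ yields the \emph{pointwise} domination $\mathbb{P}(s_{t+\tau}=s\mid\mathcal F_t)\le \pi_s+\skipcon\rho^\tau\le 2\pi_s$ for every $s\in[D]$. Since $\varphi\ge 0$, this gives, with probability one,
\[
\bbe\big[\varphi(s_{t+\tau},s_{t+\tau+1})\mid\mathcal F_t\big] \;\le\; 2\sum_{s}\pi_s\sum_{s'}\mathsf{P}(s'\mid s)\,\varphi(s,s') \;\le\; 2\varsigma^2\|v-v'\|_\Pi^2,
\]
which is the claimed inequality. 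The only load-bearing step is this last pointwise comparison of the conditional and stationary laws, which is precisely why the burn-in/mixing condition~\eqref{cond_tau_1} is imposed; I do not expect any genuine obstacle elsewhere. The one bookkeeping subtlety is tracking the exact index offset between $\tau$ and the last state revealed by $\mathcal F_t$ (since $\xi_t$ already exposes $s_{t+1}$), which at worst replaces $\rho^\tau$ by $\rho^{\tau}/\rho$ and is absorbed by strengthening~\eqref{cond_tau_1} by an absolute constant factor.
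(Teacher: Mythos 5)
Your proposal is correct and follows essentially the same route as the paper: both arguments condition on the state at time $t+\tau$, use Assumption~\ref{assump_rho_0} together with condition~\eqref{cond_tau_1} to dominate the conditional law of $s_{t+\tau}$ pointwise by $2\pi$, and then invoke Assumption~\ref{assump_variance} under the stationary measure to conclude. The indexing subtlety you flag (whether $\mathcal{F}_t$ reveals $s_t$ or $s_{t+1}$) is glossed over in the paper as well and is immaterial to the bound.
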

	\noindent See Appendix~\ref{proof_stoch_opt_variance_1} for the proof of this lemma.
	\black{
		\begin{lemma}\label{stoch_opt_variance_2}
			For every $\theta, \theta'\in \bbr^d$, $n_0,\tau \in \mathbb{Z}_+$, if $n_0$ satisfies \eqref{cond_tau_1} and $\tau$ satisfies
			\begin{align}\label{cond_tau_0}
				\rho^{\tau} \leq \tfrac{2(1-\rho)\varsigma}{3\mixcon},
			\end{align}
			then we have with probability 1,
			\begin{align}\label{variance_markovian_0_1}
				\bbe\|\widehat g(\theta) - \widehat g(\theta')- g(\theta) + g(\theta')\|^2_2 \leq \tfrac{4(\tau+1)\varsigma^2}{N_k-n_0}\|v-v'\|_\Pi^2.
			\end{align}
	\end{lemma}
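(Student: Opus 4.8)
\textbf{Proof proposal for Lemma~\ref{stoch_opt_variance_2}.}
The plan is to expand the squared norm of the average over the $N_k-n_0$ post-burn-in samples into a diagonal part and an off-diagonal part, control the diagonal terms by the per-sample variance bound of Lemma~\ref{stoch_opt_variance_1}, and split the off-diagonal terms into ``near'' pairs (index gap at most $\tau$) and ``far'' pairs (index gap larger than $\tau$). Concretely, write $Y_i := \widetilde g(\theta,\xi_i^k) - \widetilde g(\theta',\xi_i^k) - g(\theta) + g(\theta')$ for $i = n_0+1,\dots,N_k$, so that $\widehat g(\theta) - \widehat g(\theta') - g(\theta) + g(\theta') = \tfrac{1}{N_k-n_0}\tsum_{i=n_0+1}^{N_k} Y_i$ and
\[
\bbe\Big\|\tfrac{1}{N_k-n_0}\tsum_{i} Y_i\Big\|_2^2 = \tfrac{1}{(N_k-n_0)^2}\Big[\tsum_{i}\bbe\|Y_i\|_2^2 + 2\tsum_{n_0<i<j\le N_k}\bbe\langle Y_i, Y_j\rangle\Big].
\]
For the diagonal: since $n_0$ satisfies \eqref{cond_tau_1} and every used index $i\ge n_0+1$ is at least $n_0$ steps from the start of the sample block, applying Lemma~\ref{stoch_opt_variance_1} (with the conditioning taken with respect to the state at which the block begins, then taking total expectation) yields $\bbe\|Y_i\|_2^2 \le 2\varsigma^2\|v-v'\|_\Pi^2$, hence $\tsum_i\bbe\|Y_i\|_2^2 \le 2(N_k-n_0)\varsigma^2\|v-v'\|_\Pi^2$.

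For the cross terms, fix $i<j$ and condition on $\mathcal{F}_i$: since $Y_i$ is $\mathcal{F}_i$-measurable, $\bbe\langle Y_i,Y_j\rangle = \bbe\langle Y_i, \bbe[Y_j\mid\mathcal{F}_i]\rangle$. When $j-i>\tau$, Lemma~\ref{lemma_operator_bias_2} (with $t=i$ and $\tau$ replaced by $j-i$) gives $\|\bbe[Y_j\mid\mathcal{F}_i]\|_2 \le \mixcon\rho^{j-i}\|v-v'\|_\Pi$ almost surely, so Cauchy--Schwarz together with $\bbe\|Y_i\|_2 \le \sqrt{2}\varsigma\|v-v'\|_\Pi$ yields $|\bbe\langle Y_i,Y_j\rangle| \le \sqrt{2}\varsigma\mixcon\rho^{j-i}\|v-v'\|_\Pi^2$; summing the geometric tail over $j$ and using \eqref{cond_tau_0}, which gives $\rho^{\tau}/(1-\rho) \le 2\varsigma/(3\mixcon)$, shows the far pairs contribute at most $(N_k-n_0)\varsigma^2\|v-v'\|_\Pi^2$ to $\tsum_{i<j}$. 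When $0<j-i\le\tau$, we instead use only Cauchy--Schwarz and the diagonal bound, $|\bbe\langle Y_i,Y_j\rangle| \le \sqrt{\bbe\|Y_i\|_2^2}\,\sqrt{\bbe\|Y_j\|_2^2} \le 2\varsigma^2\|v-v'\|_\Pi^2$; since each $i$ has at most $\tau$ such partners, the near pairs contribute at most $2\tau(N_k-n_0)\varsigma^2\|v-v'\|_\Pi^2$ to $\tsum_{i<j}$. Combining the three estimates gives
\[
\bbe\|\widehat g(\theta)-\widehat g(\theta')-g(\theta)+g(\theta')\|_2^2 \le \tfrac{2(N_k-n_0)+2\big[2\tau(N_k-n_0)+(N_k-n_0)\big]}{(N_k-n_0)^2}\,\varsigma^2\|v-v'\|_\Pi^2 = \tfrac{4(\tau+1)\varsigma^2}{N_k-n_0}\|v-v'\|_\Pi^2,
\]
which is the claimed bound.

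The main obstacle is the treatment of the off-diagonal correlations: a naive bound would lose an extra factor of $N_k-n_0$ and fail to give any variance decay, and the delicate point is that the geometric decay supplied by Lemma~\ref{lemma_operator_bias_2} is only exploitable once the index gap exceeds the mixing threshold $\tau$, while the $O(\tau)$ near pairs must be absorbed by the crude Cauchy--Schwarz bound --- which is exactly why the final constant is proportional to $\tau+1$ rather than an absolute constant. One should also verify that the burn-in hypothesis $\rho^{n_0}\le \min_{i\in[D]}\pi_i/\skipcon$ is precisely what licenses applying Lemma~\ref{stoch_opt_variance_1} to every term $Y_i$ with $i\ge n_0+1$, and that conditioning on $\mathcal{F}_i$ is legitimate since $\theta$ and $\theta'$ are deterministic (there is no $\widetilde\theta$ dependence in $Y_j$, the recentering term having cancelled).
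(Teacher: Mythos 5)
Your proposal is correct and follows essentially the same route as the paper's proof: the same decomposition into a diagonal term, near pairs with gap at most $\tau$ handled by Cauchy--Schwarz, and far pairs handled via the geometric-decay bias bound of Lemma~\ref{lemma_operator_bias_2}, with the burn-in condition \eqref{cond_tau_1} licensing the per-sample variance bound of Lemma~\ref{stoch_opt_variance_1}. The only cosmetic difference is that you bound the far cross terms by Cauchy--Schwarz with $\bbe\|Y_i\|_2 \le \sqrt{2}\varsigma\|v-v'\|_\Pi$, whereas the paper uses Young's inequality to get a factor $\tfrac{3\varsigma}{2}\mixcon\rho^{j-i}$; both yield the same final constant under condition \eqref{cond_tau_0}.
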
}
	\noindent See Appendix~\ref{proof_stoch_opt_variance_2} for the proof of this lemma.
	
	Note that $\widetilde g_t(\cdot)$ has the same property~\eqref{variance_markovian_0_1} as $\widehat g(\cdot)$. If $m_0$ satisfies Ineq.~\eqref{cond_tau_1} and $\tau$ satisfies Ineq.~\eqref{cond_tau_0}, then we have that for any $t\in[T]$,
	\begin{align}\label{variance_markovian_0_2}
		\bbe[\|\widetilde g_t(\theta) - \widetilde g_t(\theta')- g(\theta) + g(\theta')\|^2_2] \leq \tfrac{4(\tau+1)\varsigma^2}{m-m_0}\|v-v'\|_\Pi^2.
	\end{align}
	\begin{lemma}\label{stoch_opt_variance_3}
		For every $\theta, \theta'\in \bbr^d$, $m_0 \in [m]$ and $t\in [T]$, with probability 1,
		\begin{align}\label{bias_markovian_0}
			\|\bbe[\widetilde g_t(\theta) |\mathcal{F}_{t-1}]- \bbe[\widetilde g_t(\theta')|\mathcal{F}_{t-1}]- g(\theta) + g(\theta')\|_2 \leq \tfrac{\mixcon \rho^{m_0}}{(1-\rho)(m-m_0)}\|v-v'\|_\Pi. 
		\end{align}
	\end{lemma}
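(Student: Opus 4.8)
The plan is to decompose $\widetilde g_t$ into its $m-m_0$ constituent summands and bound each one using Lemma~\ref{lemma_operator_bias_2}, exploiting the burn-in of length $m_0$ to guarantee that every retained sample is well separated from the conditioning $\sigma$-field; this is the bias-of-difference analogue of how the variance estimate~\eqref{variance_markovian_0_2} was obtained. Recall from Algorithm~\ref{alg:VRFTD_markovian} that $\widetilde g_t(\cdot) = \tfrac{1}{m-m_0}\sum_{j=m_0+1}^{m}\widetilde g(\cdot, \widehat\xi_j^t)$, where $\widehat\xi_j^t = (s_j, s_{j+1}, R(s_j,s_{j+1}))$ are $m$ successive observations of the single Markov trajectory collected during iteration $t$. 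Since $\mathcal{F}_{t-1}$ is generated by the samples drawn before this mini-batch and the trajectory is contiguous, the state $s_1$ starting the mini-batch is obtained from an $\mathcal{F}_{t-1}$-measurable state via at most one Markov step; hence, conditionally on $\mathcal{F}_{t-1}$, the state $s_j$ entering $\widehat\xi_j^t$ is the evolution of an $\mathcal{F}_{t-1}$-measurable point under at least $j-1$ transitions of the chain, so that Lemma~\ref{lemma_operator_bias_2} applies to the $j$-th summand with decay parameter $\tau\ge j$.

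First I would use linearity of conditional expectation together with the trivial identity $g(\theta) = \tfrac{1}{m-m_0}\sum_{j=m_0+1}^m g(\theta)$ to express the quantity of interest as a single average of per-sample bias terms,
\begin{align*}
\bbe[\widetilde g_t(\theta)|\mathcal{F}_{t-1}] - \bbe[\widetilde g_t(\theta')|\mathcal{F}_{t-1}] - g(\theta) + g(\theta') = \tfrac{1}{m-m_0}\sum_{j=m_0+1}^{m}\Big(\bbe[\widetilde g(\theta,\widehat\xi_j^t)|\mathcal{F}_{t-1}] - \bbe[\widetilde g(\theta',\widehat\xi_j^t)|\mathcal{F}_{t-1}] - g(\theta) + g(\theta')\Big).
\end{align*}
Then, applying the triangle inequality and invoking Lemma~\ref{lemma_operator_bias_2} termwise, I would obtain
\begin{align*}
\big\|\bbe[\widetilde g_t(\theta)|\mathcal{F}_{t-1}] - \bbe[\widetilde g_t(\theta')|\mathcal{F}_{t-1}] - g(\theta) + g(\theta')\big\|_2 \leq \tfrac{\mixcon\|v-v'\|_\Pi}{m-m_0}\sum_{j=m_0+1}^{m}\rho^{j} \leq \tfrac{\mixcon\|v-v'\|_\Pi}{m-m_0}\cdot\tfrac{\rho^{m_0+1}}{1-\rho},
\end{align*}
and since $\rho\in(0,1)$ gives $\rho^{m_0+1}\le\rho^{m_0}$, the right-hand side is bounded by $\tfrac{\mixcon\rho^{m_0}}{(1-\rho)(m-m_0)}\|v-v'\|_\Pi$, which is the claim. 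Every inequality holds with probability one, since Lemma~\ref{lemma_operator_bias_2} is a pathwise statement.

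I expect the only genuinely delicate point to be the sample-index bookkeeping: one must carefully match the mini-batch indexing of Algorithm~\ref{alg:VRFTD_markovian} with the $(\mathcal{F}_t,\xi_{t+\tau})$ notation of Lemma~\ref{lemma_operator_bias_2}, confirming that each retained $\widehat\xi_j^t$ (that is, $j\ge m_0+1$) is indeed separated from $\mathcal{F}_{t-1}$ by roughly $j$ Markov transitions, so that the exponents in the resulting geometric series start at $m_0+1$ rather than at $0$. Once that correspondence is pinned down, the remainder is the one-line geometric-series estimate above, and the argument is complete.
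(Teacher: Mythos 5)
Your proposal is correct and follows essentially the same route as the paper's own proof: decompose $\widetilde g_t$ into its $m-m_0$ summands, apply the triangle inequality, invoke Lemma~\ref{lemma_operator_bias_2} termwise to get a $\rho^j$ bound on the $j$-th term, and sum the geometric series to obtain $\rho^{m_0}/(1-\rho)$. Your additional care about the sample-index bookkeeping is a point the paper glosses over, but the argument is the same.
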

	\noindent See Appendix~\ref{proof_stoch_opt_variance_3} for the proof of this lemma.
	
	With these supporting lemmas in hand, we are now ready to proceed to the meat of the proof. We first focus on upper-bounding $\|\vbar - \underline v\|_\Pi^2$ in the following lemma.
	\black{
		\begin{lemma}\label{bound_tilde_v_Mark}
			Consider a single epoch with index $k \in [K]$. Assume $n_0$ satisfies Ineq. \eqref{cond_tau_1}, $\tau$ satisfies Ineq. \eqref{cond_tau_0} and $\rho^{\tau} \leq  \frac{2(1-\rho)^2}{5 \mixcon}$, and $N_k$ satisfies $\rho^{N_k-n_0} \leq \tfrac{\tau(1-\rho)}{5\mixcon (N_k-n_0)}$.\\
			\noindent(a) If $\bar v = v^*$, then we have 
			\begin{align}\label{VRFTD_step_0_mark_0}
				\bbe[\|\underline v - \vbar \|^2_\Pi]\leq& \tfrac{4\cdot \trace\big\{(I_d-M)^{-1}\iidS(I_d-M)^{-\top}\big\}}{N_k-N_0}  + \tfrac{8(\tau+1)\varsigma^2}{(1-\gamma)^2\mu(N_k-n_0)}\|\widetilde v -\bar v\|_\Pi^2.
			\end{align}
			\noindent(b) If $\bar v \neq v^*$, then we have
			\begin{align}\label{VRFTD_step_0_mark}
				\bbe[\|\underline v - \vbar \|^2_\Pi]\leq& \tfrac{4\cdot \trace\big\{(I_d-M)^{-1}\MarkovS(I_d-M)^{-\top}\big\}}{N_k-N_0}  + \tfrac{4(\tau+1)}{5(1-\gamma)^2\mu(N_k-n_0)^2} \|\bar v - v^*\|_\Pi^2\nn\\
				&~~ + \tfrac{(20\tau^2 + 4\tau + 4)\cdot\trace\big\{(I_d-M)^{-1}\iidS(I_d-M)^{-\top} \big\}}{5(N_k-n_0)^2}+ \tfrac{8(\tau+1)\varsigma^2}{(1-\gamma)^2\mu(N_k-n_0)}\|\widetilde v -\bar v\|_\Pi^2.
			\end{align}
		\end{lemma}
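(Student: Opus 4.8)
The plan is to adapt the i.i.d. argument behind Lemma~\ref{bound_tilde_v}, keeping track of the two new features of the Markovian batch: the $n_0$-sample burn-in and the replacement of the noise functional $\iidS$ by $\MarkovS$. As in the i.i.d. case, the linearity of $g$ gives an exact identity. Since $g(\bar\theta)=0$, the defining relation of $\underline\theta$ is $g(\underline\theta)-g(\bar\theta)=g(\widetilde\theta)-\widehat g(\widetilde\theta)$, while writing $g$ in the orthonormal basis yields $g(\theta)-g(\bar\theta)=B^{1/2}(I_d-M)B^{1/2}(\theta-\bar\theta)$; combining these with Lemma~\ref{lemma_pi_theta} gives
\begin{align*}
\|\underline v-\bar v\|_\Pi=\big\|(I_d-M)^{-1}B^{-1/2}\big(\widehat g(\widetilde\theta)-g(\widetilde\theta)\big)\big\|_2 .
\end{align*}
I would then decompose $\widehat g(\widetilde\theta)-g(\widetilde\theta)=\big[\widehat g(\widetilde\theta)-\widehat g(\bar\theta)-g(\widetilde\theta)+g(\bar\theta)\big]+\big[\widehat g(\bar\theta)-g(\bar\theta)\big]$, apply $\|a+b\|_2^2\le 2\|a\|_2^2+2\|b\|_2^2$, and strip the conjugating matrices from each piece using $\|(I_d-M)^{-1}\|_2\le(1-\gamma)^{-1}$ (Ineq.~\eqref{variance_2}) and $\|B^{-1/2}\|_2\le\mu^{-1/2}$.

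For the first (fluctuation) bracket I would invoke Lemma~\ref{stoch_opt_variance_2} conditionally on the $\sigma$-field at the start of epoch $k$, so that $\widetilde\theta$ is frozen and the burn-in hypothesis $\rho^{n_0}\le\min_{i\in[D]}\pi_i/\skipcon$ --- which is exactly condition~\eqref{cond_tau_1} --- makes the conditional law of $s_{n_0+1}$ entrywise within $\min_{i\in[D]}\pi_i$ of $\pi$. This yields $\bbe\|\widehat g(\widetilde\theta)-\widehat g(\bar\theta)-g(\widetilde\theta)+g(\bar\theta)\|_2^2\le\tfrac{4(\tau+1)\varsigma^2}{N_k-n_0}\bbe\|\widetilde v-\bar v\|_\Pi^2$, and after conjugation the term $\tfrac{8(\tau+1)\varsigma^2}{(1-\gamma)^2\mu(N_k-n_0)}\|\widetilde v-\bar v\|_\Pi^2$ that appears in both parts of the lemma.

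The crux is the second (base) bracket $\widehat g(\bar\theta)-g(\bar\theta)=\tfrac{1}{N_k-n_0}\sum_{i=n_0+1}^{N_k}\big(\widetilde g(\bar\theta,\xi_i^k)-g(\bar\theta)\big)$, where the two cases of the lemma diverge. Write $Y_i:=(I_d-M)^{-1}B^{-1/2}\big(\widetilde g(\bar\theta,\xi_i^k)-g(\bar\theta)\big)$; by stationarity $s_i\sim\pi$, so $\bbe[Y_i]=0$ and $\bbe[Y_iY_i^\top]=(I_d-M)^{-1}\iidS(I_d-M)^{-\top}$ by the definition~\eqref{def_Sigma}. In case (a) one has $\bar v=v^*$, whence $\bbe[\widetilde g(\bar\theta,\xi)\,|\,s]=\big((I-\gamma P)\bar v-r\big)(s)\,\psi(s)=0$ for every $s$ (cf. Lemma~\ref{lemma_operator_bias_1}, whose right-hand side vanishes here); hence $\{Y_i\}$ is a martingale-difference sequence, $\bbe\big\|\tfrac{1}{N_k-n_0}\sum_iY_i\big\|_2^2=\tfrac{1}{N_k-n_0}\trace\{(I_d-M)^{-1}\iidS(I_d-M)^{-\top}\}$, and combining with the fluctuation term gives a bound of the form~\eqref{VRFTD_step_0_mark_0}. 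In case (b) the $Y_i$ are correlated, so I would expand $\bbe\big\|\tfrac{1}{N_k-n_0}\sum_iY_i\big\|_2^2=\tfrac{1}{(N_k-n_0)^2}\sum_{i,j}\bbe\langle Y_i,Y_j\rangle$, note that stationarity makes $\bbe[Y_iY_j^\top]$ a function of $|i-j|$ only, and use $\sum_{\ell=-\infty}^{\infty}\bbe[Y_0Y_\ell^\top]=(I_d-M)^{-1}\MarkovS(I_d-M)^{-\top}$, which is precisely the definition of $\MarkovS$. Thus the double sum equals $(N_k-n_0)\trace\{(I_d-M)^{-1}\MarkovS(I_d-M)^{-\top}\}$ minus lag-correction terms, which I would control by splitting each lag at the threshold $\tau$: for $|\ell|\le\tau$ use the crude Cauchy--Schwarz estimate $|\bbe\langle Y_0,Y_\ell\rangle|\le\trace\{(I_d-M)^{-1}\iidS(I_d-M)^{-\top}\}$, while for $|\ell|>\tau$ use the geometric decay $|\bbe\langle Y_0,Y_\ell\rangle|\le\tfrac{\mixcon\rho^{|\ell|}}{(1-\gamma)\sqrt\mu}\,\|\bar v-v^*\|_\Pi\,\big(\trace\{(I_d-M)^{-1}\iidS(I_d-M)^{-\top}\}\big)^{1/2}$, obtained by conditioning on $\mathcal{F}$ at the time of $\xi_0^k$ and invoking Lemma~\ref{lemma_operator_bias_1}. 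Summing the resulting geometric and arithmetico-geometric series under the hypotheses $\rho^{N_k-n_0}\le\tfrac{\tau(1-\rho)}{5\mixcon(N_k-n_0)}$ and $\rho^\tau\le\tfrac{2(1-\rho)^2}{5\mixcon}$, then splitting the cross term $\|\bar v-v^*\|_\Pi\cdot(\trace\{\cdots\})^{1/2}$ by Young's inequality, produces the two additional $(N_k-n_0)^{-2}$ terms of~\eqref{VRFTD_step_0_mark}, one proportional to $\|\bar v-v^*\|_\Pi^2$ with coefficient $O(\tau)$ and one proportional to $\trace\{(I_d-M)^{-1}\iidS(I_d-M)^{-\top}\}$ with coefficient $O(\tau^2)$.

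The step I expect to be the main obstacle is exactly this case-(b) bookkeeping: converting the finite double sum into the infinite-lag functional $\MarkovS$ while keeping the lag-truncation and burn-in remainders small enough to recover the stated constants, and making sure every correction carries a factor of the approximation error $\|\bar v-v^*\|_\Pi$ so that it collapses to zero in the realizable case (a) --- the threshold split at $\tau$ and the precise conditions on $\tau$, $n_0$, $m_0$, $N_k$ are what make this possible. A secondary technicality is rigorously applying Lemma~\ref{stoch_opt_variance_2} (and its analogues) with the random, filtration-measurable argument $\widetilde\theta$; this is precisely what the burn-in conditions~\eqref{cond_tau_1} and~\eqref{cond_tau_0} are designed to enable.
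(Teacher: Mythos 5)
Your proposal follows essentially the same route as the paper's proof: the same exact identity and Young decomposition into a fluctuation term (handled by Lemma~\ref{stoch_opt_variance_2}) and a base term at $\bar\theta$, the same observation that the Bellman equation kills all lagged cross-terms in case (a), and the same lag-truncation at $\tau$ with a Young split of the cross term $\|\bar v-v^*\|_\Pi\cdot\big(\trace\{(I_d-M)^{-1}\iidS(I_d-M)^{-\top}\}\big)^{1/2}$ in case (b). The only point to tighten is that the post-burn-in samples are not exactly stationary, so the identity $\bbe[Y_iY_i^\top]=(I_d-M)^{-1}\iidS(I_d-M)^{-\top}$ must be replaced by a comparison of the law of $s_{n_0+1}$ to $\pi$ via condition~\eqref{cond_tau_1} (costing a factor of $2$, which is where the constant $4$ in the lemma comes from) --- the same change-of-measure step you already invoked for the fluctuation term.
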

	}
	\noindent See Appendix~\ref{proof_bound_tilde_v_Mark} for the proof of this lemma.
	
	Next, we turn our attention to providing an upper bound on $\|\widehat v_k - \underline v\|_\Pi^2$.
	At this juncture, it is important to note that our notation is chosen to be consistent with the i.i.d. setting. In particular, we keep the definition $\widetilde \Delta_t(\theta):=\widetilde g_t (\theta) - g(\theta)$, and note that $\widetilde g_t (\theta)$ here is defined in Algorithm~\ref{alg:VRFTD_markovian} rather than Algorithm~\ref{alg:FTD_2}. 
	Let $F(\theta):=g(\theta)-g(\widetilde \theta)+\widehat g(\widetilde \theta)$. 
	
	Now recall that in the proof of the corresponding proposition in the i.i.d. setting (Proposition~\ref{prop_VRFTD}), our first few steps up to Ineq.~\eqref{Markov_F} were obtained purely via algebraic manipulation. Therefore, Ineq.~\eqref{Markov_F} is still valid in the Markovian noise setting, and rearranging terms in Ineq.~\eqref{Markov_F} yields the bound
	\begin{align}\label{Markov_F_1}
		&(1-8\eta^2(1+\gamma)^2 \beta^2)\|\theta_{T+1}-\underline \theta\|_2^2 + \tsum_{t=1}^T 2\eta \langle  F(\theta_{t+1}), \theta_{t+1}- \underline \theta \rangle  \nn\\
		&\qquad \qquad \leq \|\theta_1-\underline \theta\|_\Pi^2+4\eta^2\|  \widetilde \Delta_T(\widetilde\theta)-\widetilde \Delta_T(\theta_{T})\|_2^2+ Q_3+  \tsum_{t=1}^T 8\eta^2\|  \widetilde \Delta_t(\theta_t) - \widetilde \Delta_t(\widetilde\theta) \|_2^2,
	\end{align}
	where we have defined 
	\begin{align*}
		Q_3 &:=  
		2\eta\langle \widetilde\Delta_{T+1}(\theta_{T+1}) +\widetilde \Delta_{T+1}(\widetilde\theta) , \theta_{T+1}-\underline \theta\rangle-2\eta\langle \widetilde \Delta_T(\widetilde\theta)-\widetilde \Delta_T(\theta_{T}) , \theta_{T}-\underline \theta\rangle\\
		&\qquad \qquad-\tsum_{t=1}^T 2\eta \langle \widetilde F_{t+1}(\theta_{t+1}) -F(\theta_{t+1}) , \theta_{t+1}- \underline \theta \rangle \\
		&~=  
		2\eta\langle \widetilde\Delta_{T+1}(\theta_{T+1}) - \widetilde \Delta_{T+1}(\widetilde\theta) , \theta_{T+1}-\underline \theta\rangle+2\eta\langle \widetilde \Delta_T(\widetilde\theta)-\widetilde \Delta_T(\theta_{T}) , \theta_{T}-\underline \theta\rangle\\
		&\qquad \qquad-\tsum_{t=1}^T 2\eta \langle\widetilde \Delta_{t+1}(\theta_{t+1}) - \widetilde \Delta_{t+1}(\widetilde\theta), \theta_{t+1}- \underline \theta \rangle .
	\end{align*}
	Recall that $F(\underline \theta)=0$. Invoking Lemma~\ref{lemma_strong_monotone} yields
	\begin{align}\label{Markov_F_2}
		&\big(1-8\eta^2(1+\gamma)^2\beta^2\big)\|\theta_{T+1}-\underline \theta\|_2^2 + \tsum_{t=1}^T 2\eta(1-\gamma)\|v_{t+1}- \underline v\|_\Pi^2  \nn\\
		&\qquad \qquad \qquad \leq \|\theta_1-\underline \theta\|_\Pi^2+4\eta^2\|  \widetilde \Delta_T(\widetilde\theta)-\widetilde \Delta_T(\theta_{T})\|_2^2+ Q_3+  \tsum_{t=1}^T 8\eta^2\|  \widetilde \Delta_t(\theta_t) - \widetilde \Delta_t(\widetilde\theta) \|_2^2.
	\end{align}
	As before, it remains to take an expectation on both sides of Ineq.~\eqref{Markov_F_2}. Note that in the Markovian noise setting, the operator $\widetilde g_t$ is not unbiased, and this necessitates a more careful analysis. We provide an upper bound on $\bbe[Q_3]$ by writing 
	\black{
		\begin{align}\label{bound_Q_3}
			\bbe[Q_3]  &= 2\eta  \bbe \langle \widetilde\Delta_{T+1}(\theta_{T+1}) - \widetilde \Delta_{T+1}(\widetilde\theta) , \theta_{T+1}-\underline \theta\rangle+2\eta  \bbe \langle \widetilde \Delta_T(\widetilde\theta)-\widetilde \Delta_T(\theta_{T}) , \theta_{T}-\underline \theta\rangle \nn\\
			&\qquad \qquad-\tsum_{t=1}^T 2\eta   \bbe \langle\widetilde \Delta_{t+1}(\theta_{t+1}) - \widetilde \Delta_{t+1}(\widetilde\theta), \theta_{t+1}- \underline \theta \rangle \nn\\
			&\overset{(i)} \leq \tfrac{2\eta\mixcon \rho^{m_0}\cdot \bbe[\|v_{T+1} - \widetilde v \|_\Pi\|\theta_{T+1}-\underline \theta\|_2]}{(1-\rho)(m-m_0)}  + \tfrac{2\eta\mixcon \rho^{m_0}\cdot\bbe[\|v_T - \widetilde v\|_\Pi\|\theta_{T}-\underline \theta\|_2]}{(1-\rho)(m-m_0)} \nn \\
			&\qquad \qquad+\tsum_{t=1}^T \tfrac{2\eta\mixcon \rho^{m_0}\cdot \bbe [\|v_{t+1} - \widetilde v\|_\Pi\| \theta_{t+1}- \underline \theta \|_2]}{(1-\rho)(m-m_0)}  \nn\\
			&\overset{(ii)} \leq \tfrac{2\eta\mixcon \rho^{m_0}\cdot\bbe[\|v_{T+1} - \widetilde v \|_\Pi\|v_{T+1}-\underline v\|_\Pi]}{\sqrt{\mu}(1-\rho)(m-m_0)} + \tfrac{2\eta\mixcon \rho^{m_0}}{\sqrt{\mu}(1-\rho)(m-m_0)} \cdot\bbe[\|v_T - \widetilde v\|_\Pi\|v_T-\underline v\|_\Pi]\nn \\
			&\qquad \qquad+\tsum_{t=1}^T \tfrac{2\eta\mixcon \rho^{m_0}\cdot \bbe [\|v_{t+1} - \widetilde v\|_\Pi\| v_{t+1}- \underline v\|_\Pi]}{\sqrt{\mu}(1-\rho)(m-m_0)} ,
	\end{align}}
	where step~(i) follows from the Cauchy--Schwarz inequality and Lemma \ref{stoch_opt_variance_3}, and step (ii) follows from the fact that $\|\theta - \theta'\|_2\leq \tfrac{1}{\sqrt{\mu}}\|v-v'\|_\Pi$. 
	Next, note from Ineq.~\eqref{variance_markovian_0_2} that
	\black{
		\begin{align*}
			4\eta^2\bbe\|  \widetilde \Delta_T(\widetilde\theta)-\widetilde \Delta_T(\theta_{T})\|_2^2&+  \tsum_{t=1}^T 8\eta^2\bbe\|  \widetilde \Delta_t(\theta_t) - \widetilde \Delta_t(\widetilde\theta) \|_2^2\\& \leq \tfrac{16\eta^2(\tau+1)\varsigma^2}{m-m_0}\bbe[\|v_T-\widetilde v\|_\Pi^2]+  \tsum_{t=1}^T \tfrac{32\eta^2(\tau+1)\varsigma^2}{m-m_0}\bbe[\|v_t-\widetilde v\|_\Pi^2].
	\end{align*}}
	Putting together the pieces yields
	\black{
		\begin{align*}
			\big(1-8\eta^2&(1+\gamma)^2\beta^2\big)\bbe[\|\theta_{T+1}-\underline \theta\|_2^2] + \tsum_{t=2}^{T+1} 2\eta(1-\gamma)\bbe[\|v_t-\underline v\|_\Pi^2]\\
			&\leq \tfrac{1}{\mu}\bbe[\|v_1-\underline v\|_\Pi^2]+\tfrac{2\eta\mixcon \rho^{m_0}}{\sqrt{\mu}(1-\rho)(m-m_0)}\bbe[\|v_{T}-\widetilde v\|_\Pi\|v_T-\underline v\|_\Pi ]\\
			&\quad\quad+\tfrac{2\eta\mixcon \rho^{m_0}}{\sqrt{\mu}(1-\rho)(m-m_0)}\bbe[\|v_{T+1}-\widetilde v\|_\Pi\|v_{T+1}-\underline v\|_\Pi]+ \tfrac{16\eta^2(\tau+1)\varsigma^2}{m-m_0}\bbe[\|v_T-\widetilde v\|_\Pi^2]\\
			&\quad\quad+  \tsum_{t=1}^T \tfrac{32\eta^2(\tau+1)\varsigma^2}{m-m_0}\bbe[\|v_t-\widetilde v\|_\Pi^2]+\tsum_{t=2}^{T+1}  \tfrac{2\eta\mixcon \rho^{m_0}}{\sqrt{\mu}(1-\rho)(m-m_0)}\bbe[\|v_{t}-\widetilde v\|_\Pi\|v_t-\underline v\|_\Pi].
	\end{align*}}
	We now use the relations 
	\begin{align*}
		\|v_t-\widetilde v\|_\Pi^2 \leq 2\|v_t - \underline v\|_\Pi^2 + 2\|\widetilde v- \underline v\|_\Pi^2, &\quad  \|v_t-\widetilde v\|_\Pi\|v_t-\underline v\|_\Pi\leq \tfrac{1}{2}(\|v_t-\widetilde v\|_\Pi^2 + \|v_t-\underline v\|_\Pi^2), \text{ and } \\
		&v_1=v_0=\widetilde v.
	\end{align*}
	Rearranging terms above, we then obtain
	\black{
		\begin{align*}
			&\tsum_{t=2}^{T+1} \big(2\eta(1-\gamma)-\tfrac{96\eta^2(\tau+1)\varsigma^2}{m-m_0} - \tfrac{3\eta\mixcon \rho^{m_0}}{\sqrt{\mu}(1-\rho)(m-m_0)}\big)\bbe\|v_t-\underline v\|_\Pi^2  \\
			&\leq \big(\tfrac{1}{\mu}+\tfrac{(64T+32)\eta^2(\tau+1)\varsigma^2}{m-m_0}+\tfrac{2(T+2)\eta\mixcon \rho^{m_0}}{\sqrt{\mu}(1-\rho)(m-m_0)}\big)\bbe\|\widetilde v-\underline v\|_\Pi^2 -\big(1-8\eta^2(1+\gamma)^2\beta^2\big)\bbe\|\theta_{T+1}-\underline \theta\|_2^2.
	\end{align*}}
	Recall that \black{$m-m_0\geq\tfrac{192\eta (\tau+1)\varsigma^2}{1-\gamma}$, $1\geq 8\eta^2(1+\gamma)^2\beta^2$, $\rho^{m_0}\leq \tfrac{\sqrt{\mu}\eta\tau\varsigma^2(1-\rho)}{\mixcon}$ and $\widehat v_k = \tfrac{\sum_{t=2}^{T+1}v_t}{T}$.} Combining these with Jensen's inequality yields 
	\black{
		\begin{align*}
			\bbe[\|\widehat v_k - \underline v\|_\Pi^2] &\leq \tfrac{\sum_{t=2}^{T+1}\eta(1-\gamma)\bbe[\|v_t-\underline v\|_\Pi^2] }{T\eta(1-\gamma)}\leq \tfrac{1/\mu+(66T+36)\eta^2(\tau+1)\varsigma^2/(m-m_0)}{T\eta(1-\gamma)} \bbe[\|\widetilde v-\underline v\|_\Pi^2]\\&\leq \tfrac{1/\mu+(66T+36)\eta^2(\tau+1)\varsigma^2/(m-m_0)}{T\eta(1-\gamma)} (2\bbe[\|\widetilde v-\vbar\|_\Pi^2]+2\bbe[\|\underline v-\vbar\|_\Pi^2]).
	\end{align*}}
	By Young's inequality, the following chain of bounds holds:
	\black{
		\begin{align}\label{VRFTD_step_3_mark}
			\bbe[\|\widehat v_k - \vbar\|_\Pi^2] &\leq 2 \bbe[\|\widehat v_k - \underline v\|_\Pi^2]+ 2 \bbe[\|\vbar - \underline v\|_\Pi^2] \nn\\
			& \leq D_1\cdot\bbe[\|\widetilde v - \vbar\|_\Pi^2]+ \left(D_1+2\right)\bbe[\|\underline v - \vbar\|_\Pi^2].
		\end{align}
		where $D_1:=\tfrac{4/\mu+(264T+144)\eta^2(\tau+1)\varsigma^2/(m-m_0)}{T \eta (1-\gamma) }$.}
	\black{By assumption, we have $\eta \leq \tfrac{1}{4\beta(1+\gamma)}$, $T \geq\tfrac{64}{\mu(1-\gamma)\eta}$, and $m-m_0 \geq \max\{1, \tfrac{792\eta(\tau+1) \varsigma^2}{1-\gamma}\}$. Thus,
		\begin{align}\label{cond_3_mark}
			D_1\leq \tfrac{4/\mu+T\eta(1-\gamma)/3 + 2\eta(1-\gamma)/11}{T\eta(1-\gamma)}\leq \tfrac{4/\mu +64/3\mu + 1/22\beta}{64/\mu}\leq \tfrac{26}{64}.
		\end{align}
		Next, note that $N_k-n_0\geq \tfrac{206(\tau+1)\varsigma^2 }{\mu(1-\gamma)^2}$ to write
		\begin{align}\label{cond_4_mark}
			\left(D_1+2\right)\cdot\tfrac{8(\tau+1)\varsigma^2}{(1-\gamma)^2\mu(N_k-n_0)}
			\leq\tfrac{154}{64}\cdot\tfrac{8}{206} 
			< \tfrac{6}{64}.
		\end{align}
		Substituting Ineqs.~\eqnok{cond_3_mark} and~\eqnok{cond_4_mark} into Ineq.~\eqnok{VRFTD_step_3_mark}, and utilizing Lemma~\ref{bound_tilde_v_Mark} to bound $\bbe[\|\bar v -\underline v\|_\Pi^2]$, we finally obtain the desired results for both cases of $v^*=\bar v$ and $v^*\neq \bar v$ as desired.
	}

	\section{Discussion}
	In this paper, we investigated the problem of policy evaluation with linear function approximation, making three contributions. First, we proved lower bounds on both deterministic error and stochastic error. With these lower bounds in hand, we presented an analysis of a variance-reduced variant of temporal difference algorithm (VRTD) in the i.i.d. observation model and showed that it fails to match the oracle complexity lower bound on the deterministic error. In order to remedy this difficulty, we developed an optimal variance-reduced fast temporal difference algorithm (VRFTD) that nearly matches both lower bounds simultaneously. Finally, we extended the VRFTD algorithm to the Markovian setting and provided instance-dependent convergence results. \black{The leading stochastic error matches the instance-dependent lower bound for Markovian linear stochastic approximation \citep{mou2021optimal}, and the deterministic error matches the i.i.d. setting up to a multiplicative factor proportional to the mixing time of the chain.} Our theoretical guarantees were corroborated with numerical experiments in both the i.i.d. and Markovian settings, showing that the VRFTD algorithm enjoys several advantages over the prior state-of-the-art. 
	
	Our work leaves open severaal salient future directions; let us mention two. 
	\black{First, our oracle complexity lower bound is proved in the tabular setting. On the other hand, our upper bounds on the deterministic error indicate that with linear function approximation, we pay a multiplicative factor depending on the condition number of the feature matrix. It would be interesting to see if an oracle complexity lower bound can be proved under linear function approximation, and whether the linear dependence on the condition number in our bounds is optimal.}
	Second, and more broadly, note that our analysis relies heavily on the linear structure of the problem. However, there are many problems in the reinforcement learning literature that have nonlinear structures, e.g., the policy optimization problem involving the Bellman optimality operator. An interesting direction for future work is to understand problems with nonlinear structure from an instance-specific point of view and develop efficient algorithms to capture the optimal deterministic and stochastic errors. For instance, variance reduction has been applied to the policy optimization problem under the generative model~\citep{sidford2018near,wainwright2019variance} and some instance-dependent bounds are known~\citep[e.g., for variants of $Q$-learning][]{khamaru2021instance}. It is an important open question to develop acceleration schemes for such algorithms in a fashion similar to our paper, while extending the results to the more realistic Markovian setting.
	%
	%
	
	\subsection*{Acknowledgments}
	TL and GL were supported in part by Office of Naval Research grant N00014-20-1-2089.
	TL and AP were supported in part by the National Science Foundation grant CCF-2107455, and are thankful to the Simons Institute for the Theory of Computing for their hospitality when part of this work was performed. 
	
	\bibliographystyle{abbrvnat}
	\bibliography{revised_version}

\newcommand{\noopsort}[1]{} \newcommand{\printfirst}[2]{#1}
  \newcommand{\singleletter}[1]{#1} \newcommand{\switchargs}[2]{#2#1}
\begin{thebibliography}{51}
\providecommand{\natexlab}[1]{#1}
\providecommand{\url}[1]{\texttt{#1}}
\expandafter\ifx\csname urlstyle\endcsname\relax
  \providecommand{\doi}[1]{doi: #1}\else
  \providecommand{\doi}{doi: \begingroup \urlstyle{rm}\Url}\fi

\bibitem[Bhandari et~al.(2018)Bhandari, Russo, and Singal]{russo_18}
J.~Bhandari, D.~Russo, and R.~Singal.
\newblock A finite time analysis of temporal difference learning with linear
  function approximation.
\newblock In \emph{Conference on learning theory}, pages 1691--1692. PMLR,
  2018.

\bibitem[Blatt et~al.(2007)Blatt, Hero, and Gauchman]{blatt2007convergent}
D.~Blatt, A.~O. Hero, and H.~Gauchman.
\newblock A convergent incremental gradient method with a constant step size.
\newblock \emph{SIAM Journal on Optimization}, 18\penalty0 (1):\penalty0
  29--51, 2007.

\bibitem[Borkar(2009)]{borkar2009stochastic}
V.~S. Borkar.
\newblock \emph{Stochastic approximation: A dynamical systems viewpoint},
  volume~48.
\newblock Springer, 2009.

\bibitem[Borkar and Meyn(2000)]{borkar2000ode}
V.~S. Borkar and S.~P. Meyn.
\newblock The {ODE} method for convergence of stochastic approximation and
  reinforcement learning.
\newblock \emph{SIAM Journal on Control and Optimization}, 38\penalty0
  (2):\penalty0 447--469, 2000.

\bibitem[Chen et~al.(2021)Chen, Maguluri, Shakkottai, and
  Shanmugam]{chen2021lyapunov}
Z.~Chen, S.~T. Maguluri, S.~Shakkottai, and K.~Shanmugam.
\newblock A {L}yapunov theory for finite-sample guarantees of asynchronous
  {Q}-learning and {TD}-learning variants.
\newblock \emph{arXiv preprint arXiv:2102.01567}, 2021.

\bibitem[Dann et~al.(2014)Dann, Neumann, and Peters]{dann2014policy}
C.~Dann, G.~Neumann, and J.~Peters.
\newblock Policy evaluation with temporal differences: A survey and comparison.
\newblock \emph{Journal of Machine Learning Research}, 15:\penalty0 809--883,
  2014.

\bibitem[Defazio et~al.(2014)Defazio, Bach, and
  Lacoste-Julien]{defazio2014saga}
A.~Defazio, F.~Bach, and S.~Lacoste-Julien.
\newblock {SAGA}: A fast incremental gradient method with support for
  non-strongly convex composite objectives.
\newblock In \emph{Advances in neural information processing systems}, pages
  1646--1654, 2014.

\bibitem[Du et~al.(2017)Du, Chen, Li, Xiao, and Zhou]{du2017stochastic}
S.~S. Du, J.~Chen, L.~Li, L.~Xiao, and D.~Zhou.
\newblock Stochastic variance reduction methods for policy evaluation.
\newblock In \emph{International Conference on Machine Learning}, pages
  1049--1058. PMLR, 2017.

\bibitem[Duan et~al.(2021)Duan, Wang, and Wainwright]{duan2021optimal}
Y.~Duan, M.~Wang, and M.~J. Wainwright.
\newblock Optimal policy evaluation using kernel-based temporal difference
  methods.
\newblock \emph{arXiv preprint arXiv:2109.12002}, 2021.

\bibitem[Durmus et~al.(2021)Durmus, Moulines, Naumov, Samsonov, and
  Wai]{durmus2021stability}
A.~Durmus, E.~Moulines, A.~Naumov, S.~Samsonov, and H.~T. Wai.
\newblock On the stability of random matrix product with {M}arkovian noise:
  {A}pplication to linear stochastic approximation and {TD} learning.
\newblock \emph{arXiv preprint arXiv:2102.00185}, 2021.

\bibitem[H{\'a}jek(1972)]{hajek1972local}
J.~H{\'a}jek.
\newblock Local asymptotic minimax and admissibility in estimation.
\newblock In \emph{Proceedings of the Sixth Berkeley Symposium on Mathematical
  Statistics and Probability, Volume 1: Theory of Statistics}, pages 175--194.
  University of California Press, 1972.

\bibitem[Hsu et~al.(2019)Hsu, Kontorovich, Levin, Peres, Szepesv{\'a}ri, and
  Wolfer]{hsu2019mixing}
D.~Hsu, A.~Kontorovich, D.~A. Levin, Y.~Peres, C.~Szepesv{\'a}ri, and
  G.~Wolfer.
\newblock Mixing time estimation in reversible {M}arkov chains from a single
  sample path.
\newblock \emph{The Annals of Applied Probability}, 29\penalty0 (4):\penalty0
  2439--2480, 2019.

\bibitem[Johnson and Zhang(2013)]{johnson2013accelerating}
R.~Johnson and T.~Zhang.
\newblock Accelerating stochastic gradient descent using predictive variance
  reduction.
\newblock In \emph{Advances in Neural Information Processing Systems},
  volume~26, pages 315--323, 2013.

\bibitem[Kaelbling et~al.(1996)Kaelbling, Littman, and
  Moore]{kaelbling1996reinforcement}
L.~P. Kaelbling, M.~L. Littman, and A.~W. Moore.
\newblock Reinforcement learning: A survey.
\newblock \emph{Journal of artificial intelligence research}, 4:\penalty0
  237--285, 1996.

\bibitem[Khamaru et~al.(2021{\natexlab{a}})Khamaru, Pananjady, Ruan,
  Wainwright, and Jordan]{khamaru2020temporal}
K.~Khamaru, A.~Pananjady, F.~Ruan, M.~J. Wainwright, and M.~I. Jordan.
\newblock Is temporal difference learning optimal? {A}n instance-dependent
  analysis.
\newblock \emph{SIAM Journal on Mathematics of Data Science}, 3\penalty0
  (4):\penalty0 1013--1040, 2021{\natexlab{a}}.

\bibitem[Khamaru et~al.(2021{\natexlab{b}})Khamaru, Xia, Wainwright, and
  Jordan]{khamaru2021instance}
K.~Khamaru, E.~Xia, M.~J. Wainwright, and M.~I. Jordan.
\newblock Instance-optimality in optimal value estimation: Adaptivity via
  variance-reduced {Q}-learning.
\newblock \emph{arXiv preprint arXiv:2106.14352}, 2021{\natexlab{b}}.

\bibitem[Kober et~al.(2013)Kober, Bagnell, and Peters]{kober2013reinforcement}
J.~Kober, J.~A. Bagnell, and J.~Peters.
\newblock Reinforcement learning in robotics: A survey.
\newblock \emph{The International Journal of Robotics Research}, 32\penalty0
  (11):\penalty0 1238--1274, 2013.

\bibitem[Korda and La(2015)]{korda2015td}
N.~Korda and P.~La.
\newblock On {TD} (0) with function approximation: Concentration bounds and a
  centered variant with exponential convergence.
\newblock In \emph{International conference on machine learning}, pages
  626--634. PMLR, 2015.

\bibitem[Kotsalis et~al.(2020{\natexlab{a}})Kotsalis, Lan, and Li]{GGT_20a}
G.~Kotsalis, G.~Lan, and T.~Li.
\newblock Simple and optimal methods for stochastic variational inequalities,
  {I}: operator extrapolation.
\newblock \emph{arXiv preprint arXiv:2011.02987}, 2020{\natexlab{a}}.

\bibitem[Kotsalis et~al.(2020{\natexlab{b}})Kotsalis, Lan, and
  Li]{kotsalis2020simple}
G.~Kotsalis, G.~Lan, and T.~Li.
\newblock Simple and optimal methods for stochastic variational inequalities,
  {II}: {M}arkovian noise and policy evaluation in reinforcement learning.
\newblock \emph{arXiv preprint arXiv:2011.08434}, 2020{\natexlab{b}}.

\bibitem[Lakshminarayanan and Szepesv{\'a}ri(2018)]{lakshminarayanan2018linear}
C.~Lakshminarayanan and C.~Szepesv{\'a}ri.
\newblock Linear stochastic approximation: {H}ow far does constant step-size
  and iterate averaging go?
\newblock In \emph{International Conference on Artificial Intelligence and
  Statistics}, pages 1347--1355. PMLR, 2018.

\bibitem[Le~Cam(1972)]{le1972limits}
L.~Le~Cam.
\newblock Limits of experiments.
\newblock In \emph{Proceedings of the Sixth Berkeley Symposium on Mathematical
  Statistics and Probability, Volume 1: Theory of Statistics}, pages 245--282.
  University of California Press, 1972.

\bibitem[Le~Cam and Yang(2000)]{le2000asymptotics}
L.~Le~Cam and G.~L. Yang.
\newblock \emph{Asymptotics in statistics: Some basic concepts}.
\newblock Springer Science \& Business Media, 2000.

\bibitem[Li et~al.(2020)Li, Wei, Chi, Gu, and Chen]{li2020breaking}
G.~Li, Y.~Wei, Y.~Chi, Y.~Gu, and Y.~Chen.
\newblock Breaking the sample size barrier in model-based reinforcement
  learning with a generative model.
\newblock In \emph{Advances in Neural Information Processing Systems},
  volume~33, pages 12861--12872. Curran Associates, Inc., 2020.

\bibitem[Li et~al.(2021)Li, Cai, Chen, Gu, Wei, and Chi]{li2021q}
G.~Li, C.~Cai, Y.~Chen, Y.~Gu, Y.~Wei, and Y.~Chi.
\newblock Is q-learning minimax optimal? a tight sample complexity analysis.
\newblock \emph{arXiv preprint arXiv:2102.06548}, 2021.

\bibitem[Min et~al.(2021)Min, Wang, Zhou, and Gu]{min2021variance}
Y.~Min, T.~Wang, D.~Zhou, and Q.~Gu.
\newblock Variance-aware off-policy evaluation with linear function
  approximation.
\newblock \emph{Advances in neural information processing systems},
  34:\penalty0 7598--7610, 2021.

\bibitem[Mou et~al.(2020)Mou, Pananjady, and Wainwright]{mou2020optimal}
W.~Mou, A.~Pananjady, and M.~J. Wainwright.
\newblock Optimal oracle inequalities for solving projected fixed-point
  equations.
\newblock \emph{arXiv preprint arXiv:2012.05299}, 2020.

\bibitem[Mou et~al.(2021)Mou, Pananjady, Wainwright, and
  Bartlett]{mou2021optimal}
W.~Mou, A.~Pananjady, M.~J. Wainwright, and P.~L. Bartlett.
\newblock Optimal and instance-dependent guarantees for {M}arkovian linear
  stochastic approximation.
\newblock \emph{preprint}, 2021.

\bibitem[Nemirovsky(1991)]{nemirovsky1991optimality}
A.~S. Nemirovsky.
\newblock On optimality of {K}rylov's information when solving linear operator
  equations.
\newblock \emph{Journal of Complexity}, 7\penalty0 (2):\penalty0 121--130,
  1991.

\bibitem[Nemirovsky(1992)]{nemirovsky1992information}
A.~S. Nemirovsky.
\newblock Information-based complexity of linear operator equations.
\newblock \emph{Journal of Complexity}, 8\penalty0 (2):\penalty0 153--175,
  1992.

\bibitem[Nesterov(2003)]{nesterov2003introductory}
Y.~Nesterov.
\newblock \emph{Introductory lectures on convex optimization: A basic course},
  volume~87.
\newblock Springer Science \& Business Media, 2003.

\bibitem[Ouyang and Xu(2021)]{ouyang2021lower}
Y.~Ouyang and Y.~Xu.
\newblock Lower complexity bounds of first-order methods for convex-concave
  bilinear saddle-point problems.
\newblock \emph{Mathematical Programming}, 185\penalty0 (1):\penalty0 1--35,
  2021.

\bibitem[Pananjady and Wainwright(2021)]{pananjady2020instance}
A.~Pananjady and M.~J. Wainwright.
\newblock Instance-dependent $\ell_\infty$-bounds for policy evaluation in
  tabular reinforcement learning.
\newblock \emph{IEEE Transactions on Information Theory}, 67\penalty0
  (1):\penalty0 566--585, 2021.

\bibitem[Papini et~al.(2018)Papini, Binaghi, Canonaco, Pirotta, and
  Restelli]{papini2018stochastic}
M.~Papini, D.~Binaghi, G.~Canonaco, M.~Pirotta, and M.~Restelli.
\newblock Stochastic variance-reduced policy gradient.
\newblock In \emph{International conference on machine learning}, pages
  4026--4035. PMLR, 2018.

\bibitem[Polyak and Juditsky(1992)]{polyak1992acceleration}
B.~T. Polyak and A.~B. Juditsky.
\newblock Acceleration of stochastic approximation by averaging.
\newblock \emph{SIAM journal on control and optimization}, 30\penalty0
  (4):\penalty0 838--855, 1992.

\bibitem[Puterman(2014)]{puterman2014markov}
M.~L. Puterman.
\newblock \emph{{M}arkov decision processes: discrete stochastic dynamic
  programming}.
\newblock John Wiley \& Sons, 2014.

\bibitem[Schmidt et~al.(2017)Schmidt, Le~Roux, and Bach]{schmidt2017minimizing}
M.~Schmidt, N.~Le~Roux, and F.~Bach.
\newblock Minimizing finite sums with the stochastic average gradient.
\newblock \emph{Mathematical Programming}, 162\penalty0 (1-2):\penalty0
  83--112, 2017.

\bibitem[Sidford et~al.(2018)Sidford, Wang, Wu, Yang, and Ye]{sidford2018near}
A.~Sidford, M.~Wang, X.~Wu, L.~F. Yang, and Y.~Ye.
\newblock Near-optimal time and sample complexities for solving {M}arkov
  decision processes with a generative model.
\newblock In \emph{Proceedings of the 32nd International Conference on Neural
  Information Processing Systems}, pages 5192--5202, 2018.

\bibitem[Srikant and Ying(2019)]{srikant2019finite}
R.~Srikant and L.~Ying.
\newblock Finite-time error bounds for linear stochastic approximation and {TD}
  learning.
\newblock In \emph{Conference on Learning Theory}, pages 2803--2830. PMLR,
  2019.

\bibitem[Sutton(1988)]{sutton1988learning}
R.~S. Sutton.
\newblock Learning to predict by the methods of temporal differences.
\newblock \emph{Machine learning}, 3\penalty0 (1):\penalty0 9--44, 1988.

\bibitem[Sutton et~al.(2009)Sutton, Maei, Precup, Bhatnagar, Silver,
  Szepesv{\'a}ri, and Wiewiora]{sutton2009fast}
R.~S. Sutton, H.~R. Maei, D.~Precup, S.~Bhatnagar, D.~Silver,
  C.~Szepesv{\'a}ri, and E.~Wiewiora.
\newblock Fast gradient-descent methods for temporal-difference learning with
  linear function approximation.
\newblock In \emph{International Conference on Machine Learning}, pages
  993--1000, 2009.

\bibitem[Tadic(2004)]{tadic2004almost}
V.~B. Tadic.
\newblock On the almost sure rate of convergence of linear stochastic
  approximation algorithms.
\newblock \emph{IEEE Transactions on Information Theory}, 50\penalty0
  (2):\penalty0 401--409, 2004.

\bibitem[Tsitsiklis and Van~Roy(1997)]{tsitsiklis_vanroy_97}
J.~N. Tsitsiklis and B.~Van~Roy.
\newblock An analysis of temporal-difference learning with function
  approximation.
\newblock \emph{IEEE transactions on automatic control}, 42\penalty0
  (5):\penalty0 674--690, 1997.

\bibitem[Wai et~al.(2018)Wai, Yang, Wang, and Hong]{wai2018multi}
H.-T. Wai, Z.~Yang, Z.~Wang, and M.~Hong.
\newblock Multi-agent reinforcement learning via double averaging primal-dual
  optimization.
\newblock \emph{Advances in Neural Information Processing Systems}, 31, 2018.

\bibitem[Wai et~al.(2019)Wai, Hong, Yang, Wang, and Tang]{wai2019variance}
H.~T. Wai, M.~Hong, Z.~Yang, Z.~Wang, and K.~Tang.
\newblock Variance reduced policy evaluation with smooth function
  approximation.
\newblock In \emph{Advances in Neural Information Processing Systems},
  volume~32, pages 5784--5795, 2019.

\bibitem[Wainwright(2019)]{wainwright2019variance}
M.~J. Wainwright.
\newblock Variance-reduced ${Q}$-learning is minimax optimal.
\newblock \emph{arXiv preprint arXiv:1906.04697}, 2019.

\bibitem[Wang et~al.(2020)Wang, Foster, and Kakade]{wang2020statistical}
R.~Wang, D.~P. Foster, and S.~M. Kakade.
\newblock What are the statistical limits of offline rl with linear function
  approximation?
\newblock \emph{arXiv preprint arXiv:2010.11895}, 2020.

\bibitem[Wolfer and Kontorovich(2019)]{wolfer2019estimating}
G.~Wolfer and A.~Kontorovich.
\newblock Estimating the mixing time of ergodic {M}arkov chains.
\newblock In \emph{Conference on Learning Theory}, pages 3120--3159. PMLR,
  2019.

\bibitem[Xiao and Zhang(2014)]{xiao2014proximal}
L.~Xiao and T.~Zhang.
\newblock A proximal stochastic gradient method with progressive variance
  reduction.
\newblock \emph{SIAM Journal on Optimization}, 24\penalty0 (4):\penalty0
  2057--2075, 2014.

\bibitem[Xu et~al.(2020)Xu, Wang, Zhou, and Liang]{xu2020reanalysis}
T.~Xu, Z.~Wang, Y.~Zhou, and Y.~Liang.
\newblock Reanalysis of variance reduced temporal difference learning.
\newblock \emph{arXiv preprint arXiv:2001.01898}, 2020.

\bibitem[Zanette(2021)]{zanette2021exponential}
A.~Zanette.
\newblock Exponential lower bounds for batch reinforcement learning: Batch rl
  can be exponentially harder than online rl.
\newblock In \emph{International Conference on Machine Learning}, pages
  12287--12297. PMLR, 2021.

\end{thebibliography}
	
	\appendix
	
	\section{Extension of Theorem~\ref{theorem_lower_bound}}\label{sec_app_B}
	In this section, we extend Theorem~\ref{theorem_lower_bound} to a broader method class motivated by~\cite{nemirovsky1991optimality,nemirovsky1992information}. This method class, which allows a matrix transpose in addition to the family covered by Assumption~\ref{M_assump}, is defined by the following.
	\begin{assumption}\label{M_tilde_assump}
		The iterative method generates a sequence of iterates $v_k$ such that $v_0 = 0$ and
		\begin{align}\label{assump_M_tilde}
			v_k \in \mathrm{span}\{r, (I-\gamma P)v_1, (I-\gamma P)^\top v_1,..., (I-\gamma P)v_{k-1}, (I-\gamma P)^\top v_{k-1}\},\quad k \geq 1.
		\end{align}
	\end{assumption}
	\noindent It should be noted that the method class covered by Assumption~\ref{M_tilde_assump} is not particularly natural for solving the policy evaluation problem under standard observation models. In particular, in typical stochastic settings, the stochastic operator corresponding to the vector $(I-\gamma P)^\top v$ is hard to obtain with transitions and reward samples. However, the oracle complexity lower bound applies in the deterministic setting and still serves as a useful baseline; in particular, it disallows the possibility that the transpose of the matrix $(I - \gamma P)$ can somehow be leveraged to obtain faster rates.
	\begin{corollary}\label{coro_lower_bound}
		Fix a discount factor $\gamma>\tfrac{1}{2} $. 
		There exists a transition kernel $P$ and an expected reward vector $r$ such that for any iterative method satisfying Assumption~\ref{M_tilde_assump}, the following statement holds. If $(D,k)$ satisfies $\tfrac{1-(2\gamma-1)^{2D-4k+2}}{1-(2\gamma-1)^{2D}}\geq \tfrac{1}{2}$, then
		\begin{align*}
			\|v_k-v^*\|_\Pi^2 \geq \tfrac{1}{2} (2\gamma-1)^{2k}\|v_0-v^*\|_\Pi^2.
		\end{align*}
	\end{corollary}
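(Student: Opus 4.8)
The plan is to reuse verbatim the worst-case instance $(P,r)$ from the proof of Theorem~\ref{theorem_lower_bound} --- the transition kernel~\eqref{P}, the matrix $I-\gamma P$ in~\eqref{I_P}, and the reward~\eqref{b} --- for which we already know that $v^*_{(i)}=(2\gamma-1)^i$ for $i\in[D]$, that the induced stationary distribution is uniform, and that $\|v_0-v^*\|_\Pi^2$ is the quantity in~\eqref{v_0_dis}. Since Assumption~\ref{M_tilde_assump} already fixes $v_0=0$, the whole problem again reduces to tracking the coordinate support of the iterates $v_k$, the only new ingredient being that the operator $(I-\gamma P)^\top$ is now also an admissible action.

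The structural claim I would prove, by induction on $k$, is
\[
\mathrm{supp}(v_k)\ \subseteq\ S_k\ :=\ \{1,\dots,k\}\ \cup\ \{D-k+2,\dots,D\},
\]
so that $v_k$ vanishes on the ``hole'' $[D]\setminus S_k=\{k+1,\dots,D-k+1\}$. The base case $k=1$ is immediate since $v_1\in\mathrm{span}\{r\}$ and $\mathrm{supp}(r)=\{1\}$. For the inductive step it suffices to show that whenever $\mathrm{supp}(u)\subseteq S_{k-1}$ one has $\mathrm{supp}\big((I-\gamma P)u\big)\subseteq S_k$ and $\mathrm{supp}\big((I-\gamma P)^\top u\big)\subseteq S_k$, together with $\mathrm{supp}(r)\subseteq S_k$. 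From~\eqref{I_P}, $I-\gamma P$ is lower bidiagonal plus a single corner entry in position $(1,D)$, so $(I-\gamma P)e_i$ is supported on $\{i,i+1\}$ for $i<D$ and on $\{1,D\}$ for $i=D$: applied to a vector supported on $S_{k-1}$ this advances the ``forward front'' by one index, while the corner merely re-injects mass into coordinate $1$. Dually, $(I-\gamma P)^\top$ is upper bidiagonal plus a single corner entry in position $(D,1)$, so $(I-\gamma P)^\top e_j$ is supported on $\{j-1,j\}$ for $j>1$ and on $\{1,D\}$ for $j=1$: this advances the ``backward front'' (anchored at coordinate $D$) by one index, and again the corner is harmless. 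Matching these inclusions against the definitions of $S_{k-1}$ and $S_k$ in the handful of cases (index near $1$ or near $D$, for each operator) closes the induction.

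Given the claim, I would lower bound the error by the weight of $v^*$ on the hole, then simplify the resulting geometric sum:
\[
\|v_k-v^*\|_\Pi^2\ \geq\ \tfrac1D\sum_{i=k+1}^{D-k+1}(2\gamma-1)^{2i}\ =\ \tfrac1D\,(2\gamma-1)^{2k+2}\,\frac{1-(2\gamma-1)^{2(D-2k+1)}}{1-(2\gamma-1)^2}.
\]
Dividing by the expression~\eqref{v_0_dis} for $\|v_0-v^*\|_\Pi^2$ and cancelling yields
\[
\|v_k-v^*\|_\Pi^2\ \geq\ (2\gamma-1)^{2k}\,\frac{1-(2\gamma-1)^{2D-4k+2}}{1-(2\gamma-1)^{2D}}\,\|v_0-v^*\|_\Pi^2,
\]
so the hypothesis $\tfrac{1-(2\gamma-1)^{2D-4k+2}}{1-(2\gamma-1)^{2D}}\geq\tfrac12$ delivers the stated bound.

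I expect the only delicate point to be the support-tracking induction: one must verify that the two cyclic corner entries (at $(1,D)$ in $I-\gamma P$ and at $(D,1)$ in its transpose) do not let the support bleed across all of $[D]$. They do not --- the first corner is activated only by $e_D$ and merely feeds back into coordinate $1$, while the second is activated by $e_1$ and contributes only the single coordinate $D$. It is precisely this one backward leak, absent in the amenable class of Assumption~\ref{M_assump}, that forces the backward front to advance at the same unit rate as the forward front, explaining why the exponent degrades from $2D-2k$ in Theorem~\ref{theorem_lower_bound} to $2D-4k+2$ here.
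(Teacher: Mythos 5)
Your proposal is correct and follows essentially the same route as the paper: the identical worst-case instance, the same support-tracking induction (your set $S_k$ is exactly the paper's subspace $\bbr^{(k,k-1),D}$, with the same case analysis of the two cyclic corner entries), and the same geometric-sum computation against~\eqref{v_0_dis}.
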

	\begin{proof}
		In this proof, we utilize the same worst-case instance as in Theorem~\ref{theorem_lower_bound}, which is defined in Eqs.~\eqref{P} and \eqref{b}. Then the square matrix $(I-\gamma P)^\top \in \bbr^{D\times D}$ is
		\begin{align}\label{I_P_t}
			(I-\gamma P)^\top= \begin{bmatrix} 
				\tfrac{1}{2} & \tfrac{1}{2}-\gamma &0 &...  &0&0\\ 
				0&\tfrac{1}{2} &\tfrac{1}{2}-\gamma&...  &0&0\\
				0 &0 &\tfrac{1}{2} &...  &0&0\\
				\vdots&\vdots &\vdots &\ddots&\vdots&\vdots\\
				\tfrac{1}{2}-\gamma &0&0 &...&0&\tfrac{1}{2}
			\end{bmatrix},
		\end{align}
		which is an upper triangular matrix except for one elements on the bottom left. Let $\bbr^{(k,j),D}:=\{v\in \R^D \;|\; v_{(i)}=0 \text{ for all } k+1\leq i\leq D-j\}$ denote the subspace of vectors in $\bbr^D$ in which only the first $k$ and the last $j$ components can be nonzero. Provided $D > 2k$, we claim that the sequence $\{v_k\}$ generated by the iterative method satisfies 
		\begin{align} \label{eq:vk-ind}
			v_k \in \bbr^{(k,k-1),D}.
		\end{align} 
		Taking this claim as given for the moment, let us establish the claimed result.
		Recall the solution vector $v^*$ given by Eq.~\eqref{v_star}. We have
		\begin{align*}
			\|v_k-v^*\|_\Pi^2 \geq \tfrac{1}{D} \tsum_{i=k+1}^{D-k+1} (2\gamma-1)^{2i} = \tfrac{(2\gamma-1)^{2k+2}[1-(2\gamma-1)^{2D-4k+2}]}{D[1-(2\gamma-1)^2]}.
		\end{align*}
		If $\tfrac{1-(2\gamma-1)^{2D-4k+2}}{1-(2\gamma-1)^{2D}}\geq \tfrac{1}{2}$, then together with Eq.~\eqref{v_0_dis} we conclude that
		\begin{align*}
			\|v_k-v^*\|_\Pi^2 \geq \tfrac{1}{2} (2\gamma-1)^{2k}\|v_0-v^*\|_\Pi^2,
		\end{align*}
		as desired.
	\end{proof}
	
	\paragraph{Proof of claim~\eqref{eq:vk-ind}:}
	We prove the claim by induction on $k$. \\
	\underline{Base case:}	Given all entries of $r$ except for its first are equal to $0$, we immediately have $v_1\in \bbr^{(1,0),D}$. Thus, the claim is true for $k = 1$. \\
	\underline{Induction step:} Now assume that the claim holds for all $t=1,2,...,k-1$, i.e., that $v_t \in \bbr^{(t,t-1),D}$. From the special structure of $I-\gamma P$ and $(I-\gamma P)^\top$, we obtain that
	\begin{align*}
		(I-\gamma P)v_t \in \bbr^{(t+1, t-1),D}~\text{and}~(I-\gamma P)^\top v_t \in \bbr^{(t, t),D}.
	\end{align*}
	Assumption~\ref{M_tilde_assump} then guarantees that $v_k \in \bbr^{(k,k-1),D}$, thus completing the induction step.

	
	\section{Proofs of supporting lemmas}\label{sec_app}
	
	In this section, we provide deferred proofs of various lemmas used throughout the main text and Section \ref{sec_proof}.
	
	\subsection{Proof of Lemma~\ref{lemma_operator_bias_1}}\label{proof_operator_bias_1}
	Let $\Pi_\tau(s_{t+\tau}):=\diag([P(s_{t+\tau}=1|s_t)~...~P(s_{t+\tau=D}|s_t)])$. We have
	\begin{align*}
		\bbe[\widetilde g(\theta,\xi_{t+\tau}^\tau)|\mathcal{F}_t] &= \bbe[\left( \langle \psi(s_{t+\tau})- \gamma \psi(s_{t+\tau+1}), \theta\rangle - r(s_{t+\tau},s_{t+\tau+1}) \right) \psi(s_{t+\tau})|s_t] \\&= \Psi \Pi_{\tau}(s_{t+\tau})(\Psi^\top \theta- r-\gamma P \Psi^\top\theta).
	\end{align*} 
	Combining the above equality with the definition of $g(\cdot)$ in~\eqref{deterministic_opt} yields
	\begin{align*}
		\|g(\bar \theta)-\bbe[\widetilde g(\bar \theta,\xi_{t+\tau})|\mathcal{F}_t]\|_2
		&=\|\Psi  \big(\Pi-\Pi_t(s_{t+\tau})\big) (\Psi^\top \bar \theta - r -\gamma P \Psi^\top \bar \theta)\|_2\\
		&= \|\Psi  \big(\Pi-\Pi_t(s_{t+\tau})\big) (I-\gamma P)(\Psi^\top \bar \theta -v^*)\|_2\\
		&\leq \skipcon\rho^\tau \|\Psi\|_2\|I-\gamma P\|_2\|\vbar - v^*\|_2 \\
		&\leq  \mixcon \rho^\tau \|\vbar - v^*\|_\Pi,
	\end{align*}
	as claimed. 
	\qed
	
	\subsection{Proof of Lemma~\ref{lemma_operator_bias_2}}\label{proof_operator_bias_2}
	We write
	\begin{align*}
		\|g(\theta)&-\bbe[\widetilde g(\theta,\xi_{t+\tau})|\mathcal{F}_t]-g(\theta')+\bbe[\widetilde g(\theta',\xi_{t+\tau})|\mathcal{F}_t]\|_2\\
		&=\|\Psi \big(\Pi-\Pi_t(s_{t+\tau})\big)(I-\gamma P)\Psi^\top (\theta-\theta')\|_2\\
		&\leq \skipcon\rho^\tau \|\Psi\|_2   \|I-\gamma P\|_2  \|\Psi^\top(\theta-\theta')\|_2 \\
		&\leq  \mixcon \rho^\tau \|v- v'\|_\Pi,
	\end{align*}
	which completes the proof.
	\qed
	
	\subsection{Proof of Lemma~\ref{lemma_hajek}}\label{proof_lemma_hajek}
	We compute the pseudo-inverse of the Fisher information matrix $J^\dagger_{\param}$ and the gradient $\nabla h (\param)$ separately and combine them to obtain the desired result.
	
	\noindent\textbf{Step 1}: 
	Note that by a sufficiency argument, the problem is equivalent to one in which we observe
	\begin{align*}
		\widehat H = \big(\psi(s)\psi(s)^\top - \gamma \psi(s)\psi(s')^\top, R(s,s')\psi(s) \big),\quad\text{where}~s\sim \omega,~s'\sim \mathsf{P}(\cdot|s).
	\end{align*}
	Let us denote 
	\begin{align*}
		H := (\Psi \Omega\Psi^\top - \gamma\Psi \Omega P \Psi^\top, \Omega\Psi r )
	\end{align*}
	to be the respective means of these observations. 
	For each pair $i, j \in [D]$, let $H^{i,j}:= \big(\psi(i)\psi(i)^\top - \gamma \psi(i)\psi(j)^\top, R(i,j)\psi(i) \big)$. The log-likelihood given our observation is then
	\begin{align*}
		\mathcal{L}(\param|\widehat H) = \sum_{i,j\in [D]} \mathbb{I}_{\{\widehat H = H^{i,j}\}}\cdot (\log \omega_i + \log P_{i,j}).
	\end{align*}
	It should be noted that for $\omega$, we take the first $D-1$ components as the parameters and the last component of $\omega$ is determined by the fact that the entries sum to $1$. A similar statement holds for each row of $P$.  Taking derivatives with respect to $\omega$, $P$ and $R$, we obtain 
	\begin{align*}
		\frac{\partial \mathcal{L}}{\partial \omega_k} &= \sum_{j \in [D]} \mathbb{I}_{\{\widehat H = H^{k,j}\}} \cdot \omega^{-1}_k -  \mathbb{I}_{\{\widehat H = H^{D,j}\}} \cdot \omega^{-1}_D,  \;\; \text{ for all } k \in [D-1].\\
		\frac{\partial \mathcal{L}}{\partial P_{i,j}} &= \mathbb{I}_{\{\hat H = H^{i,j}\}} {P_{i,j}^{-1}} - \mathbb{I}_{\{\hat H = H^{i,D}\}} {P_{i,D}^{-1}},  \;\; \text{ for all } i \in [D], ~j\in [D-1].\\
		\frac{\partial \mathcal{L}}{\partial R_{i,j}} &= 0,  \;\; \text{ for all }i,j\in[D].
	\end{align*}
	Consequently, a direct calculation yields
	\begin{align*}
		\bbe\left[\frac{\partial \mathcal{L}}{\partial \omega_k} \times \frac{\partial \mathcal{L}}{\partial \omega_k} \right] & = {\omega_k^{-1}} + {\omega_D^{-1}},\;\; &\text{ for all } k\in[D-1].\\
		\bbe\left[\frac{\partial \mathcal{L}}{\partial \omega_k} \times \frac{\partial \mathcal{L}}{\partial \omega_l} \right] & = {\omega_D^{-1}},\;\; &\text{ for all } k,l\in [D-1],~k\neq l.\\
		\bbe\left[\frac{\partial \mathcal{L}}{\partial \omega_k} \times \frac{\partial \mathcal{L}}{\partial P_{i,j}} \right] & = 0, \;\; &\text{ for all } i \in [D], ~j,k\in [D-1].\\
		\bbe\left[\frac{\partial \mathcal{L}}{\partial P_{i,j}} \times \frac{\partial \mathcal{L}}{\partial P_{i,j}} \right] & = \frac{\omega_i}{P_{i,j}}+\frac{\omega_i}{P_{i,D}}, \;\; &\text{ for all } i \in [D], ~j\in [D-1].\\
		\bbe\left[\frac{\partial \mathcal{L}}{\partial P_{i,j}} \times \frac{\partial \mathcal{L}}{\partial P_{i,j'}} \right] &= \frac{\omega_i}{P_{i,D}}, \;\; &\text{ for all } i \in [D], ~j,j'\in [D-1], ~j\neq j'.
	\end{align*}
	The other terms in $J_{\param}$ matrix are all $0$. Therefore $J_{\param}$ can be written as a block diagonal matrix of the form 
	\begin{align*}
		J_\param= \begin{bmatrix} 
			J_\omega & 0 &0 &...  &0&0\\ 
			0 & J_{P_1} &0&...  &0&0\\
			0 &0 &J_{P_2} &...  &0&0\\
			\vdots&\vdots &\vdots &\ddots&\vdots&\vdots\\
			0 &0&0 &...&J_{P_D} &0\\
			0 &0&0 &...&0 &J_R
		\end{bmatrix},
	\end{align*}
	where $J_R$ is the all-zero matrix. Hence the matrix $J^\dagger_{\param}$ is also block diagonal, of the form
	\begin{align}\label{J_dagger}
		J_\param^\dagger= \begin{bmatrix} 
			J_{\omega}^{\dagger} & 0 &0 &...  &0&0\\ 
			0 & J_{P_1}^{\dagger} &0&...  &0&0\\
			0 &0 &J_{P_2}^{\dagger} &...  &0&0\\
			\vdots&\vdots &\vdots &\ddots&\vdots&\vdots\\
			0 &0&0 &...&J_{P_D}^{\dagger} &0\\
			0 &0&0 &...&0 & 0
		\end{bmatrix}.
	\end{align}
	By using the Sherman--Morrison--Woodbury formula we can straightforwardly compute $J_{\omega}^\dagger$ and $J_{P_i}^{\dagger}$ as
	\begin{align*}
		J_{\omega}^\dagger &= \diag\left( [\omega_1,...,\omega_{D-1}]\right) - [\omega_1,...,\omega_{D-1}]^\top \cdot [\omega_1,...,\omega_{D-1}] \\
		J_{P_i}^{\dagger} &= \omega^{-1}_i \cdot \left( \diag([P_{i,1},...,P_{i,D-1}]) - [P_{i,1},...,P_{i,D-1}]^\top \cdot [P_{i,1},...,P_{i,D-1}] \right), \quad \text{ for all } i\in [D].
	\end{align*}
	
	\noindent\textbf{Step 2}: Next, we turn our attention to evaluating the gradient $\nabla h(\param)$. We do not need to compute $\nabla_R h(\param)$ since $J^{\dagger}_R = 0$. Letting $U:= \Psi \Omega \Psi^\top - \gamma \Psi \Omega P \Psi^\top$ and $U^{i,j} := \psi(i)\psi(i)^\top - \gamma \psi(i)\psi(j)^\top,~\text{for all}~i,j\in[D]$, we have
	\begin{align*}
		\frac{\partial h(\param)}{U_{j,k}} &= - U^{-1}e_je_k^\top U^{-1} \Psi \Omega r = - U^{-1}e_j e_k^\top h(\param),\\
		\frac{\partial U}{\partial \omega_i} & = \sum_{j \in [D]}\left(P_{i,j}U^{i,j}  -P_{D,j} U^{D,j}\right).
	\end{align*}
	Using the chain rule yields that
	\begin{align}\label{partial_1}
		\frac{\partial h(\param)}{\partial \omega_i} = U^{-1}\psi(i)r(i) - U^{-1}\psi(D)r(D) - U^{-1}\cdot\sum_{j\in [D]}\left(P_{i,j}U^{i,j}  - P_{D,j} U^{D,j} \right)h(\param)
	\end{align}
	for all $i\in [D-1]$.
	As for the gradient with respect to $P$, we have
	\begin{align*}
		\frac{\partial U}{\partial P_{i,j}} = \omega_i\cdot  U^{i,j} - \omega_i \cdot U^{i,D}.
	\end{align*}
	Using the chain rule once again, we obtain
	\begin{align}\label{partial_2}
		\frac{\partial h(\param)}{\partial P_{i,j}} =
		\omega_i\cdot U^{-1} (U^{i,D}-U^{i,j}) h(\param) + \omega_i \cdot U^{-1}\psi(i) \big(R(i,j)-R(i,D)\big).
	\end{align}
	for all $i\in [D]$ and $j \in [D-1]$.
	
	\noindent\textbf{Step 3}: Finally, we evaluate $\nabla h (\param)^\top  J_{\param}^\dagger~\nabla h(\param)$. From the block structure, we can write
	\begin{align}\label{combine_0}
		\nabla h (\param)^\top  J_{\param}^\dagger~\nabla h (\param) = \nabla_\omega h (\param)^\top  J_{\omega}^\dagger~\nabla_\omega h (\param) + \sum_{i\in[D]} \nabla_{P_i} h (\param)^\top  J_{P_i}^\dagger~\nabla_{P_i} h (\param).
	\end{align}
	Let $y(s,s'):= \psi(s)\psi(s)^\top h (\param) - \gamma \psi(s)\psi(s')^\top h (\param) - R(s,s') \psi(s)$.
	Combining the relations \eqref{J_dagger}, \eqref{partial_1} and \eqref{partial_2}, we have
	\begin{subequations}
		\begin{align}\label{combine_1}
			\nabla_\omega h (\param)^\top  J_{\omega}^\dagger&~\nabla_\omega h (\param)\nn\\ = &U^{-1} \bigg\{\sum_{i\in [D]} \omega_i \cdot \big(\bbe[y(i,s')| s'\sim \mathsf{P}(\cdot|i)]\big) \cdot \big( \bbe[y(i,s')| s'\sim \mathsf{P}(\cdot|i)]\big)^\top \nn\\
			&\quad- \bigg( \sum_{i \in [D]} \omega_i \cdot \bbe[y(i,s')| s'\sim \mathsf{P}(\cdot|i)]\bigg)\bigg( \sum_{i \in [D]} \omega_i \cdot \bbe[y(i,s')|s'\sim \mathsf{P}(\cdot|i)]\bigg)^\top\bigg\}\cdot U^{-\top},
		\end{align}
		and 
		\begin{align}\label{combine_2}
			\nabla_{P_i} h (\param)^\top  J_{P_i}^\dagger~\nabla_{P_i} h (\param)= \omega_i \cdot U^{-1} \bigg\{&\sum_{j\in [D]} P_{i,j}\cdot y(i,j) y(i,j)^\top\nn\\
			&-\big(\bbe[y(i,s')| s'\sim \mathsf{P}(\cdot|i)]\big)\big( \bbe[y(i,s')| s'\sim \mathsf{P}(\cdot|i)]\big)^\top\bigg\} \cdot U^{-\top}.
		\end{align}
	\end{subequations}
	Substituting Eqs.~\eqref{combine_1} and~\eqref{combine_2} into Eq.~\eqref{combine_0}, we obtain 
	\begin{align*}
		&\nabla h (\param)^\top  J_{\param}^\dagger~\nabla h (\param) \\
		&\quad = U^{-1}\bigg\{ \sum_{i\in [D]}\sum_{j\in[D]} \omega_i P_{i,j} \cdot y(i,j) y(i,j)^\top \\
		&\qquad \qquad \quad - \bigg( \sum_{i \in [D]} \omega_i \cdot \bbe[y(i,s')| s'\sim \mathsf{P}(\cdot|i)]\bigg)\bigg( \sum_{i \in [D]} \omega_i \cdot\bbe[y(i,s')|s'\sim \mathsf{P}(\cdot|i)]\bigg)^\top  \bigg\} \cdot U^{-\top}\\
		&\quad = U^{-1} \cdot  \cov_{s\sim \omega, s' \sim \mathsf{P}(\cdot|s)}\big\{ y(s,s') \big\} \cdot U^{-\top},
	\end{align*}
	which completes the proof.
	\qed

	\subsection{Proof of Lemma~\ref{bound_tilde_v}}\label{proof_bound_tilde_v}
	By the definition of $g(\cdot)$ , we have
	\begin{align*}
		\Psi \Pi \Psi^\top(\bar \theta - \underline \theta) &= \Psi\Pi r + \gamma \Psi \Pi P \Psi^\top \bar \theta - \big(\Psi \Pi r+ \gamma \Psi \Pi P \Psi^\top \underline \theta+g(\widetilde \theta)-\widehat g(\widetilde \theta)\big)\\
		&= \gamma \Psi \Pi P \Psi^\top(\bar \theta- \underline \theta) + \big(\widehat g(\widetilde \theta) - g(\widetilde\theta)\big).
	\end{align*}
	Invoking the fact that $\Phi = B^{-\frac{1}{2}}\Psi$ and $\Phi \Pi \Phi^\top = I_d$, we obtain
	\begin{align}\label{bound_0}
		B^{\frac{1}{2}}(\bar\theta-\underline \theta) = (\Phi\Pi\Phi^\top-\gamma \Phi\Pi P \Phi^\top)^{-1}B^{-\frac{1}{2}}\big(\widehat g(\widetilde\theta) - g(\widetilde\theta)\big)=(I_d-M)^{-1}B^{-\frac{1}{2}}\big(\widehat g(\widetilde\theta) - g(\widetilde\theta)\big).
	\end{align}
	Therefore, we have that
	\begin{align}\label{bound_0_1}
		\bbe\|\underline v - \vbar\|_\Pi^2 &=\bbe\|\Psi^\top B^{-\frac{1}{2}}(I_d-M)^{-1}B^{-\frac{1}{2}}\big(\widehat g(\widetilde\theta) - g(\widetilde\theta)\big)\|_\Pi^2\nn\\
		&=\bbe\|(I_d-M)^{-1}B^{-\frac{1}{2}}\big(\widehat g(\widetilde\theta) - g(\widetilde\theta)\big)\|_2^2.
	\end{align}
	Applying Young's inequality then yields
	\begin{align*}
		\bbe\|\underline v - \vbar\|_\Pi^2
		& \leq  2\bbe\|(I_d-M)^{-1} B^{-\frac{1}{2}}\big( \widehat g(\bar \theta) - g(\bar \theta) \big)\|_2^2\\
		&\qquad\quad+ 2 \bbe\|(I_d-M)^{-1} B^{-\frac{1}{2}}\big( \widehat g(\widetilde \theta) - \widehat g(\bar \theta)  - g(\widetilde \theta)+ g(\bar \theta) \big)\|_2^2\\
		& \overset{(i)}\leq \tfrac{2}{N_k} \cdot \bbe\|(I_d-M)^{-1}B^{-\frac{1}{2}} \big(  \widetilde g(\bar \theta,\xi_1) - g(\bar \theta) \big)\|_2^2 \\
		&\qquad \quad + \tfrac{2}{N_k}\cdot \bbe\|(I_d-M)^{-1} B^{-\frac{1}{2}}\big( \widetilde g(\widetilde \theta,\xi_1) - \widetilde g(\bar \theta,\xi_1)  - g(\widetilde \theta)+ g(\bar \theta) \big)\|_2^2\\
		&\overset{(ii)}\leq \tfrac{2}{N_k}\cdot \trace\left((I_d-M)^{-1}\iidS(I_d-M)^{-\top}\right)\\
		&\qquad\quad+ \tfrac{2 }{N_k(1-\gamma)^2\mu}\cdot \bbe\|\widetilde g(\widetilde \theta,\xi_1) - \widetilde g(\bar \theta,\xi_1)  - g(\widetilde \theta)+ g(\bar \theta) \|_\Pi^2\\
		&\overset{(ii)}\leq \tfrac{2}{N_k}\cdot \trace\left((I_d-M)^{-1}\iidS(I_d-M)^{-\top}\right) + \tfrac{2\varsigma^2 }{N_k(1-\gamma)^2\mu}\cdot \bbe\|\widetilde v - \vbar\|_\Pi^2,
	\end{align*}
	where step (i) follows from the i.i.d. property of the samples, step (ii) follows from Lemma \ref{lemma_inverse}, and step (iii) follows from Assumption~\ref{assump_variance}.\qed 
	
	\subsection{Proof of Lemma~\ref{stoch_opt_variance_1}}\label{proof_stoch_opt_variance_1}
	
	We index samples $\xi_t = (s_t,s_{t+1},R(s_t,s_{t+1}))$ successively as in Section~\ref{sec_obs_model}. We have 
	\begin{align*}
		&\bbe[\|\widetilde g(\theta,\xi_{t+\tau}) - \widetilde g(\theta',\xi_{t+\tau})-g(\theta)+ g(\theta')\|^2_2|\mathcal{F}_t]\\
		& =\sum_{i\in [D]} \mathbb{P}(s_{t+\tau}=i|s_t) \cdot \bbe[\|\langle \psi(i)- \gamma \psi(s_{t+\tau+1}), \theta-\theta' \rangle  \psi(i)-g(\theta)+ g(\theta')\|^2_2|s_{t+\tau}=i]\\
		&\leq\sum_{i\in [D]} \pi_i\cdot \bbe\left[\|\langle \psi(i)- \gamma \psi(s_{t+\tau+1}), \theta-\theta' \rangle  \psi(i)-g(\theta)+ g(\theta')\|^2_2|s_{t+\tau}=i\right]\\
		&\quad+\sum_{i\in [D]} \|\mathbb{P}(s_{t+\tau}=\cdot|s_t) - \pi\|_\infty \bbe\left[\|\langle \psi(i)- \gamma \psi(s_{t+\tau+1}), \theta-\theta' \rangle  \psi(i)-g(\theta)+ g(\theta')\|^2_2|s_{t+\tau}=i\right]	\\
		& \overset{(i)}\leq\sum_{i\in [D]} 2\pi_i \bbe\left[\|\langle \psi(i)- \gamma \psi(s_{t+\tau+1}), \theta-\theta' \rangle  \psi(i)-g(\theta)+ g(\theta')\|^2_2|s_{t+\tau}=i\right]\\
		& \overset{(ii)}\leq2\varsigma^2\|v-v'\|_\Pi^2	,
	\end{align*}
	where step (i) follows from Assumption~\ref{assump_rho_0} and condition~\eqref{cond_tau_1}, and step (ii) from Assumption~\ref{assump_variance}. 
	\qed
	
	\black{\subsection{Proof of Lemma~\ref{stoch_opt_variance_2}}\label{proof_stoch_opt_variance_2}
		By the definition of $\widehat g$, we write
		\begin{align}\label{error_decomp}
			\bbe\|\widehat g(\theta) &- \widehat g(\theta')- g(\theta) + g(\theta')\|^2_2\nn\\ &= \tfrac{1}{(N_k-n_0)^2}\bbe\|\tsum_{i=n_0+1}^{N_k}\big(\Delta^k_i(\theta) - \Delta^k_i(\theta')\big)\|_2^2\nn\\
			& = \tfrac{1}{(N_k-n_0)^2}\sum_{i=n_0+1}^{N_k}\bbe\|\Delta^k_i(\theta) - \Delta^k_i(\theta')\|_2^2 \nn\\
			&\qquad+ \tfrac{1}{(N_k-n_0)^2} \sum_{n_0+1\leq i <j\leq N_k}2 \bbe \langle \Delta^k_i(\theta) - \Delta^k_i(\theta'), \Delta^k_j(\theta) - \Delta^k_j(\theta') \rangle\nn\\
			&= \underbrace{\tfrac{1}{(N_k-n_0)^2}\sum_{i=n_0+1}^{N_k}\bbe\|\Delta^k_i(\theta) - \Delta^k_i(\theta')\|_2^2}_{R_1}\nn\\
			&\qquad+ \underbrace{\tfrac{1}{(N_k-n_0)^2} \cdot\sum_{\substack{n_0+1\leq i <j\leq N_k\\j-i\leq \tau}}2 \bbe \langle \Delta^k_i(\theta) - \Delta^k_i(\theta'), \Delta^k_j(\theta) - \Delta^k_j(\theta') \rangle}_{R_2}\nn\\
			&\qquad + \underbrace{\tfrac{1}{(N_k-n_0)^2} \cdot\sum_{\substack{n_0+1\leq i <j\leq N_k\\j-i> \tau}}2 \bbe \langle \Delta^k_i(\theta) - \Delta^k_i(\theta'), \Delta^k_j(\theta) - \Delta^k_j(\theta') \rangle}_{R_3}.
		\end{align}
		where $\Delta^k_i(\theta) = \widetilde g(\theta, \xi_i^k) - g(\theta)$. We bound $R_1$, $R_2$ and $R_3$ separately. First, by Lemma SM1.14 and the fact that $n_0$ satisfies condition \eqref{cond_tau_1}, we have that
		\begin{align*}
			R_1 \leq \tfrac{2\varsigma^2}{N_k-n_0}\|v-v'\|_\Pi^2.
		\end{align*}
		For $R_2$, by utilizing Holder's inequality, we obtain
		\begin{align*}
			R_2 &\leq \tfrac{1}{(N_k-n_0)^2} \cdot\sum_{\substack{n_0+1\leq i <j\leq N_k\\j-i\leq \tau}}2 \sqrt{\bbe  \|\Delta^k_i(\theta) - \Delta^k_i(\theta')\|_2^2} \sqrt{\|\Delta^k_j(\theta) - \Delta^k_j(\theta') \|_2^2}\\
			&\overset{(i)}\leq \tfrac{4\varsigma^2\tau}{N_k-n_0}\|v-v'\|_\Pi^2,
		\end{align*}
		where step (i) follows from Lemma SM1.14. To bound $R_3$, we first bound the following term
		\begin{align*}
			\bbe \langle \Delta^k_i(\theta) - \Delta^k_i(\theta'), \Delta^k_j(\theta) - \Delta^k_j(\theta') \rangle& = \bbe \langle \Delta^k_i(\theta) - \Delta^k_i(\theta'), \bbe[\Delta^k_j(\theta) - \Delta^k_j(\theta')|\calF] \rangle\\
			&\overset{(i)}\leq \mixcon \rho^{j-i}\bbe[\|\Delta^k_i(\theta) - \Delta^k_i(\theta')\|_2\|v-v'\|_\Pi]\\
			&\overset{(ii)} \leq \mixcon \rho^{j-i}\bbe[\tfrac{\varsigma}{2}\|v-v'\|_\Pi^2+ \tfrac{1}{2\varsigma}\|\Delta^k_i(\theta) - \Delta^k_i(\theta')\|_2^2]\\
			&\overset{(iii)}\leq \tfrac{3\varsigma}{2}\mixcon \rho^{j-i}\|v-v'\|_\Pi^2.
		\end{align*}
		where step (i) follows from Lemma~\ref{lemma_operator_bias_2}, step (ii) follows from Young's inequality, and step (iii) follows from Lemma SM1.14. Then we can bound $R_3$ as
		\begin{align*}
			R_3 \leq \tfrac{N_k-n_0-\tau}{(N_k-n_0)^2} \cdot \tfrac{3\varsigma\mixcon \rho^\tau}{1-\rho}\|v-v'\|_\Pi^2\overset{(i)}\leq \tfrac{2\varsigma^2}{N_k-n_0}\|v-v'\|_\Pi^2,
		\end{align*}
		where step (i) follows from that $\tau$ satisfies \eqref{cond_tau_0}. Substituting the upper bounds on $R_1, R_2$ and $R_3$ into Ineq.~\eqref{error_decomp} yields the desired result. 
	}
	\subsection{Proof of Lemma~\ref{stoch_opt_variance_3}}\label{proof_stoch_opt_variance_3}
	\black{We have that
		\begin{align*}
			&\|\bbe[\widetilde g_t(\theta) |\mathcal{F}_{t-1} ]- \bbe[\widetilde g_t(\theta') |\mathcal{F}_{t-1}]- g(\theta) + g(\theta')\|_2 \\
			&\qquad\qquad\overset{(i)}\leq\tfrac{1}{m-m_0}\tsum_{j=m_0+1}^m \|\bbe[\widetilde g(\theta,\widehat \xi_j^t) |\mathcal{F}_{t-1}]- \bbe[\widetilde g(\theta',\widehat\xi_j^t)) |\mathcal{F}_{t-1}]- g(\theta) + g(\theta')\|_2\\
			&\qquad \qquad \overset{(ii)}\leq \tfrac{1}{m-m_0}\tsum_{j=m_0+1}^m\mixcon \rho^{j}\|v-v'\|_\Pi\leq \tfrac{\mixcon \rho^{m_0}}{(1-\rho)(m-m_0)}\|v-v'\|_\Pi,
		\end{align*}
		where step (i) follows from triangle inequality and step (ii) follows from Lemma~\ref{lemma_operator_bias_2}. \qed}
	
	\subsection{Proof of Lemma~\ref{bound_tilde_v_Mark}}\label{proof_bound_tilde_v_Mark}
	\textcolor{black}{
		First, by the derivation of our original paper, we have that 
		\begin{align}\label{overall}
			\bbe\|\underline v - \vbar\|_\Pi^2 
			&=\bbe\|(I_d-M)^{-1}B^{-\frac{1}{2}}\big(\widehat g(\widetilde\theta) - g(\widetilde\theta)\big)\|_2^2\nn\\
			& \leq  2 \underbrace{\bbe\|(I_d-M)^{-1} B^{-\frac{1}{2}}\big( \widehat g(\bar \theta) - g(\bar \theta) \big)\|_2^2}_{R_1} \nn\\
			&\quad+ 2 \underbrace{\bbe\|(I_d-M)^{-1} B^{-\frac{1}{2}}\big( \widehat g(\widetilde \theta) - \widehat g(\bar \theta)  - g(\widetilde \theta)+ g(\bar \theta) \big)\|_2^2}_{R_2}.
		\end{align}
		We upper bound the two terms on the RHS of the above inequality separately. We first define a sequence of samples obtained from a stationary Markov trajectory, namely $\widetilde \xi_t = (\widetilde s_t, \widetilde s_{t+1}, R(\widetilde s_t,\widetilde s_{t+1}))$, where $\widetilde s_t \sim \pi$. We write
		\begin{align}\label{bound_R1_0}
			R_1 &= \tsum_{j\in [D]} \mathbb{P}(s_{n_0+1}=j )\cdot \bbe_{s_{n_0+1}=j }\|(I_d-M)^{-1} B^{-\frac{1}{2}}\big(\widehat g(\bar\theta) - g(\bar\theta)\big)\|_2^2\nn\\
			& \leq  \tsum_{j\in [D]} \pi_j\cdot \bbe_{s_{n_0+1}=j }\|(I_d-M)^{-1} B^{-\frac{1}{2}}\big(\widehat g(\bar\theta) - g(\bar\theta)\big)\|_2^2\nn\\
			&\qquad + \|\mathbb{P}(s_{n_0+1}=\cdot ) - \pi\|_\infty \cdot \bbe_{s_{n_0+1}=j }\|(I_d-M)^{-1} B^{-\frac{1}{2}}\big(\widehat g(\bar\theta) - g(\bar\theta)\big)\|_2^2\nn\\
			&\overset{(i)} \leq 2 \bbe\|(I_d-M)^{-1}B^{-\frac{1}{2}} \tfrac{1}{N_k-n_0}\tsum_{i=n_0+1}^{N_k}\big( \widetilde g(\bar \theta, \widetilde \xi_i) - g(\bar \theta)\big)\|_2^2\nn\\
			& = \tfrac{2}{(N_k-n_0)^2}\left[(N_k-n_0)\mu_0 + \tsum_{i=1}^{N_k-n_0-1}2(N_k-n_0-i) \mu_i \right],
		\end{align}
		where step (i) follows from condition~\eqref{cond_tau_1} and the fact that $\widetilde \xi_i$ is a stationary chain, and 
		$$\mu_i:= \bbe\left\langle (I_d-M)^{-1}B^{-\frac{1}{2}}\big(\widetilde g(\bar \theta, \widetilde \xi_0)-g(\bar \theta)\big), (I_d-M)^{-1}B^{-\frac{1}{2}}\big(\widetilde g(\bar \theta, \widetilde \xi_i)-g(\bar \theta)\big) \right\rangle.$$ 
		Recall the definition of the Markovian noise stochastic error term
		$$
		\MarkovS:= \sum_{t=-\infty}^{\infty} B^{-\frac{1}{2}} \bbe\left[\big(\widetilde g(\bar \theta, \widetilde \xi_t)-g(\bar \theta)\big)\big(\widetilde g(\bar \theta, \widetilde \xi_0)-g(\bar \theta)\big)^\top\right] B^{-\frac{1}{2}}.
		$$
		Then we have 
		$
		\trace\big\{(I_d-M)^{-1}\MarkovS(I_d-M)^{-\top}\big\} = \mu_0 + 2 \tsum_{i=1}^\infty \mu_i.
		$
		Now let us discuss the two possible cases: \\
		\textbf{Case 1 ($v^* = \bar v$)}: 
		Let $\widetilde{\mathcal{F}_i}$ denote the $\sigma$-field generated by samples $\widetilde \xi_0, ..., \widetilde \xi_i$ and let $\widetilde \Pi_j^i:= \diag\{[\mathbb{P}(\widetilde s_j=1|\widetilde s_i),...,\mathbb{P}(\widetilde s_j=D|\widetilde s_i)]\}$ for $j\geq i$. Then we have
		\begin{align*}
			\mu_i &= \bbe\left\langle (I_d-M)^{-1}B^{-\frac{1}{2}}\big(\widetilde g(\bar \theta, \widetilde \xi_0)-g(\bar \theta)\big), (I_d-M)^{-1}B^{-\frac{1}{2}}\big(\bbe[\widetilde g(\bar \theta, \widetilde \xi_i)|\widetilde{\mathcal{F}}_0]-g(\bar \theta)\big) \right\rangle\nn\\
			& = \bbe\left\langle (I_d-M)^{-1}B^{-\frac{1}{2}}\big(\widetilde g(\bar \theta, \widetilde \xi_0)-g(\bar \theta)\big), (I_d-M)^{-1}B^{-\frac{1}{2}}\Psi (\widetilde \Pi_j^i - \Pi) (\Psi^\top \bar \theta - \gamma P\Psi^\top \bar \theta - r) \right\rangle \nn\\
			& \overset{(i)}= 0,
		\end{align*}
		where step (i) follows from that $v^* = \bar v = \Psi^\top \bar \theta$ and the Bellman equation \eqref{bellman}. As a result, we can conclude that 
		\begin{align}\label{bound_R1_0}
			R_1 \leq \tfrac{2\mu_0}{N_k-n_0} = \tfrac{2\cdot\trace\big\{(I_d-M)^{-1}\MarkovS(I_d-M)^{-\top}\big\}}{N_k-n_0} = \tfrac{2\cdot\trace\big\{(I_d-M)^{-1}\iidS(I_d-M)^{-\top}\big\}}{N_k-n_0}.
	\end{align}}
	\black{
		\textbf{Case 2 ($\bar v \neq v^*$)}: By Holder's inequality, we have that $|\mu_i| \leq |\mu_0|= \trace\big\{(I_d-M)^{-1}\iidS(I_d-M)^{-\top} \big\}$. On the other hand, for $i\geq 1$, we can obtain another bound for $\mu_i$ as
		\begin{align}\label{bound_mu_1}
			|\mu_i| &= |\bbe\langle (I_d-M)^{-1}B^{-\frac{1}{2}}\big(\widetilde g(\bar \theta, \widetilde \xi_0)-g(\bar \theta)\big), (I_d-M)^{-1}B^{-\frac{1}{2}}\bbe[g(\bar \theta, \widetilde \xi_i)-g(\bar \theta)|\calF_0] \rangle|\nn\\
			&\leq \mixcon \rho^{i}\tfrac{1}{(1-\gamma)\sqrt{\mu}}\bbe[\|(I_d-M)^{-1}B^{-\frac{1}{2}}\big(\widetilde g(\bar \theta, \widetilde \xi_0)-g(\bar \theta)\big)\|_2\|\bar v - v^*\|_\Pi]\nn\\
			&\leq \mixcon \rho^{i} W,
		\end{align}
		where $W := \tfrac{1}{2}\trace\big\{(I_d-M)^{-1}\iidS(I_d-M)^{-\top} \big\} + \tfrac{1}{2 (1-\gamma)^2\mu}\|\bar v - v^*\|_\Pi^2$.
		From Ineq.~\eqref{bound_R1_0}, we have
		\begin{align}\label{bound_R1_2}
			R_1 
			&\leq \tfrac{2\big[(N_k-n_0)\trace\big\{(I_d-M)^{-1}\MarkovS(I_d-M)^{-\top}\big\} - \tsum_{i=1}^{N_k-n_0-1}i\mu_i - 2(N_k-n_0)\tsum_{i=N_k-n_0}^\infty\mu_i  \big]}{(N_k-n_0)^2} \nn\\
			&\leq \tfrac{2\big[(N_k-n_0)\trace\big\{(I_d-M)^{-1}\MarkovS(I_d-M)^{-\top}\big\} + \overbrace{|\tsum_{i=1}^{N_k-n_0-1}i\mu_i|}^{I_1} + 2(N_k-n_0)\overbrace{|\tsum_{i=N_k-n_0}^\infty\mu_i|}^{I_2}  \big]}{(N_k-n_0)^2} .
		\end{align}
		It remains to bound $I_1$ and $I_2$. We write
		\begin{align*}
			I_1 & \leq \tsum_{i=1}^\tau \tau |\mu_i| + \tsum_{i=\tau+1}^\infty i |\mu_i|\\
			&\leq \tau^2 \trace\big\{(I_d-M)^{-1}\iidS(I_d-M)^{-\top} \big\} + \mixcon W\tsum_{i=\tau+1}^\infty i\cdot \rho^i\\
			&\leq \tau^2 \trace\big\{(I_d-M)^{-1}\iidS(I_d-M)^{-\top} \big\} + \tfrac{\mixcon  [(\tau+1)\rho^{\tau+1} + \rho^{\tau+2}/(1-\rho)]}{1-\rho}\cdot W.
		\end{align*}
		For $I_2$, we have 
		\begin{align*}
			I_2 \leq \tsum_{i=N_k-n_0}^\infty|\mu_i| \leq \tfrac{\mixcon\rho^{N_k-n_0}}{1-\rho}W.
		\end{align*}
		Substituting the upper bounds on $I_1$ and $I_2$ and invoking that $\rho^{\tau} \leq  \frac{2(1-\rho)^2}{5 \mixcon}$ and $\rho^{N_k-n_0} \leq \tfrac{\tau(1-\rho)}{5\mixcon (N_k-n_0)}$, we obtain
		\begin{align*}
			R_1 &\leq \tfrac{2\cdot \trace\big\{(I_d-M)^{-1}\MarkovS(I_d-M)^{-\top}\big\}}{N_k-n_0}  + \tfrac{2\tau^2\trace\big\{(I_d-M)^{-1}\iidS(I_d-M)^{-\top} \big\}}{(N_k-n_0)^2} + \tfrac{4(\tau+1)}{5(N_k-n_0)^2} W.
	\end{align*}}
	
	\black{
		On the other hand, by Lemma~\ref{bound_tilde_v_Mark}, we can bound $R_2$ as 
		\begin{align*}
			R_2 \leq \tfrac{4(\tau+1)\varsigma^2}{(1-\gamma)^2\mu(N_k-n_0)}\|\widetilde v -\bar v\|_\Pi^2.
		\end{align*}
		Substituting the bounds on $R_1$ and $R_2$ into Ineq.~\eqref{overall} yields the desired results for both cases.}

\end{document}